\definecolor{mydarkblue}{rgb}{0,0.08,0.45}
\def\1{\bm{1}}
\def\eps{{\epsilon}}
\def\vmu{{\bm{\mu}}}
\def\vtheta{{\bm{\theta}}}
\def\vphi{{\bm{\varphi}}}
\def\vpi{{\bm{\pi}}}
\def\vomega{{\bm{\omega}}}
\def\valpha{{\bm{\alpha}}}
\def\va{{\bm{a}}}
\def\vb{{\bm{b}}}
\def\vc{{\bm{c}}}
\def\vd{{\bm{d}}}
\def\vg{{\bm{g}}}
\def\vm{{\bm{m}}}
\def\vv{{\bm{v}}}
\def\mA{{\bm{A}}}
\DeclareMathAlphabet{\mathsfit}{\encodingdefault}{\sfdefault}{m}{sl}
\SetMathAlphabet{\mathsfit}{bold}{\encodingdefault}{\sfdefault}{bx}{n}
\newcommand{\E}{\mathbb{E}}
\newcommand{\R}{\mathbb{R}}
\DeclareMathOperator*{\argmin}{arg\,min}
\DeclareMathOperator{\geom}{Geom}
\newlength\myindent
\definecolor{mygreen}{rgb}{0.032, 0.6392, 0.2039}
\definecolor{mypurple}{HTML}{B266FF}
\def\RR{{\mathbb R}}
\def\NN{{\mathbb N}}
\def\uu{{\boldsymbol u}}
\def\vv{{\boldsymbol v}}
\def\LL{{\mathcal L}}
\def\defas{:=}
\newtheorem{assumption}{Assumption}
\newtheorem{theorem}{Theorem}
\newtheorem{lemma}{Lemma}
\newtheorem{proposition}{Proposition}
\newtheorem{definition}{Definition}
\newtheorem*{rep@theorem}{\rep@title}
\newcommand{\newreptheorem}[2]{
\newenvironment{rep#1}[1]{
 \def\rep@title{#2 \ref{##1}}
 \begin{rep@theorem}}
 {\end{rep@theorem}}}
\newtheorem*{rep@proposition}{\rep@title}
\newcommand{\newrepproposition}[2]{
\newenvironment{rep#1}[1]{
 \def\rep@title{#2 \ref{##1}}
 \begin{rep@proposition}}
 {\end{rep@proposition}}}
\newcommand{\ras}[1]{\renewcommand{\arraystretch}{#1}}  \newfloatcommand{capbtabbox}{table}[][\FBwidth]
\newcounter{nbdrafts}
\newcommand{\checknbdrafts}{
\ifnum \thenbdrafts > 0
\@latex@warning@no@line{**********************************************************************}
\@latex@warning@no@line{* The document contains \thenbdrafts \space draft note(s)}
\@latex@warning@no@line{**********************************************************************}
\fi}
\definecolor{ffcolor}{rgb}{0.0,0.4,0.5}
\title{Reducing Noise in GAN Training with Variance Reduced Extragradient}
\author{  Tatjana Chavdarova\thanks{equal contribution} \\    Mila, Universit\'e de Montr\'eal\\
  Idiap, École Polytechnique Fédérale de Lausanne
  \And
  Gauthier Gidel\textsuperscript{$*$} \\
  Mila, Universit\'e de Montr\'eal \\
  Element AI 
  \And 
  François Fleuret\\
  Idiap, École Polytechnique Fédérale de Lausanne
  \And
  Simon Lacoste-Julien\thanks{Canada CIFAR AI Chair} \\
  Mila, Universit\'e de Montr\'eal
}
\begin{document}

\maketitle

\begin{abstract}
We study the effect of the stochastic gradient noise on the training of generative adversarial networks (GANs) and show that it can prevent the convergence of standard game optimization methods, while the batch version converges. We address this issue with a novel stochastic variance-reduced extragradient (SVRE) optimization algorithm, which for a large class of games improves upon the previous convergence rates proposed in the literature. We observe empirically that SVRE performs similarly to a batch method on MNIST while being computationally cheaper, and that SVRE yields more stable GAN training on standard datasets.
\end{abstract}

\section{Introduction} 
Many empirical risk minimization algorithms rely on gradient-based optimization methods. These iterative methods handle large-scale training datasets by computing gradient estimates on a subset of it, a \textit{mini-batch}, instead of using all the samples at each step, the \textit{full batch}, resulting in a method called \emph{stochastic gradient descent} (SGD,~\citet{robbins1951stochastic, bottou2010SGD}).

SGD methods are known to efficiently minimize \textit{single} objective loss functions, such as cross-entropy for classification or squared loss for regression. Some algorithms go beyond such training objective and define multiple agents with different or competing objectives. The associated optimization paradigm requires a multi-objective joint minimization. An example of such a class of algorithms are the generative adversarial networks~(GANs, \citealp{goodfellow2014generative}), which aim at finding a Nash equilibrium of a two-player \textit{minimax} game, where the players are deep neural networks (DNNs).

As of their success on supervised tasks, SGD based algorithms have been adopted for GAN training as well. Recently, \citet{gidel2019variational}~proposed to use an optimization technique coming from the variational inequality literature called \emph{extragradient}~\citep{korpelevich1976extragradient} with provable convergence guarantees to optimize games (see \S~\ref{sec:gans_as_a_game}).
However, convergence failures, poor performance (sometimes referred to as ``mode collapse''), or hyperparameter susceptibility are more commonly reported compared to classical supervised DNN optimization.

We question naive adoption of such methods for game optimization so as to address the reported training instabilities. We argue that as of the two player setting, noise impedes drastically more the training compared to single objective one.
More precisely, we point out that the noise due to the stochasticity may break the convergence of the extragradient method, by considering a simplistic stochastic bilinear game for which it provably does \textit{not} converge.

The theoretical aspect we present in this paper is further supported empirically, since using larger mini-batch sizes for GAN training has been shown to considerably improve the quality of the samples produced by the resulting generative model: \citet{brock2018large} report a relative improvement of $46\%$ of the Inception Score metric (see \S~\ref{par:metrics}) on ImageNet if the batch size is increased $8$--fold. This notable improvement raises the question if noise reduction optimization methods can be extended to game settings.  In turn, this would allow for a principled training method with the practical benefit of omitting to empirically establish this multiplicative factor for the batch size.

In this paper, we investigate the interplay between noise and multi-objective problems in the context of GAN training. Our contributions can be summarized as follows:
\begin{enumerate*}[series = tobecont, itemjoin = \quad, label=(\roman*)]
\item we show in a motivating example how the noise can make stochastic extragradient fail (see \S~\ref{sub:stochasticity_breaks_extragradient_}).
\item we propose a new method ``stochastic variance reduced extragradient'' (SVRE) that combines variance reduction and extrapolation (see Alg.~\ref{alg:svre} and \S~\ref{sub:extra_svrg}) and show experimentally that it effectively reduces the noise. \item we prove the convergence of SVRE under local strong convexity assumptions, improving over the known rates of competitive methods for a large class of games (see \S~\ref{sub:extra_svrg} for our convergence result and Table~\ref{tab:rates} for comparison with standard methods).
\item we test SVRE empirically to train GANs on several standard datasets, and observe that it can improve SOTA deep models in the late stage of their optimization (see \S~\ref{sec:experiments}).
\end{enumerate*}

\begin{figure}
\begin{minipage}{\linewidth}
\begin{minipage}[t]{.47\linewidth}
\begin{table}[H]
\begin{tabular}{lc@{}c@{}c@{}c}
          \toprule 
           \hspace{-1mm}Method &  \!\!Complexity & \!\!\!\!$\mu$-adaptivity\\
                    \midrule
          \hspace{-1mm}SVRG &  $\ln(\frac{1}{\epsilon}) {\times} (n +  \frac{\bar L^2}{\mu^2})$   & no \\
          \hspace{-1mm}Acc. SVRG 
          &  $\ln(\frac{1}{\epsilon}) {\times} (n + \sqrt{n}\frac{\bar L}{\mu})$  & no  \\
                    \hspace{-1mm}\textbf{SVRE}~\S\ref{sub:extra_svrg}  & $\ln(\frac{1}{\epsilon}) {\times} (n + \frac{\bar \ell}{\mu})$ &  {\small if $\bar \ell = O(\bar L)$} \\           \bottomrule
        \end{tabular}
        \vspace{-2mm}
        \caption{  \small
        Comparison of variance reduced methods for games for a $\mu$-strongly monotone operator with $L_i$-Lipschitz stochastic operators. Our result makes the assumption that the operators are $\ell_i$-cocoercive. Note that $\ell_i \in [L_i,L_i^2/\mu]$, more details and a tighter rate are provided in \S\ref{sub:extra_svrg}.
                                                        The SVRG variants are proposed by~\citet{palaniappan2016stochastic}.
        \emph{$\mu$-adaptivity} indicates if the hyper-parameters that guarantee convergence (step size \& epoch length) depend on the strong monotonicity parameter $\mu$: if not, the algorithm is adaptive to local strong monotonicity. Note that in some cases the constant $\ell$ may depend on $\mu$ but SVRE is adaptive to strong convexity when $\bar \ell$ remains close to $\bar L$ (see for instance Proposition~\ref{prop:coco}).
                                 }
      \label{tab:rates}
      \end{table}
\end{minipage}
\scalebox{.93}{
\begin{minipage}[t]{.55\linewidth}
\begin{algorithm}[H]
\begin{algorithmic}[1]
   \STATE {\bfseries Input:} 
               Stopping time $T$,
              learning rates $\eta_\vtheta, \eta_\vphi$, initial weights $\vtheta_0$, $\vphi_0$. $t=0$
      \WHILE{$t \leq T$}
                    \STATE $\vphi^{\mathcal{S}} = \vphi_t \,$ and $\,\vmu_{\vphi}^\mathcal{S} = \frac{1}{n} \sum_{i=1}^{n} \nabla_{\vphi}  \LL^{D}_i(\vtheta^{\mathcal{S}},\vphi^{\mathcal{S}})$   
        \STATE $\vtheta^{\mathcal{S}} = \vtheta_t\,$ and $\,\vmu_{\vtheta}^\mathcal{S} = \frac{1}{n} \sum_{i=1}^{n} \nabla_{\vtheta}  \LL^{G}_i(  \vtheta^{\mathcal{S}},\vphi^{\mathcal{S}})$
        \STATE $N \sim \geom\big(1/n\big)$ \hfill \emph{(Sample epoch length)}   
        \FOR[\emph{Beginning of the epoch}]{$i=0$ {\bfseries to} $N{-}1$} \label{l:epoch}
            \STATE \textbf{Sample} $i_\vtheta, i_\vphi\sim \pi_\vtheta,\pi_\vphi$, do \textbf{extrapolation:}
            \STATE $\tilde \vphi_{t} = \vphi_t - \eta_\vphi \vd_{i_{\vphi}}^D(\vtheta_t,\vphi_t,\vtheta^{\mathcal{S}},\vphi^{\mathcal{S}})$   
            \hfill $\triangleright$~\eqref{eq:theta_svrg_dir}
            \STATE $\tilde \vtheta_{t} = \vtheta_t - \eta_\vtheta \vd_{i_\vtheta}^{G}(\vtheta_t,\vphi_t,\vtheta^{\mathcal{S}},\vphi^{\mathcal{S}})$ 
            \hfill $\triangleright$~\eqref{eq:theta_svrg_dir}
            \STATE \textbf{Sample} $i_\vtheta, i_\vphi\sim \pi_\vtheta,\pi_\vphi$ and do \textbf{update:}
            \STATE $\vphi_{t+1} = \vphi_t - \eta_\vphi \vd_{i_{\vphi}}^D(\tilde \vtheta_{t},\tilde \vphi_{t},\vtheta^{\mathcal{S}},\vphi^{\mathcal{S}})$      
            \hfill $\triangleright$~\eqref{eq:theta_svrg_dir}
            \STATE $\vtheta_{t+1} = \vtheta_t - \eta_\vtheta \vd_{i_{\vtheta}}^G(\tilde \vtheta_{t}, \tilde \vphi_{t},\vtheta^{\mathcal{S}},\vphi^{\mathcal{S}})$  
            \hfill $\triangleright$~\eqref{eq:theta_svrg_dir}
            \STATE $t \leftarrow t+1$
           \ENDFOR
   \ENDWHILE   
   \STATE {\bfseries Output:} $\vtheta_T$, $\vphi_T$   
\end{algorithmic}
   \caption{Pseudocode for SVRE.}
   \label{alg:svre}
\end{algorithm}
\end{minipage}
}
\end{minipage}
\end{figure}

\section{GANs as a Game and Noise in Games} 
\label{sec:gans_as_a_game}

\subsection{Game theory formulation of GANs}
The models in a GAN are a generator $G$, that maps an embedding space to the signal space, and should eventually map a fixed noise distribution to the training data distribution, and a discriminator $D$ whose purpose is to allow the training of the generator by classifying genuine samples against generated ones. At each iteration of the algorithm, the discriminator $D$ is updated to improve its ``real vs. generated'' classification performance, and the generator $G$ to degrade it.

From a game theory point of view, GAN training is a differentiable two-player game where the generator $G_{\vtheta}$ and the discriminator $D_{\vphi}$ aim at minimizing  their own cost function
$\LL^{G}$ and $\LL^{D}$, resp.:
\begin{equation} \label{eq:two_player_games}
\tag{2P-G}
  \vtheta^* \in \argmin_{\vtheta \in \Theta}\LL^{G}(\vtheta,\bm{\vphi}^*) 
  \qquad \text{and}\qquad
  \bm{\vphi}^* \in \argmin_{\vphi \in \Phi} \LL^{D}(\vtheta^*,\vphi) \,.  
\end{equation} 
When $\LL^{D} = - \LL^{G}=: \LL$ this game is called a \emph{zero-sum game} and~\eqref{eq:two_player_games} is a minimax problem:
\vspace{-1mm}
\begin{equation}\label{eq:zero_sum_game}
\tag{SP}
  \min_{\vtheta \in \Theta} \max_{\vphi \in\Phi} \, \LL(\vtheta,\vphi)
\end{equation}
The gradient method does not converge for some convex-concave examples~\citep{mescheder_numerics_2017,gidel2019variational}. To address this,~\citet{korpelevich1976extragradient} proposed to use the \emph{extragradient} method\footnote{For simplicity, we focus on \emph{unconstrained} setting where $\Theta = \R^d$. For the \emph{constrained} case, a Euclidean projection on the constraints set should be added at every update of the method.
} which performs a lookahead step in order to get signal from an \textit{extrapolated} point:
\begin{equation}
\label{eq:extragradient}
\tag{EG}
  \text{Extrapolation:} 
  \left\{
  \begin{aligned}
  \tilde \vtheta_{t} &= \vtheta_t - \eta \nabla_\vtheta \LL^{G}(\vtheta_t,\vphi_t) \\
  \tilde \vphi_{t} &= \vphi_t - \eta \nabla_{\vphi}\LL^{D} (\vtheta_t ,\vphi_t)
  \end{aligned}
\right. \quad\;
\text{Update:} 
  \left\{\begin{aligned}
  \vtheta_{t+1} &= \vtheta_t - \eta \nabla_\vtheta \LL^{G}(\tilde \vtheta_{t},\tilde \vphi_{t}) \\
  \vphi_{t+1} &= \vphi_t - \eta \nabla_{\vphi}\LL^{D} (\tilde \vtheta_{t},\tilde \vphi_{t})
  \end{aligned}
  \right.
\end{equation}
Note how $\vtheta_t$ and $\vphi_t$ are updated with a gradient from a different point, the \emph{extrapolated} one.
In the context of a zero-sum game, for any \emph{convex-concave} function $\LL$ and any closed convex sets $\Theta$ and $\Phi$, the extragradient method converges~\citep[Thm. 12.1.11]{harker1990finite}. 

\subsection{Stochasticity Breaks Extragradient} 
\label{sub:stochasticity_breaks_extragradient_}
\begin{figure}
\centering
\floatbox[{\capbeside\thisfloatsetup{capbesideposition={right,top},capbesidewidth=6.8cm}}]{figure}[\FBwidth]
{
\captionsetup{singlelinecheck=off}
\caption[.]{Illustration of the discrepancy between games and minimization on simple examples:
$$ \text{\emph{min:}} \;
\min_{\theta,\phi \in \R} \theta^2 + \phi^2\,, 
\quad  \text{\emph{game:}}\; \min_{\theta \in \R} \max_{\phi \in \R} \theta \cdot \phi \,.$$
\textbf{Left: Minimization.} Up to a neighborhood, the noisy gradient always points to a direction that make the iterate closer to the minimum  ({\color{blue}$\bm{\star}$}).
\textbf{Right: Game.} The noisy gradient may point to a direction (red arrow) that push the iterate away from the Nash Equilibrium ({\color{red}$\bm{\star}$}).
}\label{fig:min_vs_game}}
{
\begin{subfigure}{.46\linewidth}
\centering
\includegraphics[width= 1.1\linewidth, height= 1.4 \linewidth]{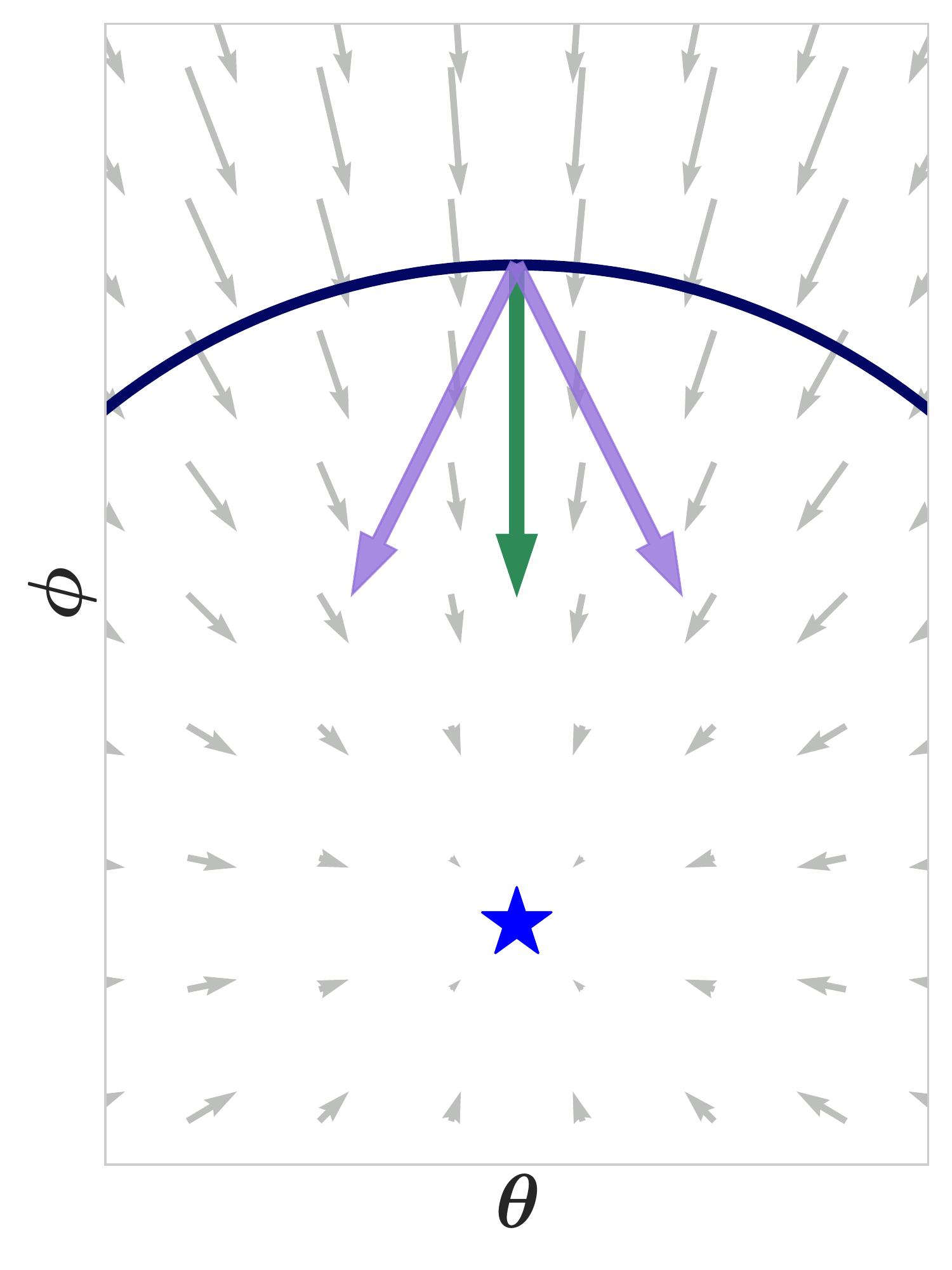}   \end{subfigure}
\hspace{2mm}
\begin{subfigure}{.46\linewidth}
\centering
\includegraphics[width= \linewidth, height= 1.4 \linewidth]{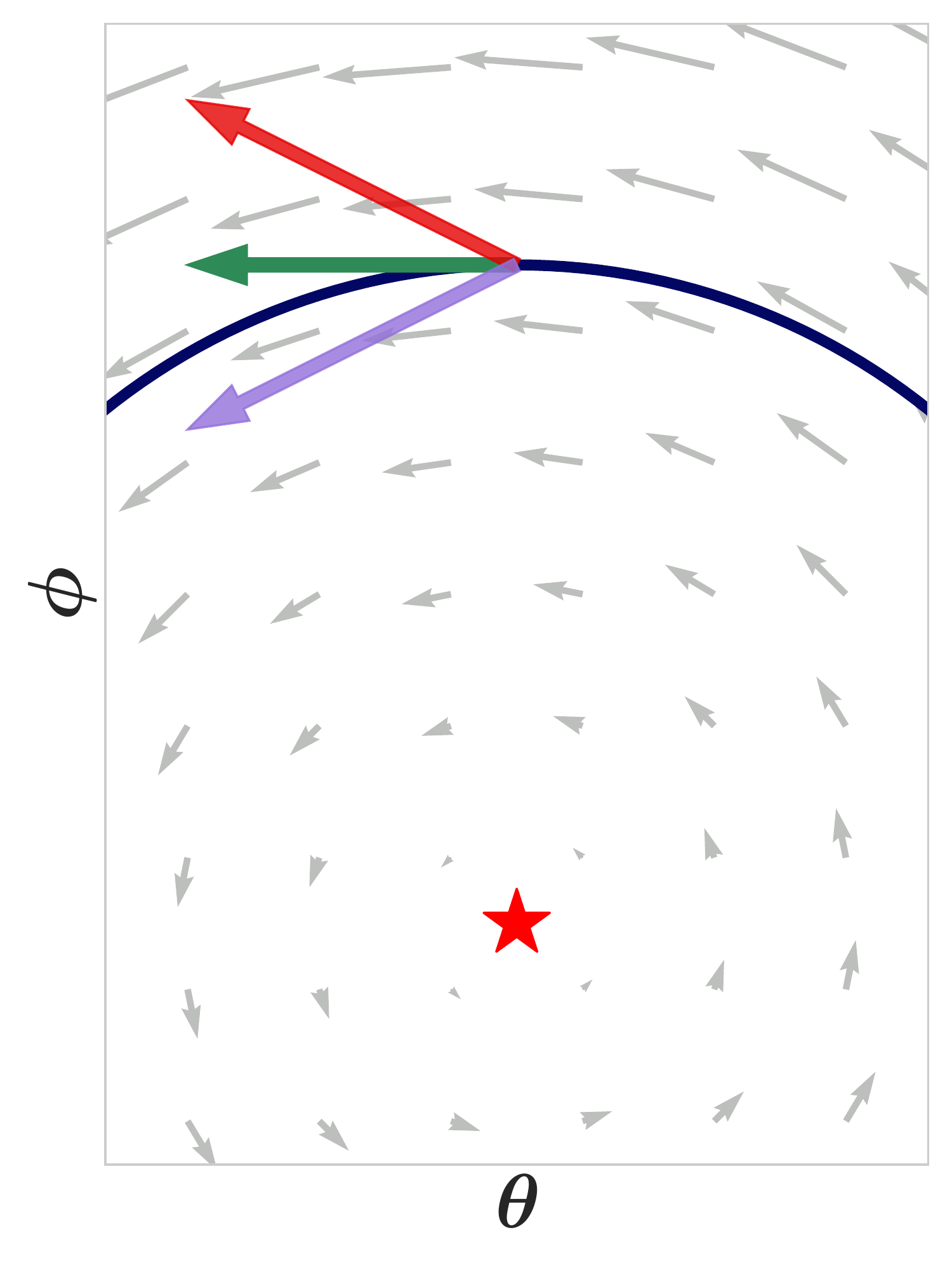}  \end{subfigure}
}
\end{figure}

As the~\eqref{eq:extragradient} converges for some examples for which gradient methods do not, it is reasonable to expect that so does its stochastic counterpart (at least to a neighborhood). However, the resulting noise in the gradient estimate may interact in a problematic way with the oscillations due to the \emph{adversarial component} of the game\footnote{\citet{gidel19-momentum} formalize the notion of ``adversarial component'' of a game, which yields a rotational dynamics in gradients methods (oscillations in parameters), as illustrated by the gradient field of Fig.~\ref{fig:min_vs_game} (right).}.  
We depict this phenomenon in Fig.~\ref{fig:min_vs_game}, where we show the direction of the noisy gradient on single objective minimization example and contrast it with a multi-objective one.

We present a simplistic example where the extragradient method \emph{converges linearly}~\citep[Corollary 1]{gidel2019variational} using the full gradient but \emph{diverges geometrically} when using stochastic estimates of it. Note that standard gradient methods, both batch and stochastic, diverge on this example.

In particular, we show that: 
\begin{enumerate*}[series = tobecont, itemjoin = \quad, label=(\roman*)]
\item if we use standard stochastic estimates of the gradients of $\LL$ with a simple finite sum formulation, then the iterates $\vomega_t := (\vtheta_t,\vphi_t)$ produced by the stochastic extragradient method (SEG) diverge geometrically, and on the other hand 
\item the full-batch extragradient method does converge to the Nash equilibrium $\vomega^*$ of this game~\citep[Thm. 12.1.11]{harker1990finite}. 
\end{enumerate*}

\begin{theorem}[Noise may induce divergence] 
\label{thm:diverge}
For any $\epsilon \geq 0$ There exists a zero-sum $\frac{\epsilon}{2}$-strongly monotone stochastic game such that if $\vomega_0 \neq \vomega^*$, then for any step-size $\eta>\epsilon$, the iterates $(\vomega_t)$ computed by the stochastic extragradient method diverge geometrically, i.e., there exists $\rho >0,$ such that $\E[\|\vomega_t - \vomega^*\|^2]> \|\vomega_0 - \vomega^*\|^2 (1+\rho)^t$.
\end{theorem}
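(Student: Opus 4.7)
The plan is to exhibit an explicit noisy bilinear game and directly compute the per-step evolution of the squared norm of the iterates in closed form. Setting $\mu \defas \epsilon/2$, I would use the stochastic zero-sum game with component losses $\LL_i(\theta,\phi) = \tfrac{\mu}{2}(\theta^2 - \phi^2) + a_i\,\theta\phi$, where the $a_i$ are i.i.d.\ Rademacher variables scaled by a free parameter $\sigma > 0$ (so $\EE[a_i]=0$ and $\EE[a_i^2]=\sigma^2$). The expected loss is $\tfrac{\mu}{2}(\theta^2 - \phi^2)$, whose unique Nash equilibrium is $\vomega^* = 0$ and whose associated game operator $\omega \mapsto \mu\,\omega$ is $\mu = \epsilon/2$-strongly monotone, as required.

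The key simplification comes from identifying $\vomega_t = (\theta_t,\phi_t)$ with $z_t \defas \theta_t + \mathrm{i}\phi_t \in \mathbb{C}$: the stochastic game operator acts by complex multiplication, $F_a(z) = (\mu - \mathrm{i}\,a)\,z$. Writing $a_1,a_2$ for the two fresh independent samples used by SEG at step $t$ in the extrapolation and update phases respectively, the iteration collapses to $z_{t+1} = \bigl[\,1 - \eta(\mu - \mathrm{i}\,a_2)\bigl(1 - \eta(\mu - \mathrm{i}\,a_1)\bigr)\bigr]\,z_t$. Expanding the product, separating real and imaginary parts, and taking the conditional expectation given $z_t$ (invoking $\EE[a_j] = 0$, $\EE[a_j^2] = \sigma^2$ and independence of $a_1,a_2$, which kills every cross term) yields the closed-form factor
\begin{equation*}
  \EE\bigl[\,|z_{t+1}|^2 \bigm| z_t\,\bigr] = R(\eta,\mu,\sigma)\,|z_t|^2, \quad R \defas (1-\eta\mu(1-\eta\mu))^2 + \eta^4\sigma^4 + \eta^2\sigma^2\bigl(\eta^2\mu^2 + (1-\eta\mu)^2\bigr).
\end{equation*}
Since $\vomega^* = 0$ and $\|\vomega_t - \vomega^*\|^2 = |z_t|^2$, the tower property immediately gives $\EE\|\vomega_t - \vomega^*\|^2 = R^t\,\|\vomega_0 - \vomega^*\|^2$, so the theorem reduces to a uniform lower bound $R \geq 1 + \rho$ for all $\eta > \epsilon = 2\mu$.

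To produce this uniform bound I would pick $\sigma$ large but depending only on $\epsilon$, and split on the value of $u \defas \eta\mu$. The deterministic part factors as $(1-u(1-u))^2 - 1 = u(1-u)\bigl(u(1-u) - 2\bigr)$, which is bounded below by $-2u$ when $u \in [0,1]$ and is nonnegative when $u \geq 1$. The noise part is always at least $\eta^2\sigma^2(1-u)^2$ (useful when $u$ stays away from $1$) and at least $\eta^4\sigma^4$ (useful when $u$ is close to or above $1$). On the regime $u \in (2\mu^2, 1/2]$ one has $\eta \geq 2\mu$, so $\eta^2\sigma^2(1-u)^2 \geq \mu^2 \sigma^2$ already beats $-2u \geq -4\mu^2$ as soon as $\sigma^2 > 4$; on $u \in (1/2,1]$ the quartic term $\eta^4\sigma^4 \asymp \sigma^4/\mu^4$ dominates; on $u > 1$ the deterministic term helps rather than hurts. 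Aggregating these bounds produces a single $\rho = \rho(\epsilon) > 0$ independent of $\eta$.

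The main obstacle I anticipate is the transition regime $u \approx 1$, where both the deterministic drift and the quadratic noise term $\eta^2\sigma^2(1-u)^2$ nearly vanish and one must rely on the quartic $\eta^4\sigma^4$ contribution; balancing the choice of $\sigma$ so that it simultaneously handles small, intermediate and large $\eta$ is the only delicate piece. The other two ingredients, namely the complex-variable reduction and the vanishing of all cross terms by independence of the two SEG samples, both follow transparently from the product structure of $F_a(z) = (\mu - \mathrm{i}\,a)\,z$, and everything else is algebra on quadratics.
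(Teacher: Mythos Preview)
Your construction is correct and gives a genuinely different proof from the paper's. Both arguments exhibit an explicit game and show that $\E[\|\vomega_{t+1}\|^2] = R\,\E[\|\vomega_t\|^2]$ with $R>1$, but the instances differ. The paper uses a two-coordinate-per-player game $\frac{1}{n}\sum_i \frac{\epsilon}{2}\theta_i^2 + \vtheta^\top \mA_i \vphi - \frac{\epsilon}{2}\varphi_i^2$ (with $n=2$, $\mA_i$ the $i$-th coordinate projector), where stochasticity means sampling a coordinate; the growth factor is obtained by tracking which coordinates the extrapolation and update mini-batches hit. Your one-dimensional-per-player game with a random sign on the bilinear term is a different finite sum (again $n=2$, components $a=\pm\sigma$), and the complex identification $z=\theta+i\phi$, $F_a(z)=(\mu-ia)z$ makes the growth factor drop out in one line. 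The paper's instance has no free noise parameter, whereas your $\sigma$ is a knob you tune against $\epsilon$; this extra freedom is what makes your case analysis work, at the cost of a slightly less canonical example.

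Two small points to fix. First, in your regime $u\in(2\mu^2,1/2]$ the inequality ``$-2u\geq -4\mu^2$'' is backwards (you have $u>2\mu^2$, not $u\leq 2\mu^2$); the honest bound is $-2u\geq -1$ on that interval, so you need $\mu^2\sigma^2>1$, i.e.\ $\sigma>1/\mu$, not merely $\sigma^2>4$. This is harmless since $\sigma$ is yours to choose. Second, you over-read the quantifiers: the theorem asserts ``for every $\eta>\epsilon$ there exists $\rho>0$'', so $\rho$ may depend on $\eta$. Your attempt to get $\rho=\rho(\epsilon)$ uniform in $\eta$ fails when $\epsilon=0$ (then $\mu=0$, $R=1+\eta^2\sigma^2+\eta^4\sigma^4\to 1$ as $\eta\to 0^+$), but the weaker per-$\eta$ statement you actually need, namely $R>1$ for each $\eta>\epsilon$, follows immediately from your closed form in every case.
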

\emph{Proof sketch.}
All detailed proofs can be found in~\S~\ref{sec:proof_of_theorems} of the appendix. 
We consider the following stochastic optimization problem (with $d=n$):
\begin{equation}\label{eq:problem_eg_fails}
   \frac{1}{n}  \sum_{i=1}^n \frac{\epsilon}{2} \theta_i^2 +\bm{\theta}^\top \bm{A}_{i} \bm{\varphi} - \frac{\epsilon}{2}\varphi_i^2  
  \quad
  \text{where} \quad [\mA_{i}]_{kl} = 1 \;  \text{if} \; k=l=i \; \text{and} \; 0 \; \text{otherwise}.
\end{equation}

Note that this problem is a simple dot product between $\vtheta$ and $\vphi$ with an $(\epsilon/n)$-$\ell_2$ norm penalization, thus we can compute the batch gradient and notice that the Nash equilibrium of this problem is $(\vtheta^*,\vphi^*) = (\bm{0},\bm{0})$. However, as we shall see, this simple problem \emph{breaks} with standard stochastic optimization methods.

Sampling a mini-batch without replacement $I \subset \{1,\ldots,n\}$, we denote $\mA_{I}:= \sum_{i \in I} \mA_{i}$.
The extragradient update rule can be written as:
\begin{equation}
\left\{
\begin{aligned}
  &\vtheta_{t+1} = (1-\eta \mA_{I} \epsilon) \vtheta_t - \eta \mA_{I} ( (1-\eta \mA_{J} \epsilon)\vphi_t + \eta \mA_{J} \vtheta_t)  \\
  &\vphi_{t+1} = (1-\eta \mA_{I} \epsilon)\vphi_t +\eta \mA_{I} ((1-\eta\mA_{J}\epsilon)\vtheta_t - \eta \mA_{J}\vphi_t) \,,
\end{aligned}
\right.
\end{equation}
where $I$ and $J$ are the mini-batches sampled for the update and the extrapolation step, respectively.
Let us write $N_t := \|\vtheta_t\|^2 + \|\vphi_t\|^2$. 
Noticing that $[\mA_{I}\vtheta]_i = [\vtheta]_i$ if $i\in I$ and $0$ otherwise, we have,
\begin{align}
 \mathbb E[N_{t+1}] 
 &= \left(1 -  \tfrac{|I|}{n}(2\eta \epsilon -  \eta^2(1+\epsilon^2)) - \tfrac{|I|^2}{n^2}(  2\eta^2 - \eta^4(1+\epsilon^2))\right) \mathbb E[N_t] \,.
\\\end{align}
Consequently, if the mini-batch size is smaller than half of the dataset size, i.e. $2|I|\leq n$, we have that $\forall \eta >\epsilon \,, \, \exists \rho > 0\,, \; s.t.\,,\; \E[N_{t}] > N_0 (1+\rho)^t$. For the theorem statement, we set $n=2$ and $|I|=1$.
\endproof

This result may seem contradictory with the standard result on SEG~\citep{juditsky2011solving} saying that the average of the iterates computed by SEG does converge to the Nash equilibrium of the game. However, an important assumption made by~\citeauthor{juditsky2011solving} is that the iterates are projected onto a compact set and that estimator of the gradient has finite variance. These assumptions break in this example since the variance of the estimator is proportional to the norm of the (unbounded) parameters.
Note that constraining the optimization problem~\eqref{eq:problem_eg_fails} to bounded domains $\Theta$ and $\Phi$,
 would make the finite variance assumption from~\citet{juditsky2011solving} holds. Consequently, the averaged iterate $\bar \vomega_t := \frac{1}{t} \sum_{s=0}^{t-1} \vomega_s$ would converge to $\vomega^*$. In \S~\ref{sub:why_is_convergence_of_last_iterate_preferable_}, we explain why in a \emph{non-convex setting}, the convergence of the \emph{last iterate} is preferable.

\section{Reducing Noise in Games with Variance Reduced Extragradient}
\label{sec:reducing_noise_with_vr_methods}
One way to reduce the noise in the estimation of the gradient is to use mini-batches of samples instead of one sample. However, mini-batch stochastic extragradient fails to converge on~\eqref{eq:problem_eg_fails} if the mini-batch size is smaller than half of the dataset size (see \S~\ref{sub:proof_of_theorem_thm:diverges}). In order to get an estimator of the gradient with a vanishing variance, the optimization literature proposed to take advantage of the finite-sum formulation that often appears in machine learning~\citep[and references therein]{schmidt2017minimizing}.

\subsection{Variance Reduced Gradient Methods} 
\label{sub:variance_methods}
Let us assume that the objective in~\eqref{eq:two_player_games} can be decomposed as a finite sum such that\footnote{The ``noise dataset'' in a GAN is not finite though; see \S~\ref{sub:practical_aspect} for details on how to cope with this in practice.} \begin{equation} \label{eq:two_player_games_finite_theta}
  \LL^{G}(\vomega) = \frac{1}{n}\sum_{i=1}^n \LL^{G}_i(\vomega)  \quad \text{and} \quad 
  \LL^{D}(\vomega) = \frac{1}{n}\sum_{i=1}^n \LL^{D}_i(\vomega)
  \quad \text{where} \quad 
  \vomega := (\vtheta,\vphi) \,.
\end{equation} 

\citet{johnson2013accelerating} propose the ``stochastic variance reduced gradient'' (SVRG) as an \emph{unbiased} estimator of the gradient with a smaller variance than the vanilla mini-batch estimate. The idea is to occasionally take a snapshot $\vomega^\mathcal{S}$ of the current model's parameters, and store the full batch gradient $\vmu^{\mathcal{S}}$ at this point. Computing the full batch gradient $\vmu^{\mathcal{S}}$ at $\vomega^\mathcal{S}$ is an expensive operation but not prohibitive if done infrequently (for instance once every dataset pass).

Assuming that we have stored $\vomega^{\mathcal{S}}$ and $\vmu^{\mathcal{S}} := (\vmu_\vtheta^{\mathcal{S}},\vmu_\vphi^{\mathcal{S}})$, the \emph{SVRG estimates} of the gradients are:
\begin{equation}  \vd^{G}_i\left(\vomega\right) := \tfrac{\nabla \LL^{G}_i(\vomega) - \nabla \LL^{G}_i\left(\vomega^\mathcal{S}\right)}{n \pi_i}  + \vmu_\vtheta^{\mathcal{S}} \;,
  \quad
  \label{eq:theta_svrg_dir}
  \vd^{D}_i\left(\vomega\right) := \tfrac{\nabla \LL^{D}_i(\vomega) - \nabla \LL^{D}_i\left(\vomega^\mathcal{S}\right)}{n \pi_i}+ \vmu_\vphi^{\mathcal{S}}.
\end{equation}
These estimates are unbiased: $\E[\vd^{G}_i(\vomega)] = \frac{1}{n}\sum_{i=1}^n \nabla \LL^{G}_i(\vomega) = \nabla \LL^{G}(\vomega)$, where the expectation is taken over $i$, picked with probability $\pi_i$.
The non-uniform sampling probabilities $\pi_i$ are used to bias the sampling according to the Lipschitz constant of the stochastic gradient in order to sample more often gradients that change quickly.
This strategy has been first introduced for variance reduced methods by~\citet{xiao2014SVRG_NUS} for SVRG and has been discussed for saddle point optimization by~\citet{palaniappan2016stochastic}.

Originally, SVRG was introduced as an epoch based algorithm with a \emph{fixed epoch size}: in Alg.~\ref{alg:svre}, one epoch is an inner loop of size $N$ (Line~\ref{l:epoch}). However, \citet{hofmann2015variance} proposed instead to \emph{sample} the size of each epoch from a geometric distribution, enabling them to analyze SVRG the same way as SAGA under a unified framework called $q$-memorization algorithm. We generalize their framework to handle the extrapolation step~\eqref{eq:extragradient} and provide a convergence proof for such $q$-memorization algorithms for games in \S~\ref{sub:proof_of_theorem_3}.

One advantage of~\citet{hofmann2015variance}'s framework is also that the sampling of the epoch size does not depend on the condition number of the problem, whereas the original proof for SVRG had to consider an epoch size larger than the condition number (see~\citet[Corollary 16]{leblond2018improved} for a detailed discussion on the convergence rate for SVRG). Thus, this new version of SVRG with a random epoch size becomes \emph{adaptive to the local strong convexity} since none of its hyper-parameters depend on the strong convexity constant.

However, because of some new technical aspects when working with monotone operators,~\citet{palaniappan2016stochastic}'s proofs (both for SAGA and SVRG) require a step-size (and epoch length for SVRG) that depends on the strong monotonicity constant making these algorithms not adaptive to local strong monotonicity. This motivates the proposed SVRE algorithm, which may be adaptive to local strong monotonicity, and is thus more appropriate for non-convex optimization.

\subsection{SVRE: Stochastic Variance Reduced Extragradient}
\label{sub:extra_svrg}
We describe our proposed algorithm called stochastic variance reduced extragradient (SVRE) in Alg.~\ref{alg:svre}. In an analogous manner to how~\citet{palaniappan2016stochastic} combined SVRG with the gradient method, SVRE combines SVRG estimates of the gradient~\eqref{eq:theta_svrg_dir} with the \emph{extragradient method}~\eqref{eq:extragradient}. 

With SVRE we are able to improve the convergence rates for variance reduction for a large class of stochastic games (see Table~\ref{tab:rates} and Thm.~\ref{thm:extra-svrg}), and we show in \S~\ref{sub:motivating_example} that it is the only method which empirically converges on the simple example of \S~\ref{sub:stochasticity_breaks_extragradient_}.

We now describe the theoretical setup for the convergence result. A standard assumption in convex optimization is the assumption of strong convexity of the function.
However, in a game, the operator, 
\begin{equation}\label{eq:game_operator}
 \vv:\vomega \mapsto \left[\nabla_\vtheta \LL^{G}(\vomega)\, , \; \nabla_\vphi \LL^{D}(\vomega)\right]^\top  \,,
\end{equation}
associated with the updates is no longer the gradient of a single function. 
To make an analogous assumption for games the optimization literature considers the notion of \emph{strong monotonicity}.
\begin{definition}\label{def:strong_monotonicity}
An operator $F: \vomega \mapsto (F_\vtheta(\vomega),F_\vphi(\vomega)) \in \R^{d+p}$ is said to be $(\mu_\vtheta,\mu_\vphi)$-strongly monotone if for all $\vomega,  \vomega' \in \R^{p+d}$ we have
\begin{equation} \label{eq:weighted_norm}
   \Omega((\vtheta,\vphi),(\vtheta',\vphi')) :=
   \mu_\vtheta \|\vtheta -\vtheta'\|^2 + \mu_\vphi \|\vphi -\vphi'\|^2
   \leq 
   (F(\vomega)- F(\vomega'))^\top (\vomega - \vomega') \,,  \notag
\end{equation}
where we write $\,\vomega := (\vtheta,\vphi) \in \R^{d+p}$. A \emph{monotone operator} is a $(0,0)$-strongly monotone operator.
\end{definition}
This definition is a generalization of strong convexity for operators: if $f$ is $\mu$-strongly convex, then $\nabla f$ is a $\mu$-monotone operator. Another assumption is the $\gamma$ regularity assumption,
\begin{definition}\label{def:gamma_regularity}
An operator $F: \vomega \mapsto (F_\vtheta(\vomega),F_\vphi(\vomega)) \in \R^{d+p}$ is said to be $(\gamma_\vtheta,\gamma_\phi)$-regular if,
\begin{equation} \label{eq:gamma_regularity2}
   \gamma_\vtheta^2\|\vtheta -\vtheta'\|^2 + \gamma_\vphi^2 \|\vphi -\vphi'\|^2
   \leq 
   \|F(\vomega) -F(\vomega')\|^2\,,\quad  \forall\, \vomega,  \vomega' \in  \R^{p+d}\,.
\end{equation}
\end{definition}
 Note that an \emph{operator} is always $(0,0)$-regular.
This assumption originally introduced by~\citet{tsengLinearConvergenceIterative1995} has been recently used~\citep{azizian2019tight} to improve the convergence rate of extragradient. For instance for a full rank bilinear matrix problem $\gamma$ is its smallest singular value. More generally, in the case $\gamma_\vtheta = \gamma_\vphi$, the regularity constant is a lower bound on the minimal singular value of the Jacobian of $F$~\citep{azizian2019tight}.

One of our main assumptions is the cocoercivity assumption, which implies the Lipchitzness of the operator in the unconstrained case. We use the cocoercivity constant because it provides a tighter bound for general strongly monotone and Lipschitz games (see discussion following Theorem~\ref{thm:extra-svrg}).
\begin{definition}
 An operator $F: \vomega \mapsto (F_\vtheta(\vomega),F_\vphi(\vomega))\in \R^{d+p}$ is said to be $(\ell_\vtheta,\ell_\vphi)$-cocoercive, if for all $\vomega,  \vomega' \in \Omega$ we have
\begin{equation} \label{eq:cocoercive}
 \|F(\vomega) -F(\vomega')\|^2 \leq 
  \ell_\vtheta(F_\vtheta(\vomega)- F_\vtheta(\vomega'))^\top (\vtheta - \vtheta') + \ell_\vphi(F_\vphi(\vomega)- F_\vphi(\vomega'))^\top (\vphi - \vphi')  \,.
\end{equation} 
\end{definition}
Note that for a $L$-Lipschitz and $\mu$-strongly monotone operator, we have $\ell \in [L, L^2/\mu]$~\citep{facchineiFiniteDimensionalVariationalInequalities2003}. For instance, when $F$ is the gradient of a convex function, we have $\ell = L$. More generally, when $F(\vomega)= (\nabla f(\vtheta) + M\vphi, \nabla g(\vphi) - M^\top \vtheta)$, where $f$ and $g$ are $\mu$-strongly convex and $L$ smooth we have that $\gamma = \sigma_{\min}(M)$ and $\|M\|^2 = O(\mu L)$ is a sufficient condition for $\ell = O(L)$ (see \S\ref{sec:definitions_and_lemmas}). 
Under this assumption on each cost function of the game operator, we can define a cocoercivity constant adapted to the non-uniform sampling scheme of our stochastic algorithm:
\begin{equation}\label{eq:sampling}
  \bar \ell (\pi)^2 := \frac{1}{n}\sum_{i=1}^n \frac{1}{n \pi_i}\ell_i^2.
\end{equation}
The standard \emph{uniform sampling scheme} corresponds to $\pi_i:= \frac{1}{n}$ and the optimal \emph{non-uniform} sampling scheme corresponds to $\tilde \pi_i:= \frac{\ell_i}{\sum_{i=1}^n \ell_i}$. By Jensen's inequality, we have: $\bar \ell(\tilde \pi) \leq \bar \ell(\pi) \leq \max_i \ell_i$.

For our main result, we make strong convexity, cocoercivity and regularity assumptions.

\begin{assumption}\label{assump:SVRG}For $1\leq i\leq n$, the gradients $\nabla_\vtheta \LL_i^{G}$ and $\nabla_\vphi \LL_i^{D}$ are respectively $\ell_i^{\vtheta}$ and $\ell_i^{\vphi}$-cocoercive and $(\gamma^{\vtheta}_i,\gamma_i^{\vphi})$-regular. The operator~\eqref{eq:game_operator} is $(\mu_\vtheta,\mu_\vphi)$-strongly monotone.
\end{assumption}

We now present our convergence result for SVRE with non-uniform sampling (to make our constants comparable to those of~\citet{palaniappan2016stochastic}), but note that we have used uniform sampling in all our experiments (for simplicity).
\begin{theorem}\label{thm:extra-svrg} Under Assumption~\ref{assump:SVRG}, after $t$ iterations, the iterate $\vomega_t:=(\vtheta_t,\vphi_t)$ computed by SVRE (Alg.~\ref{alg:svre}) with step-size $\eta_\vtheta\leq ({40 \bar \ell_\vtheta})^{-1}$ and $\eta_\vphi\leq ({40 \bar \ell_\vphi})^{-1}$ and sampling scheme $(\tilde \pi_\vtheta,\tilde \pi_\vphi)$ verifies:
\begin{equation} \notag
  \E[\|\vomega_t - \vomega^*\|_2^2] \leq  \left(1- \min \left\{\frac{\eta_\vtheta\mu_\vtheta}{4} + \frac{11\eta_\vtheta^2\bar\gamma_\vtheta^2}{25},\frac{\eta_\vphi\mu_\vphi}{4} + \frac{11\eta_\vphi^2\bar\gamma_\vphi^2}{25}, \frac{2}{5n}\right\} \right)^t
  \E[\|\vomega_0 - \vomega^*\|_2^2]  \,,
\end{equation}
where $\bar \ell_\vtheta(\pi_\vtheta)$ and $\bar \ell_\vphi(\pi_\vphi)$ are defined in~\eqref{eq:sampling}. Particularly, for $\eta_\vtheta =\frac{1}{40 \bar \ell_\vtheta}$ and $\eta_\vphi =\frac{1}{40 \bar \ell_\vphi}$ we get  
\begin{equation} \notag
  \E[\|\vomega_t - \vomega^*\|_2^2] \leq  \left(1- \min \left\{\frac{1}{80}\Big(\frac{\mu_\vtheta}{2  \bar\ell_\vtheta} + \frac{\bar \gamma_\vtheta^2}{25\bar \ell_\vtheta^2}\Big),\frac{1}{80}\Big(\frac{\mu_\vphi}{2\bar \ell_\vphi} + \frac{\bar\gamma_\vphi^2}{25\bar \ell_\vphi^2}\Big), \frac{2}{5n}\right\} \right)^t
  \E[\|\vomega_0 - \vomega^*\|_2^2]  \,.
\end{equation}
\end{theorem}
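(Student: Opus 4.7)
The plan is to combine a one-step extragradient identity with an SVRG variance bound and the $q$-memorization trick of~\citet{hofmann2015variance} for the geometric epoch. I would track the Lyapunov function
\begin{equation*}
\Phi_t = \E\bigl[\|\vtheta_t - \vtheta^*\|^2/\eta_\vtheta + \|\vphi_t - \vphi^*\|^2/\eta_\vphi\bigr] + c\,\E\bigl[\|\vtheta_t^{\mathcal{S}} - \vtheta^*\|^2/\eta_\vtheta + \|\vphi_t^{\mathcal{S}} - \vphi^*\|^2/\eta_\vphi\bigr],
\end{equation*}
with $c = \Theta(n)$ chosen so that the two blocks contract at the same rate. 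The geometric epoch length lets the snapshot be treated as in SAGA: at each iteration it is independently refreshed with probability $1/n$, yielding the recursion $\E\|\vomega_{t+1}^{\mathcal{S}} - \vomega^*\|^2 = \tfrac{1}{n}\E\|\vomega_t - \vomega^*\|^2 + (1-\tfrac{1}{n})\E\|\vomega_t^{\mathcal{S}} - \vomega^*\|^2$, which is precisely what produces the $2/(5n)$ term in the final rate.

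First, I would derive the key one-step identity. Writing $\vomega_t - \vomega^* = (\tilde\vomega_t - \vomega^*) + \eta\vd_i(\vomega_t)$ in the expansion of $\|\vomega_{t+1} - \vomega^*\|^2 = \|\vomega_t - \eta\vd_j(\tilde\vomega_t) - \vomega^*\|^2$ and using $-2\langle a,b\rangle + \|a\|^2 = \|a-b\|^2 - \|b\|^2$ on the cross term, one obtains
\begin{equation*}
\|\vomega_{t+1} - \vomega^*\|^2 = \|\vomega_t - \vomega^*\|^2 - 2\eta\langle \vd_j(\tilde\vomega_t),\tilde\vomega_t - \vomega^*\rangle + \eta^2 \|\vd_j(\tilde\vomega_t) - \vd_i(\vomega_t)\|^2 - \eta^2 \|\vd_i(\vomega_t)\|^2.
\end{equation*}
Conditioning on $\vomega_t$ and using that $j$ is drawn independently of $i$, unbiasedness turns the cross term into $-2\eta\langle v(\tilde\vomega_t),\tilde\vomega_t - \vomega^*\rangle$, which strong monotonicity lower-bounds by $-2\eta(\mu_\vtheta\|\tilde\vtheta_t - \vtheta^*\|^2 + \mu_\vphi\|\tilde\vphi_t - \vphi^*\|^2)$. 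Expanding $\tilde\vomega_t = \vomega_t - \eta\vd_i(\vomega_t)$ and balancing with a Young-type inequality transfers this into the $\eta\mu/4$ contribution in the final rate.

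Second, I would bound the variance $\E\|\vd_j(\tilde\vomega_t) - \vd_i(\vomega_t)\|^2$. Since $\vd_i(\vomega) - v(\vomega) = (n\pi_i)^{-1}(v_i(\vomega) - v_i(\vomega^{\mathcal{S}})) - (v(\vomega) - v(\vomega^{\mathcal{S}}))$, the inequality $\Var(X) \leq \E\|X\|^2$ followed by component-wise cocoercivity gives
\begin{equation*}
\E_i \|\vd_i(\vomega) - v(\vomega)\|^2 \leq \sum_i \tfrac{\ell_i}{n^2 \pi_i} \langle v_i(\vomega) - v_i(\vomega^{\mathcal{S}}), \vomega - \vomega^{\mathcal{S}}\rangle = \bar\ell\,\langle v(\vomega) - v(\vomega^{\mathcal{S}}),\vomega - \vomega^{\mathcal{S}}\rangle,
\end{equation*}
the last equality holding under the non-uniform sampling $\tilde\pi$. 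This is where cocoercivity (rather than merely Lipschitzness) pays off: the right-hand side is an inner product, so a second application of cocoercivity together with strong monotonicity at the pairs $(\vomega,\vomega^*)$ and $(\vomega^{\mathcal{S}},\vomega^*)$ converts it into a combination of $\|\vomega - \vomega^*\|^2$ and $\|\vomega^{\mathcal{S}} - \vomega^*\|^2$, avoiding a $\bar L^2/\mu$ blow-up. Applying this bound at both $\vomega_t$ and $\tilde\vomega_t$, the step-size condition $\eta_\bullet \leq 1/(40\bar\ell_\bullet)$ then absorbs the variance into a constant fraction of the positive terms already accumulated.

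Finally, I would extract the regularity contribution from $-\eta^2\|\vd_i(\vomega_t)\|^2$. Conditional expectation gives $\E\|\vd_i(\vomega_t)\|^2 \geq \|v(\vomega_t)\|^2$, and regularity combined with $v(\vomega^*) = 0$ yields $\|v(\vomega_t)\|^2 \geq \bar\gamma_\vtheta^2\|\vtheta_t - \vtheta^*\|^2 + \bar\gamma_\vphi^2\|\vphi_t - \vphi^*\|^2$. After Young-type absorption of the residual noise, this produces the $11\eta^2\bar\gamma^2/25$ factor. Combining the per-step bound with the snapshot recursion and tuning $c$ yields $\E[\Phi_{t+1}] \leq (1-\rho)\Phi_t$ with the announced $\rho$. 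The main obstacle will be the combinatorial bookkeeping induced by the two independent sampling rounds and the asymmetry of the two players: the cross term $\langle \vd_j(\tilde\vomega_t), \vd_i(\vomega_t)\rangle$ and the step-size-weighted norms couple $\vtheta$ and $\vphi$, so the explicit constants $40$, $11/25$, and $2/(5n)$ demand careful balancing—specifically keeping the positive $\bar\gamma^2$ contribution from being entirely eaten by residual noise, which is possible only because cocoercivity tightens the variance bound to an inner product rather than a squared distance.
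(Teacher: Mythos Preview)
Your one-step identity is correct and equivalent to the paper's extragradient lemma (there $-\|\vomega_{t+1/2}-\vomega_t\|^2=-\eta^2\|\vd_i(\vomega_t)\|^2$ in the unconstrained case), and the overall plan---extragradient descent lemma, SVRG variance bound, geometric-epoch $q$-memorization---is the paper's plan. The gap is in the Lyapunov and in where you apply cocoercivity.

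The paper does \emph{not} track the iterate-level quantity $\|\vomega^{\mathcal S}_t-\vomega^*\|^2$; it tracks the gradient-level
\[
H_t \;=\; \tfrac{1}{n}\sum_{i=1}^n \tfrac{1}{n\pi_i}\,\bigl\|F_i(\vomega^*) - \valpha_i^{(t)}\bigr\|^2 .
\]
This matters because with your choice and $c=\Theta(n)$, the snapshot recursion injects $\tfrac{c}{n}\,\E\|\vomega_t-\vomega^*\|^2=\Theta(1)\cdot\E\|\vomega_t-\vomega^*\|^2$ into the $\vomega_t$-block every step, while the only contraction there is $O(\eta\mu+\eta^2\bar\gamma^2)$; for small $\mu$ and $\bar\gamma=0$ this cannot close, and shrinking $c$ instead makes the snapshot block unable to absorb the variance residual. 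That trade-off is precisely the one in \citet{palaniappan2016stochastic} that forces a $\mu$-dependent step size---you would recover their $\bar L^2/\mu^2$ complexity, not the paper's $\bar\ell/\mu$. The paper's mechanism is different: it first splits the variance through $\vomega^*$ (so cocoercivity is applied at $(\vomega_{t+1/2},\vomega^*)$, \emph{not} at $(\vomega,\vomega^{\mathcal S})$ as you do), obtaining an $H_t$-piece plus $\sum_j \tfrac{1}{n^2\pi_j}\|F_j(\vomega_{t+1/2})-F_j(\vomega^*)\|^2$. Under the sampling $\tilde\pi_j\propto\ell_j$, cocoercivity turns the latter into exactly $\bar\ell\,\langle F(\vomega_{t+1/2}),\vomega_{t+1/2}-\vomega^*\rangle$---the \emph{same} inner product already present with coefficient $-2\eta$ from the extragradient lemma. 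The $H_t$-recursion produces the same kind of term (its ``penalty'' is $\tfrac{q}{n}\|F_i(\vomega_t)-F_i(\vomega^*)\|^2$, again routed by cocoercivity into that inner product). Everything is thus absorbed by the single monotonicity term once $\eta\bar\ell\le 1/40$, with no $\mu$ appearing; only at the very end is strong monotonicity invoked to extract the $\eta\mu/4$ rate. Your ``second application of cocoercivity together with strong monotonicity'' step is exactly where this alignment is lost: $\langle F(\vomega)-F(\vomega^{\mathcal S}),\vomega-\vomega^{\mathcal S}\rangle$ does not reduce to $\|\vomega-\vomega^*\|^2+\|\vomega^{\mathcal S}-\vomega^*\|^2$ without a Lipschitz penalty that reintroduces $\mu$-dependence.

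A smaller issue: you extract the $\bar\gamma^2$-term from $\|F(\vomega_t)\|^2$ via $\E\|\vd_i(\vomega_t)\|^2\ge\|F(\vomega_t)\|^2$, but Assumption~\ref{assump:SVRG} only gives per-$i$ regularity of the $F_i$, which does \emph{not} imply regularity of the averaged operator. The paper instead lower-bounds $\eta^2\|\vd_i(\vomega_t)\|^2$ directly by $\tfrac{\eta^2}{2}\|\tfrac{1}{n\pi_i}(F_i(\vomega_t)-F_i(\vomega^*))\|^2-\eta^2 H_t$, applies per-$i$ regularity inside the first term, and pushes the residual into the $H_t$-block.
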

We prove this theorem in \S~\ref{sub:proof_of_theorem_3}. We can notice that the respective \emph{condition numbers} of $\LL^{G}$ and $\LL^{D}$ defined as $\kappa_\vtheta:=\tfrac{\mu_\vtheta}{\bar \ell_{\vtheta}}+\tfrac{\bar \gamma_\vtheta^2}{\bar \ell^2_{\vtheta}}$ and $\kappa_\vphi := \tfrac{\mu_{\vphi}}{\bar \ell_{\vphi}} + \tfrac{\bar \gamma_\vphi^2}{\bar \ell_{\vphi}^2}$ appear in our convergence rate. The cocoercivity constant $\ell$ belongs to $[L,L^2/\mu]$, thus our rate may be significantly faster\footnote{Particularly, when $F$ is the gradient of a convex function (or close to it) we have $\ell\approx L$ and thus our rate recovers the standard $\ln(1/\epsilon)L/\mu$, improving over the accelerated algorithm of~\citet{palaniappan2016stochastic}. More generally, under the assumptions of Proposition~\ref{prop:coco}, we also recover $\ln(1/\epsilon)L/\mu$.}  than the convergence rate of the (non-accelerated) algorithm of~\citet{palaniappan2016stochastic} that depends on the product $\tfrac{\mu_\vtheta}{\bar L_{\vtheta}}\tfrac{\mu_\vphi}{\bar L_{\vphi}}$. 
They avoid a dependence on the maximum of the condition numbers squared, $\max\{\kappa_\vphi^2,\kappa_\vtheta^2\}$, by using the weighted Euclidean norm~$\Omega(\vtheta,\vphi)$ defined in~\eqref{eq:weighted_norm} and rescaling the functions $\LL^{G}$ and $\LL^{D}$ with their strong-monotonicity constant. However, this rescaling trick suffers from two issues:
\begin{enumerate*}[series = tobecont, itemjoin = \quad, label=(\roman*)]
\item we do not know in practice a good estimate of the strong monotonicity constant, which was \emph{not} the case in~\citet{palaniappan2016stochastic}'s application; and 
\item the algorithm does not adapt to local strong-monotonicity. This property is important in non-convex optimization since we want the algorithm to exploit the (potential) local stability properties of a stationary point.
\end{enumerate*} 

\subsection{Motivating example} 
\label{sub:motivating_example}
  
The example~\eqref{eq:problem_eg_fails} for $\epsilon=0$ seems to be challenging in the stochastic setting since all the standard methods and even the stochastic extragradient method fails to find its Nash equilibrium (note that this example is \emph{not} strongly monotone). We set $n = d = 100$, and draw $[\mA_{i}]_{kl} =\delta_{kli} \text{ and } [\vb_i]_k, [\vc_i]_k \sim \mathcal{N}(0,1/d) \,,\; 1\leq k,l\leq d$, where $\delta_{kli} = 1$ if $k=l=i$ and $0$ otherwise. Our optimization problem is: 
\begin{equation}\label{eq:bilin_exp} 
  \min_{\vtheta \in \R^d} \max_{\vphi \in \R^d}  \frac{1}{n}\sum_{i=1}^n (\vtheta^\top \vb_i+\vtheta^\top \mA_{i} \vphi + \vc_i^\top \vphi).
\end{equation}
We compare variants of the following algorithms (with uniform sampling and average our results over 5 different seeds): 
\begin{enumerate*}[series = tobecont, itemjoin = \quad, label=(\roman*)]
\item AltSGD: the standard method to train GANs--stochastic gradient with alternating updates of each player. \item SVRE: Alg.~\ref{alg:svre}.
\end{enumerate*} 
The AVG prefix correspond to the \emph{uniform average} of the iterates, $\bar \vomega := \frac{1}{t}\sum_{s=0}^{t-1} \vomega_s$. We observe in Fig.~\ref{fig:blinear} that AVG-SVRE converges sublinearly (whereas AVG-AltSGD fails to converge). 

This motivates a new variant of SVRE based on the idea that even if the averaged iterate converges, we do not compute the gradient at that point and thus we do not benefit from the fact that this iterate is closer to the optimums~(see \S~\ref{sub:why_is_convergence_of_last_iterate_preferable_}). Thus the idea is to occasionally restart the algorithm, i.e., consider the averaged iterate as the new starting point of our algorithm and compute the gradient at that point. Restart goes well with SVRE as we already occasionally stop the inner loop to recompute $\vmu^\mathcal{S}$, at which point we decide (with a probability $p$ to be fixed) whether or not to restart the algorithm by taking the snapshot at point $\bar \vomega_t$ instead of $\vomega_t$.
This variant of SVRE is described in Alg.~\ref{alg:svre_restart} in \S~\ref{sec:restarted_svre} and the variant combining VRAd in \S~\ref{sub:practical_aspect}.

In Fig.~\ref{fig:blinear} we observe that the only method that converges is SVRE and its variants. We do not provide convergence guarantees for Alg.~\ref{alg:svre_restart} and leave its analysis for future work. However, it is interesting that, to our knowledge, this algorithm is the only stochastic algorithm (excluding batch extragradient as it is not stochastic) that converge for~\eqref{eq:problem_eg_fails}. Note that we tried all the algorithms presented in Fig.~3 from~\citet{gidel2019variational} on this \emph{unconstrained} problem and that all of them diverge.

\section{GAN Experiments}\label{sec:experiments}
In this section, we investigate the empirical performance of SVRE for \emph{GAN training}. Note, however, that our theoretical analysis does not hold for games with non-convex objectives such as GANs.

\textbf{Datasets. \,\,}\label{par-datasets}
We used the following datasets:
\begin{enumerate*}[series = tobecont, itemjoin = \quad, label=(\roman*)]
\item \textbf{MNIST}~\citep{mnist},   
\item \textbf{CIFAR-10} \citep[\S3]{cifar10}, 
\item \textbf{SVHN}~\citep{svhn}, and
\item \textbf{ImageNet}  ILSVRC 2012~\citep{imagenet},
\end{enumerate*} using 
$28\!\times\!28$, 
$3 \!\times\! 32 \!\times\! 32$, 
$3 \!\times\! 32 \!\times\! 32$, and 
$3 \!\times\! 64 \!\times\! 64$ resolution, respectively.

\textbf{Metrics. \,\,}\label{par:metrics}
We used the \textbf{Inception score} (IS,~\citealp{salimans2016improved}) and the \textbf{Fr\'echet Inception distance} (FID,~\citealp{heusel_gans_2017}) as performance metrics for image synthesis. To gain insights if SVRE indeed reduces the variance of the gradient estimates, we used the \textbf{second moment estimate--SME} (uncentered variance), computed with an exponentially moving average. See \S~\ref{sec:app-metrics} for details.

\textbf{DNN architectures. \,\,}\label{par:arch-brief}  
For experiments on \textbf{MNIST}, we used the DCGAN architectures~\citep{radford2016unsupervised}, described in \S~\ref{app:mnist_arch}.
For real-world datasets, we used two architectures (see \S~\ref{app:arch} for details and \S~\ref{sec:arch_motivation} for motivation):
\begin{enumerate*}[series = tobecont, itemjoin = \quad, label=(\roman*)]
\item SAGAN~\citep{sagan}, and
\item ResNet, replicating the setup of~\citet{miyato2018spectral},
\end{enumerate*}
described in detail in \S~\ref{sec:shallow_sagan} and ~\ref{sec:deeper_resnet_arch}, respectively.
For clarity, we refer the former as \textit{shallow}, and the latter as \textit{deep} architectures.

\paragraph{Optimization methods.}\label{par:optim}     
We conduct experiments using the following optimization methods for GANs:
\begin{enumerate*}[series = tobecont, itemjoin = \quad, label=(\roman*)]
\item \textbf{BatchE:} full--batch extragradient,
\item \textbf{SG:} stochastic gradient (alternating GAN), and
\item \textbf{SE:} stochastic extragradient, and
\item \textbf{SVRE:} stochastic variance reduced extragradient.
\end{enumerate*}
These can be combined with adaptive learning rate methods such as \textit{Adam} or with parameter averaging, hereafter denoted as  \textbf{--A} and \textbf{AVG--}, respectively.
In \S~\ref{sub:practical_aspect}, we present a variant of Adam adapted to variance reduced algorithms, that is referred to as \textbf{--VRAd}. 
When using the SE--A baseline and \textit{deep} architectures, the convergence rapidly fails at some point of training (cf.~\S~\ref{sec:results_deep_arch}). 
This motivates experiments where we start from a stored checkpoint taken \emph{before} the baseline diverged, and \emph{continue training with SVRE}. We denote these experiments with \textbf{WS--SVRE} (warm-start SVRE).

\begin{figure}[tb]
    \begin{subfigure}[t]{0.329\linewidth}
        \centering
        \includegraphics[width=\linewidth]{./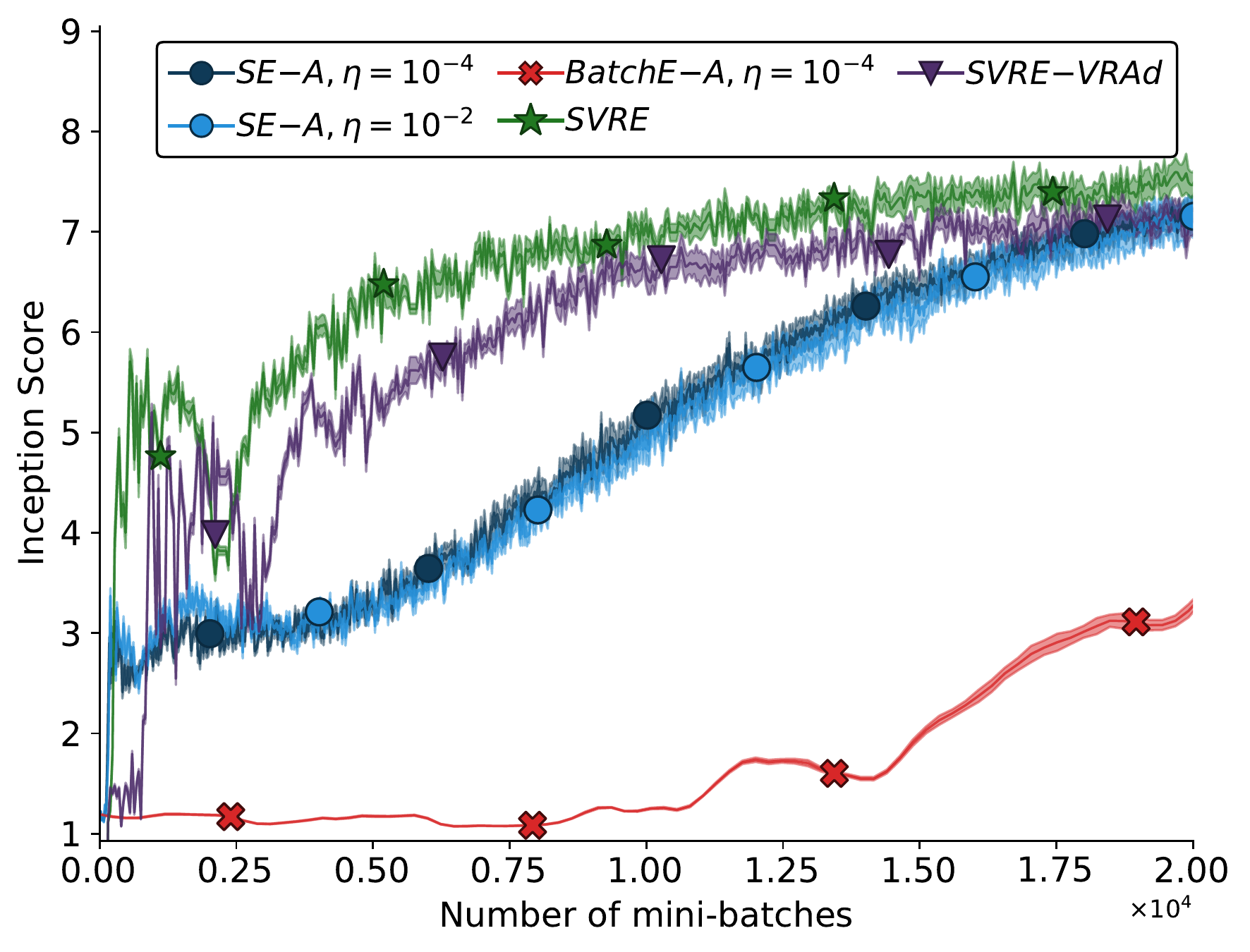}
        \caption{IS (higher is better), \textbf{MNIST}}\label{subfig-mnist_is}
    \end{subfigure}
   \begin{subfigure}[t]{0.329\linewidth}
        \centering
        \includegraphics[width=\linewidth]{./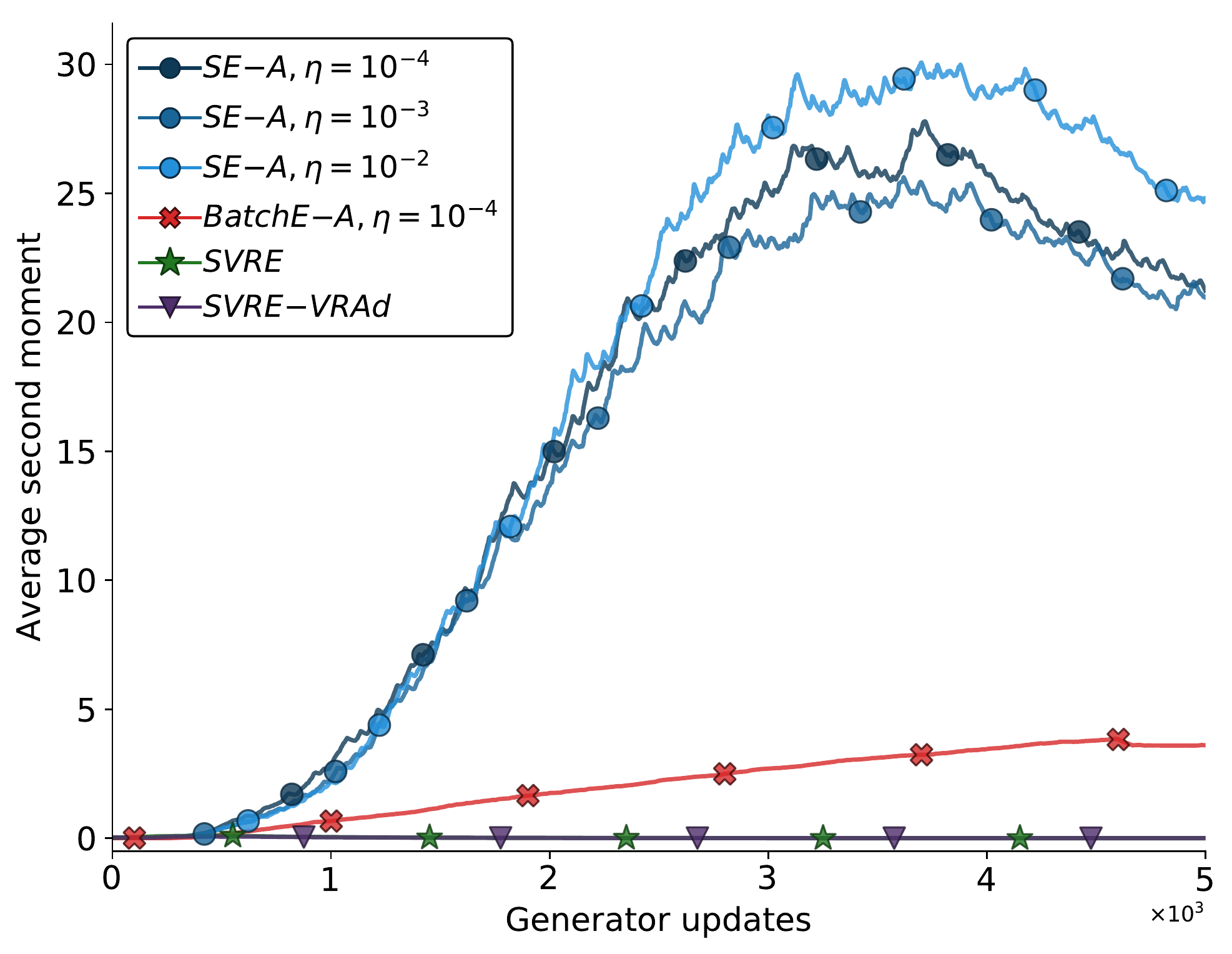}
        \caption{Generator--SME, \textbf{MNIST}}\label{subfig-var_g}
    \end{subfigure}
    \begin{subfigure}[t]{0.329\linewidth}
        \centering
        \includegraphics[width=\linewidth]{./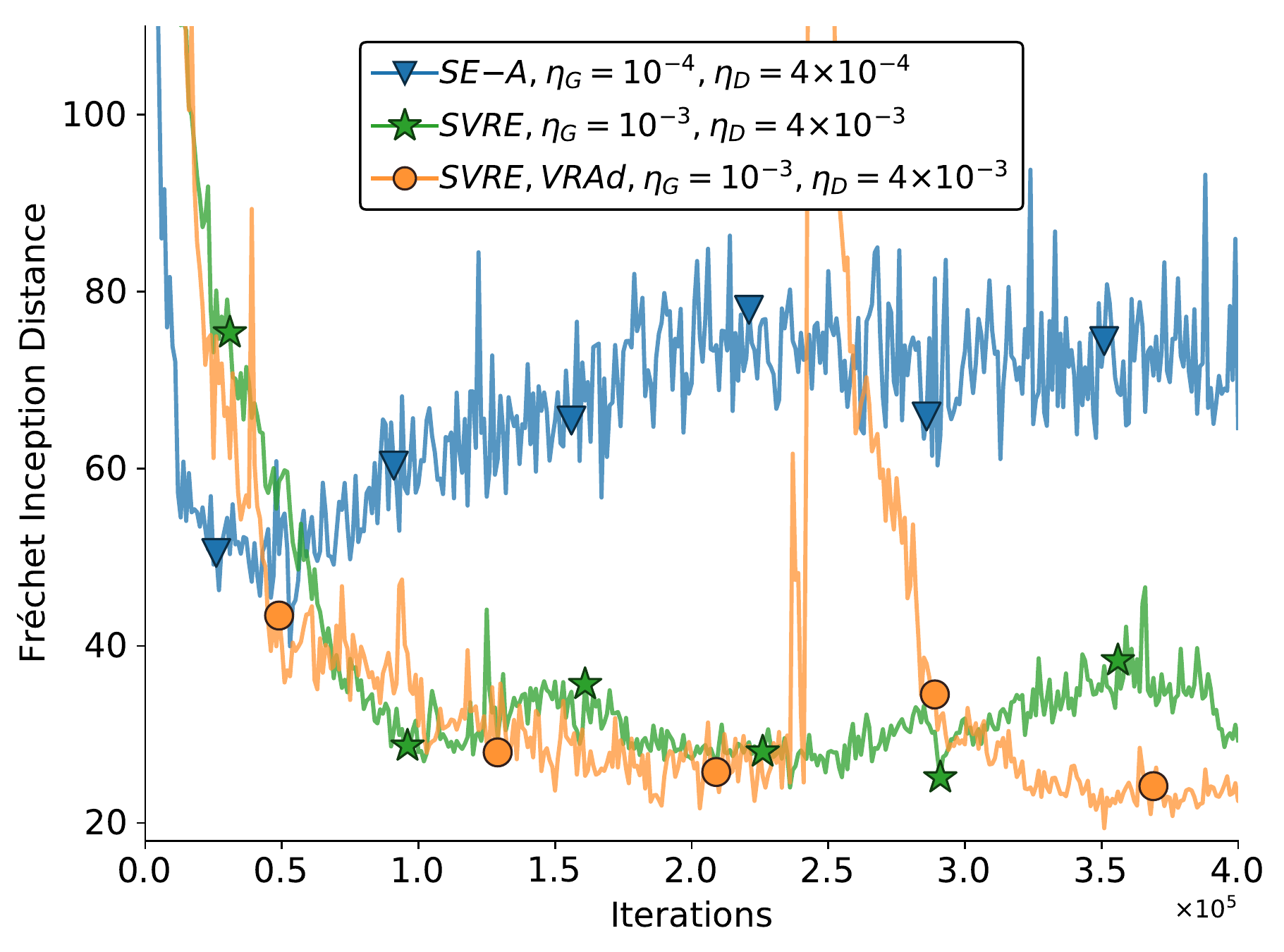}
        \caption{FID (lower is better), \textbf{SVHN}}\label{subfig-fid_svhn}
    \end{subfigure}
    \caption{\textbf{Figures~\subref{subfig-mnist_is} \& \subref{subfig-var_g}.} Stochastic, full-batch and variance reduced extragradient optimization on \textbf{MNIST}.
  We used $\eta=10^{-2}$ for SVRE.      \textit{SE--A} with $\eta=10^{-3}$ achieves similar IS performances as $\eta=10^{-2}$ and $\eta=10^{-4}$, omitted from Fig.~\subref{subfig-mnist_is} for clarity.
   \textbf{Figure~\subref{subfig-fid_svhn}.}
   FID on \textbf{SVHN}, using \textit{shallow} architectures.
See \S~\ref{par:optim} and \S~\ref{sec:impl-details} for naming of methods and details on the implementation, respectively. 
 }
    \label{fig:extra_mnist}
\end{figure}

\begin{figure} 
	\begin{minipage}
		{\linewidth}
		{}\vspace{-6mm}
		\begin{minipage}{.5 \linewidth}
			\begin{figure}[H]
				\includegraphics[width=.95\linewidth]{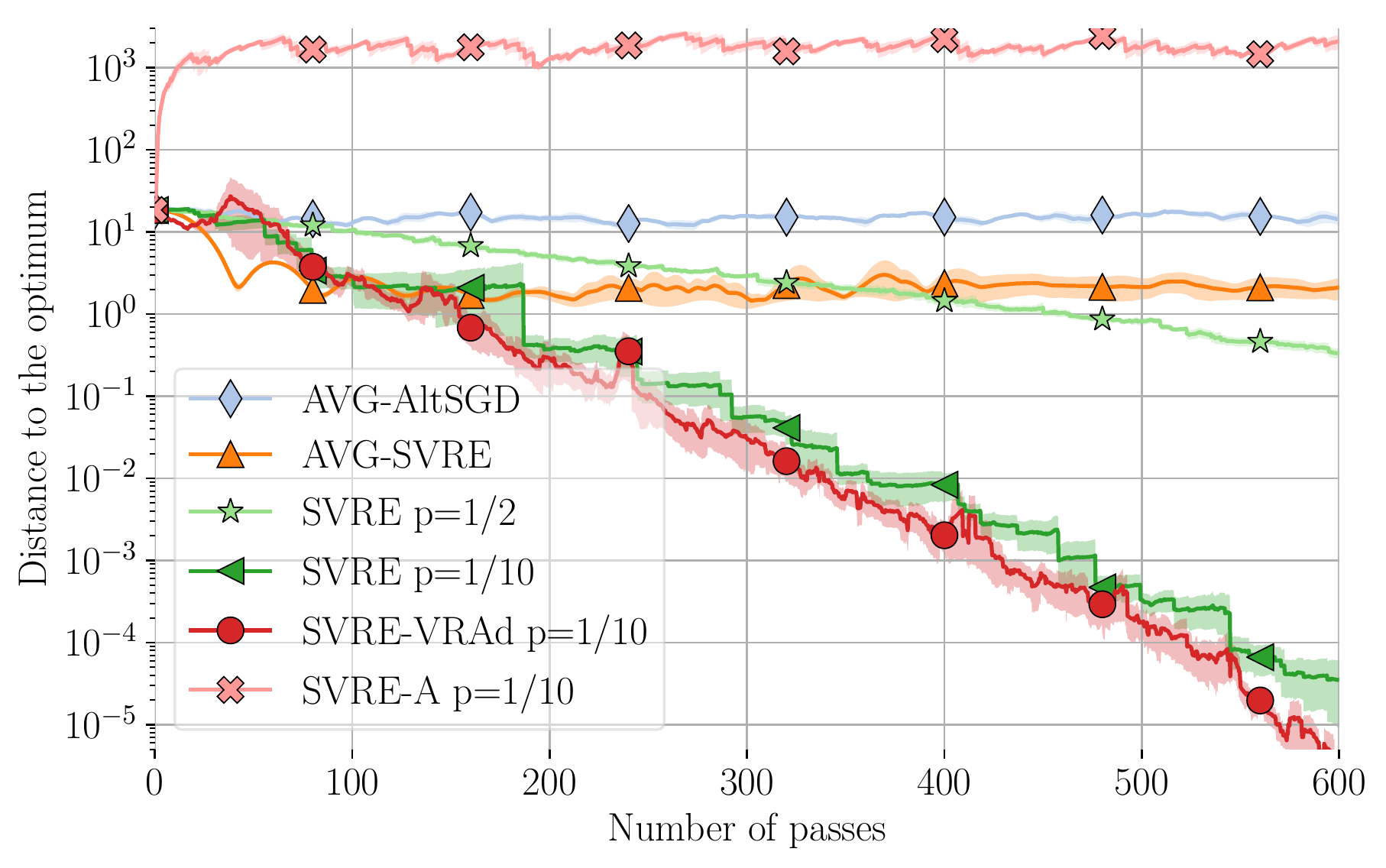}
				\caption{Distance to the optimum of~\eqref{eq:bilin_exp}, see \S~\ref{sub:motivating_example} for the experimental setup. 
																		}\label{fig:blinear}
			\end{figure}
		\end{minipage}
				\hfill
		\begin{minipage}{.48\linewidth}
			\begin{table}[H]
																				\begin{tabular}{@{}r@{\hskip .5em}c@{\hskip .4em}c@{\hskip .4em}c@{\hskip .4em}c@{}}\toprule 
					& SG-A  & SE-A & SVRE  & WS-SVRE  \\ \midrule    
					\\					CIFAR-10  &  $21.70$ 	&  $18.65$  & $23.56$ 				& $\mathbf{16.77}$ \\
					SVHN        &  $5.66$ 	& $5.14$  	& $\mathbf{4.81}$   & $\mathbf{4.88}$  \\
					\bottomrule
				\end{tabular}
				\caption{Best obtained FID scores for the different optimization methods using the \textit{deep} architectures (see Table~\ref{tab:resnet_arch}, \S~\ref{sec:deeper_resnet_arch}).
										WS--SVRE starts from the best obtained scores of SE--A.
					See \S~\ref{sec:impl-details} and \S~\ref{sec:additional-experiments} for implementation details and additional results, respectively. 
									}\label{tab:res_summary} 
			\end{table}
		\end{minipage}
	\end{minipage}
\end{figure}

\subsection{Results}\label{sec:mnist}

\textbf{Comparison on MNIST. \,\,} The \textbf{MNIST} common benchmark allowed for comparison with full-batch extragradient, as it is feasible to compute.
Fig.~\ref{fig:extra_mnist} depicts the IS metric while using either a stochastic, full-batch or variance reduced version of extragradient (see details of SVRE-GAN in \S~\ref{sub:extrasvrg_gan}). We always combine the stochastic baseline (SE) with \textit{Adam}, as  proposed by~\citet{gidel2019variational}.  In terms of number of parameter updates, SVRE performs similarly to BatchE--A (see Fig.~\ref{subfig-mnist_is_param_updates}, \S~\ref{sec:additional-experiments}).
Note that the latter requires significantly more computation: Fig.~\ref{subfig-mnist_is} depicts the IS metric using the number of mini-batch computations as x-axis (a surrogate for the wall-clock time, see below).
We observe that, as SE--A has slower per-iteration convergence rate, SVRE converges faster on this dataset. At the end of training, all methods reach similar performances (IS is above $8.5$, see Table~\ref{tab:res_summary_shallow}, \S~\ref{sec:additional-experiments}).

\textbf{Computational cost. \,\,}
The relative cost of one pass over the dataset for SVRE versus vanilla SGD is a factor of $5$: the full batch gradient is computed (on average) after one pass over the dataset, giving a slowdown of $2$; the factor $5$ takes into account the extra stochastic gradient computations for the variance reduction, as well as the extrapolation step overhead. However, as SVRE provides less noisy gradient, it may converge faster per iteration, compensating the extra per-update cost. 
Note that many computations can be done in parallel. In Fig.~\ref{subfig-mnist_is}, the x-axis uses an implementation-independent surrogate for wall-clock time that counts the number of mini-batch gradient computations. Note that some training methods for GANs require multiple discriminator updates per generator update, and we observed that to stabilize our baseline when using the \textit{deep} architectures it was required to use $1{:}5$ update ratio of $G{:}D$ (cf. \S~\ref{sec:results_deep_arch}), whereas for SVRE we used ratio of $1{:}1$ (Tab.~\ref{tab:res_summary} lists the results). 
\textbf{Second moment estimate and Adam. \,\,}
Fig.~\ref{subfig-var_g} depicts the averaged second-moment estimate for parameters of the Generator, where we observe that SVRE effectively reduces it over the iterations. The reduction of these values may be the reason why Adam combined with SVRE performs poorly (as these values appear in the denominator, see \S~\ref{sub:practical_aspect}).
To our knowledge, SVRE is the first optimization method with a constant step size that has worked empirically for GANs on non-trivial datasets.

\textbf{Comparison on real-world datasets. \,\,}
In Fig.~\ref{subfig-fid_svhn}, we compare SVRE with the SE--A baseline on \textbf{SVHN}, using \textit{shallow} architectures. 
We observe that although SE--A in some experiments obtains better performances in the early iterations, SVRE allows for obtaining improved \emph{final} performances. 
Tab.~\ref{tab:res_summary} summarizes the results on \textbf{CIFAR-10} and \textbf{SVHN} with \textit{deep} architectures.
We observe that, with deeper architectures, SE--A is notably more unstable, as training collapsed in $100$\% of the experiments. To obtain satisfying results for SE--A, we used various techniques such as a schedule of the learning rate and different update ratios (see \S~\ref{sec:results_deep_arch}). On the other hand, SVRE \emph{did not collapse in any of the experiments} but took longer time to converge compared to SE--A.
Interestingly, although WS--SVRE starts from an iterate point after which the baseline diverges, it continues to improve the obtained FID score and does not diverge.
See \S~\ref{sec:additional-experiments} for additional experiments.

\section{Related work}
\label{sub:related_work}

Surprisingly, there exist only a few works on variance reduction methods for monotone operators, namely from \cite{palaniappan2016stochastic} and \cite{davis2016smart}. 
The latter requires a co-coercivity assumption on the operator and thus only convex optimization is considered. 
Our work provides a new way to use variance reduction for monotone operators, using the extragradient method~\citep{korpelevich1976extragradient}. 
Recently, \citet{iusem2017extragradient} proposed an extragradient method with variance reduction for an \emph{infinite sum} of operators. 
The authors use mini-batches of growing size in order to reduce the variance of their algorithm and to converge with a constant step-size. However, this approach is prohibitively expensive in our application. Moreover, \citeauthor{iusem2017extragradient} are not using the SAGA/SVRG style of updates exploiting the finite sum formulation, leading to sublinear convergence rate, while our method benefits from a linear convergence rate exploiting the finite sum assumption.

\citet{daskalakis2017training} proposed a method called Optimistic-Adam inspired by game theory. This method is closely related to extragradient, with slightly different update scheme. More recently, \citet{gidel2019variational} proposed to use extragradient to train GANs, introducing a method called ExtraAdam. This method outperformed Optimistic-Adam when trained on CIFAR-10. Our work is also an attempt to find principled ways to train GANs. Considering that the game aspect is better handled by the extragradient method, we focus on the optimization issues arising from the noise in the training procedure, a disregarded potential issue in GAN training.

In the context of deep learning, despite some very interesting theoretical results on non-convex minimization~\citep{reddi2016stochastic,allen2016variance}, the effectiveness of variance reduced methods is still an open question, and a recent technical report by~\citet{defazio2018ineffectiveness} provides negative empirical results on the variance reduction aspect. 
In addition, two recent large scale studies showed that increased batch size has:
\begin{enumerate*}[series = tobecont, itemjoin = \quad, label=(\roman*)]
\item only marginal impact on \emph{single objective training}~\citep{shallue2018measuring} and
\item a surprisingly large performance improvement on \emph{GAN training}~\citep{brock2018large}.
\end{enumerate*}
In our work, we are able to show positive results for variance reduction in a real-world deep learning setting. This unexpected difference seems to confirm the remarkable discrepancy, that remains poorly understood, between multi-objective optimization and standard minimization.

\section{Discussion}

Motivated by a simple bilinear game optimization problem where stochasticity provably breaks the convergence of previous stochastic methods, we proposed the novel SVRE algorithm that combines  SVRG with the extragradient method for optimizing games. On the theory side, SVRE improves upon the previous best results for strongly-convex games, whereas empirically, it is the only method that converges for our stochastic bilinear game counter-example. 

We empirically observed that SVRE for GAN training obtained convergence speed similar to Batch-Extragradient on MNIST, while the latter is computationally infeasible for large datasets.
For shallow architectures, SVRE matched or improved over baselines on all four datasets. 
Our experiments with deeper architectures show that SVRE is notably more stable with respect to hyperparameter choice. Moreover, while its stochastic counterpart diverged in all our experiments, SVRE did not. 
However, we observed that SVRE took more iterations to converge when using deeper architectures, though notably, we were using constant step-sizes, unlike the baselines which required Adam. As adaptive step-sizes often provide significant improvements, developing such an appropriate version for SVRE is a promising direction for future work. In the meantime, the stability of SVRE suggests a practical use case for GANs as warm-starting it just before the baseline diverges, and running it for further improvements, as demonstrated with the WS--SVRE method in our experiments.

\section*{Acknowledgements}
This research was partially supported by the Canada CIFAR AI Chair Program, the Canada Excellence Research Chair in “Data Science for Realtime Decision-making”, by the NSERC Discovery Grant RGPIN-2017-06936, by the Hasler Foundation through the
MEMUDE project, and by a Google Focused Research Award. Authors would like to thank Compute Canada for providing the GPUs used for this research. TC would like to thank Sebastian Stich and Martin Jaggi, and GG and TC would like to thank Hugo Berard for helpful discussions.

\bibliography{svre}
\bibliographystyle{abbrvnat}

\newpage 

\onecolumn
\appendix

\newgeometry{}
\section{Noise in games}\label{sec:illustration}

\subsection{Why is convergence of the last iterate preferable?}
\label{sub:why_is_convergence_of_last_iterate_preferable_}

In light of Theorem~\ref{thm:diverge}, the behavior of the iterates on the unconstrained version of~\eqref{eq:problem_eg_fails} ($\epsilon =0$):
\begin{equation}\label{eq:problem_eg_fails_bounded}
  \min_{\vtheta \in \Theta} \max_{\vphi \in \Phi}  \frac{1}{n}\sum_{i=1}^n \vtheta^\top \mA_{i} \vphi
  \quad
  \text{where} \quad [\mA_{i}]_{kl} = 1 \;  \text{if} \; k=l=i \; \text{and} \; 0 \; \text{otherwise}.
\end{equation}
where $\Theta$ and $\Phi$ are compact and convex sets,
 is the following: they will diverge until they reach the boundary of $\Theta$ and $\Phi$ and then they will start to turn around the Nash equilibrium of~\eqref{eq:problem_eg_fails_bounded} lying on these boundaries. Using convexity properties, we can then show that the averaged iterates will converge to the Nash equilibrium of the problem. However, with an arbitrary large domain, this convergence rate may be arbitrary slow (since it depends on the diameter of the domain).

Moreover, this behavior might be even more problematic in a non-convex framework because even if by chance we initialize close to the Nash equilibrium, we would get away from it and we cannot rely on convexity to expect the average of the iterates to converge. 

Consequently, we would like optimization algorithms generating iterates that \emph{stay close to the Nash equilibrium}.

\section{Definitions and Lemmas} 
\label{sec:definitions_and_lemmas}

\subsection{Smoothness and Monotonicity of the operator} 
\label{sub:smoothness_and_monotonicity_of_the_operator}

Another important property used is the Lipschitzness of an operator. 
\begin{definition}
A mapping $F: \RR^p \to \RR^d$ is said to be $L$-Lipschitz if,
\begin{equation}
  \|F(\vomega) - F(\vomega')\|_2\leq L\|\vomega-\vomega'\|_2 \,, \quad \forall \vomega,\vomega' \in \Omega\,.
\end{equation}
\end{definition}

\begin{definition}
A differentiable function $f : \Omega \to \RR$ is said to be \emph{$\mu$-strongly convex} if 
\begin{equation}
  f(\vomega) \geq f(\vomega') + \nabla f(\vomega')^\top(\vomega-\vomega') + \frac{\mu}{2}\|\vomega-\vomega'\|_2^2\, \quad \forall \vomega,\vomega' \in \Omega \,.
\end{equation}
\end{definition}
\begin{definition}\label{def:convex-concave} A function $(\vtheta,\vphi) \mapsto \LL(\vtheta,\vphi)$ is said convex-concave if $\LL(\cdot,\vphi)$ is convex for all $\vphi \in \Phi$ and $\LL(\vtheta,\cdot)$ is concave for all $\vtheta \in \Theta$. An $\LL$ is said to be $\mu$-strongly convex concave if $(\vtheta,\vphi) \mapsto \LL(\vtheta,\vphi) - \frac{\mu}{2}\|\vtheta\|_2^2 + \frac{\mu}{2}\|\vphi\|_2^2$ is convex concave.
\end{definition}

\begin{definition}\label{def:strong_monotone_app}For $\mu_\vtheta,\mu_\vphi >0$, an operator $F: \vomega \mapsto (F_\vtheta(\vomega),F_\vphi(\vomega)) \in \R^{d+p}$ is said to be $(\mu_\vtheta,\mu_\vphi)$-strongly monotone if $\,\forall\, \vomega,  \vomega' \in \Omega \subset \R^{p+d}$,
\begin{equation} \notag
  (F(\vomega)- F(\vomega'))^\top (\vomega - \vomega') \geq \mu_\vtheta \|\vtheta -\vtheta'\|^2 + \mu_\vphi \|\vphi -\vphi'\|^2\,.
\end{equation}
where we noted $\,\vomega := (\vtheta,\vphi) \in \R^{d+p}$.
\end{definition}

\begin{definition}
 An operator $F: (\vomega),\in \R^{d}$ is said to be $\ell$-cocoercive, if for all $\vomega,  \vomega' \in \Omega$ we have
\begin{equation} \label{eq:weighted_norm}
 \|F(\vomega) -F(\vomega')\|^2 \leq 
  \ell(F(\vomega)- F(\vomega'))^\top (\vomega - \vomega') \,.
\end{equation} 
\end{definition}

\begin{proposition}[Folklore]
A $L$-Lipschitz and $\mu$-strongly monotone operator is $L^2/\mu$-cocoercive
\end{proposition}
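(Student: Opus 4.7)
The plan is to chain the two hypotheses directly: use Lipschitzness to pass from $\|F(\vomega)-F(\vomega')\|^2$ to $\|\vomega-\vomega'\|^2$, then use strong monotonicity in reverse to pass from $\|\vomega-\vomega'\|^2$ to the inner product $(F(\vomega)-F(\vomega'))^\top(\vomega-\vomega')$. This gives the target cocoercivity inequality with constant $L^2/\mu$ in a single step.

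Concretely, I would fix arbitrary $\vomega,\vomega'\in\Omega$. First, by the $L$-Lipschitz assumption,
\begin{equation*}
\|F(\vomega)-F(\vomega')\|^2 \leq L^2\,\|\vomega-\vomega'\|^2.
\end{equation*}
Second, since $F$ is $\mu$-strongly monotone with $\mu>0$, rearranging the strong monotonicity inequality yields
\begin{equation*}
\|\vomega-\vomega'\|^2 \leq \tfrac{1}{\mu}\,(F(\vomega)-F(\vomega'))^\top(\vomega-\vomega').
\end{equation*}
Multiplying the second bound by $L^2$ and combining with the first gives the desired cocoercivity with constant $\ell = L^2/\mu$.

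There is essentially no obstacle here; the only thing worth noting is that strong monotonicity requires $\mu>0$ so the division is legitimate, and that the argument uses no structural hypothesis on $F$ beyond the two assumed inequalities (in particular it does not require $F$ to be a gradient, so it genuinely applies to the game operator~\eqref{eq:game_operator}). Since the result is standard folklore, I would present the proof as a two-line derivation rather than invoke any external machinery.
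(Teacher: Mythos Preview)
Your proof is correct and is essentially identical to the paper's own proof: both chain the Lipschitz bound $\|F(\vomega)-F(\vomega')\|^2 \leq L^2\|\vomega-\vomega'\|^2$ with the rearranged strong monotonicity inequality $\|\vomega-\vomega'\|^2 \leq \tfrac{1}{\mu}(F(\vomega)-F(\vomega'))^\top(\vomega-\vomega')$ to obtain the cocoercivity constant $L^2/\mu$.
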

\begin{proof} By applying lipschitzness and strong monotonicity,
\begin{equation}
    \|F(\vomega) -F(\vomega')\|^2 \leq  L^2 \|\vomega -\vomega'\|^2 \leq L^2/\mu (F(\vomega) -F(\vomega'))^\top(\vomega -\vomega') 
\end{equation}
\end{proof}

\begin{proposition}\label{prop:coco}
If $F(\vomega)= (\nabla f(\vtheta) + M\vphi, \nabla g(\vphi) - M^\top \vtheta)$, where $f$ and $g$ are $\mu$-strongly convex and $L$ smooth, then $\|M\|^2 = O(\mu L)$ is a sufficient condition for $F$ to be $\ell$-cocoercive with $\ell = O(L)$
\end{proposition}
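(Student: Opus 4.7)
The plan is to directly verify the cocoercivity inequality by computing both sides explicitly. Write $\Delta\vtheta := \vtheta-\vtheta'$, $\Delta\vphi := \vphi-\vphi'$, $\va := \nabla f(\vtheta)-\nabla f(\vtheta')$, $\vb := \nabla g(\vphi)-\nabla g(\vphi')$, so that
$$ F(\vomega)-F(\vomega') = (\va + M\Delta\vphi,\; \vb - M^\top\Delta\vtheta). $$
The first thing I would check is what happens with the inner product on the right-hand side. Expanding, the two cross terms $\Delta\vtheta^\top M \Delta\vphi$ cancel each other (they come with opposite signs because of the skew-symmetric coupling), leaving
$$ (F(\vomega)-F(\vomega'))^\top(\vomega-\vomega') \;=\; \va^\top\Delta\vtheta + \vb^\top\Delta\vphi. $$
This cancellation is the crucial structural observation that makes cocoercivity attainable.

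For the left-hand side, I would use the trivial bound $\|x+y\|^2\leq 2\|x\|^2+2\|y\|^2$ to obtain
$$ \|F(\vomega)-F(\vomega')\|^2 \;\leq\; 2\|\va\|^2 + 2\|\vb\|^2 + 2\|M\|^2\bigl(\|\Delta\vphi\|^2+\|\Delta\vtheta\|^2\bigr). $$
Now I would invoke two standard facts. First, by Baillon--Haddad, since $f$ is convex and $L$-smooth, $\nabla f$ is $(1/L)$-cocoercive, i.e.\ $\|\va\|^2 \leq L\,\va^\top\Delta\vtheta$, and similarly $\|\vb\|^2 \leq L\,\vb^\top\Delta\vphi$. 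Second, by $\mu$-strong convexity of $f$ and $g$, $\|\Delta\vtheta\|^2 \leq \mu^{-1}\va^\top\Delta\vtheta$ and $\|\Delta\vphi\|^2 \leq \mu^{-1}\vb^\top\Delta\vphi$.

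Substituting these four estimates yields
$$ \|F(\vomega)-F(\vomega')\|^2 \;\leq\; \Bigl(2L + \tfrac{2\|M\|^2}{\mu}\Bigr)\bigl(\va^\top\Delta\vtheta + \vb^\top\Delta\vphi\bigr), $$
so $F$ is $\ell$-cocoercive with $\ell = 2L + 2\|M\|^2/\mu$. Under the hypothesis $\|M\|^2 = O(\mu L)$, the second term is $O(L)$, giving $\ell = O(L)$ as claimed. There is no real obstacle here; the only subtle point is noticing that the skew-symmetric coupling makes the cross terms vanish in the inner product while contributing only a harmless $\|M\|^2\|\Delta\vomega\|^2$ to the squared norm, and that Baillon--Haddad (rather than the generic $L^2/\mu$ bound for strongly monotone Lipschitz operators) is what keeps the diagonal contribution at $O(L)$ instead of $O(L^2/\mu)$.
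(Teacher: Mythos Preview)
Your proof is correct, and in fact cleaner than the paper's. The paper takes a different decomposition: it writes $F = F_{\mathrm{grad}} + F_{\mathrm{mon}}$ with $F_{\mathrm{grad}}(\vomega) := (\nabla f(\vtheta)-\mu\vtheta,\nabla g(\vphi)-\mu\vphi)$ and $F_{\mathrm{mon}}(\vomega) := (M\vphi+\mu\vtheta,-M^\top\vtheta+\mu\vphi)$, then bounds each piece separately---Baillon--Haddad gives $(L+\mu)$-cocoercivity for $F_{\mathrm{grad}}$, while the generic ``$L$-Lipschitz, $\mu$-strongly monotone $\Rightarrow$ $L^2/\mu$-cocoercive'' proposition handles $F_{\mathrm{mon}}$ with constant $(\|M\|+\mu)^2/\mu$. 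The artificial $\pm\mu\vomega$ shift is needed precisely because the raw bilinear part is not strongly monotone, so the folklore proposition would not apply to it directly.

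You sidestep this by observing that the skew-symmetric coupling contributes nothing to the inner product $(F(\vomega)-F(\vomega'))^\top(\vomega-\vomega')$, so there is no need to make it strongly monotone; its contribution to the squared norm is then absorbed via the strong convexity of $f$ and $g$. This yields the explicit constant $\ell = 2L + 2\|M\|^2/\mu$, whereas the paper only states $\ell = O(L)$. Both routes ultimately rely on Baillon--Haddad for the gradient part; yours is more direct, while the paper's is slightly more modular (black-boxing the bilinear-plus-identity piece through a separate proposition).
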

\begin{proof}
We rewrite $F$ as the sum of the gradient of convex Lipschitz function $F_{grad}$ and a $L$-Lipschitz and $\mu$-strongly monotone operator $F_{mon}$:
\begin{equation}
    F_{grad}(\vomega) := (\nabla f(\vtheta)- \mu \vtheta , \nabla g(\vphi)-\mu \vphi)
    \quad \text{and} \quad F_{mon}:(M\vphi +\mu \vtheta,- M^\top \vtheta +\mu \vphi)
\end{equation}
Then
\begin{align}
    \|F(\vomega) -F(\vomega')\|^2 
    &\leq 2  \|F_{grad}(\vomega) -F_{grad}(\vomega')\|^2
    + 2 \|F_{mon}(\vomega) -F_{mon}(\vomega')\|^2\\
    & \leq 2 (L + \mu) (F_{grad}(\vomega) -F_{grad}(\vomega'))^\top(\vomega -\vomega') \\
    & \quad + 2 (\|M\|+\mu)^2/\mu (F_{mon}(\vomega) -F_{mon}(\vomega'))^\top(\vomega -\vomega') \\
     & = O(L) (F_{grad}(\vomega) -F_{grad}(\vomega'))^\top(\vomega -\vomega')  \\
     & \quad + O(L) (F_{mon}(\vomega) -F_{mon}(\vomega'))^\top(\vomega -\vomega') \\
     & = O(L) (F(\vomega) -F(\vomega'))^\top(\vomega -\vomega')  
\end{align}
where for the second inequality we used that a $(L+\mu)$-Lipschitz convex function is $(L+\mu)$-cocoercive and Proposition~\ref{prop:coco}.
\end{proof}

\section{Proof of Theorems} 
\label{sec:proof_of_theorems}

\subsection{Proof of Theorem~\ref{thm:diverge}}
\label{sub:proof_of_theorem_thm:diverges}

\proof
We consider the following stochastic optimization problem,
\begin{equation}\label{eq:problem_eg_fails}
   \frac{1}{n}  \sum_{i=1}^n \frac{\epsilon}{2} \theta_i^2 +\bm{\theta}^\top \bm{A}_{i} \bm{\varphi} - \frac{\epsilon}{2}\varphi_i^2  
   = \frac{1}{n}  \sum_{i=1}^n \frac{\epsilon}{2} \|\bm{A}_{i}\bm\theta\|^2 +\bm{\theta}^\top \bm{A}_{i} \bm{\varphi} - \frac{\epsilon}{2}\|\bm{A}_{i}\bm\varphi\|^2  
\end{equation}
where $[\mA_{i}]_{kl} = 1$ if $k=l=i$ and $0$ otherwise. Note that $(\mA_{i})^\top = \mA_{i}$ for $1\leq i \leq n$.
Let us consider the extragradient method where to compute an unbiased estimator of the gradients at $(\vtheta,\vphi)$ we sample $i \in \{1,\ldots,n\}$ and use $[\mA_{i} \vtheta ,\, \mA_{i}\vphi]$ as estimator of the vector flow.

In this proof we note, $\mA_{I} := \sum_{i\in I} \mA_{i}$ and $\vtheta^{(I)}$ the vector such that $[\vtheta^{(I)}]_i= [\vtheta]_i$ if $i \in I$ and $0$ otherwise.  Note that $\mA_{I} \vtheta = \vtheta^{(I)}$ and that $\mA_{I} \mA_{J} =\mA_{I \cap J}$.

Thus the extragradient update rule can be noted as 
\begin{equation}
\left\{
\begin{aligned}
  &\vtheta_{t+1} = (1-\eta \mA_{I} \epsilon) \vtheta_t - \eta \mA_{I} ( (1-\eta \mA_{J} \epsilon)\vphi_t + \eta \mA_{J} \vtheta_t)  \\
  &\vphi_{t+1} = (1-\eta \mA_{I} \epsilon)\vphi_t +\eta \mA_{I} ((1-\eta\mA_{J}\epsilon)\vtheta_t - \eta \mA_{J}\vphi_t)
\end{aligned}
\right.
\end{equation}
\\where $I$ is the mini-batch sampled (without replacement) for the update and $J$ the mini-batch sampled (without replacement) for the extrapolation.

We can thus notice that, when $I \cap J = \emptyset$, we have 
\begin{equation}\label{eq:proof_grad}
  \left\{
  \begin{aligned}
  \vtheta_{t+1} &= \vtheta_t - \eta\epsilon \vtheta^{(I)}_t  - \eta  \vphi^{(I)}_t  \\
  \vphi_{t+1} &= \vphi_t - \eta\epsilon  \vphi^{(I)}_t  +\eta \vtheta^{(I)}_t  \,,
  \end{aligned}
\right.
\end{equation}
and otherwise,
\begin{equation}\label{eq:proof_extra_grad}
   \left\{
  \begin{aligned}
  \vtheta_{t+1} &= \vtheta_t - \eta\epsilon \vtheta^{(I)}_t  - \eta \vphi^{(I)}_t  - \eta^2 (\vtheta^{(I \cap J)}_t - \epsilon \vphi^{(I \cap J)}_t) \\
  \vphi_{t+1} &= \vphi_t - \eta\epsilon  \vphi^{(I)}_t + \eta \vtheta^{(I)}_t - \eta^2(\vphi^{(I \cap J)}_t + \epsilon \vtheta^{(I \cap J)}_t)  \,.
  \end{aligned}
\right.
\end{equation}
The intuition is that, on one hand, when $I \cap J = \emptyset$ (which happens with high probability when $|I|<<n$, e.g., when $|I|=1$, $\mathbb{P}(I \cap J = \emptyset) = 1 - 1/n$), the algorithm performs an update that get away from the Nash equilibrium when $2\epsilon\geq\eta$:
\begin{equation}
\eqref{eq:proof_grad}
\; 
\Rightarrow
\;
N_{t+1}= N_t +(\eta^2\epsilon^2 +\eta^2 - 2\eta \epsilon) N_t^{(I)} \,,
\end{equation}
where $N_t := \|\vtheta_{t}\|^2 + \|\vphi_{t}\|^2$ and $N_t^{(I)} := \|\vtheta_{t}^{(I)}\|^2 + \|\vphi_{t}^{(I)}\|^2$. On the other hand,
The updates that provide improvement only happen when $I\cap J$ is large (which happen with low probability, e.g., when $|I|=1$, $\mathbb{P}(I \cap J \neq \emptyset) =1/n$):
\begin{equation}\label{eq:proof_extra_grad_2}
  \eqref{eq:proof_extra_grad}
\;
\Rightarrow
\;
N_{t+1}= N_t -  N_t^{(I)}(2\eta \epsilon -  \eta^2(1+\epsilon^2)) - N_t^{(I\cap J)}(  2\eta^2 - \eta^4(1+\epsilon^2)
\end{equation}
Conditioning on $\vtheta_t$ and $\vphi_t$, we get that 
\begin{equation}
  \E[N_t^{(I\cap J)}|\vtheta_t,\vphi_t] = \sum_{i=1}^n \mathbb{P}(i \in I\cap J)([\vtheta_t]_i^2 + [\vphi_t]^2_i) 
  \quad \text{and} \quad  \mathbb{P}(i \in I\cap J) =\mathbb{P}(i \in I) \mathbb{P}(i \in J)=  \frac{|I|^2}{n^2} \,.
\end{equation}
Leading to,
\begin{equation}
  \E[N_t^{(I\cap J)}|\vtheta_t,\vphi_t] = \frac{|I|^2}{n^2} \sum_{i=1}^n([\vtheta_t]_i^2 + [\vphi_t]^2_i) =  \frac{|I|^2}{n^2} N_t
  \quad \text{and} \quad  \E[N_t^{(I)}|\vtheta_t,\vphi_t] = \frac{|I|}{n}N_t \,.
\end{equation}
Plugging these expectations in~\eqref{eq:proof_extra_grad_2}, we get that,
\begin{align}
 \mathbb E[N_{t+1}] 
 &= \left(1 -  \tfrac{|I|}{n}(2\eta \epsilon -  \eta^2(1+\epsilon^2)) - \tfrac{|I|^2}{n^2}(  2\eta^2 - \eta^4(1+\epsilon^2))\right) \mathbb E[N_t] \,.
\end{align}
Consequently for $\eta<\epsilon$ we get,
\begin{equation}
  \E[N_{t+1}] \geq \left(1 - 2\eta^2 \frac{|I|^2}{n^2} +  \eta^2\frac{|I|}{n}\right) \E[N_t] \,.
\end{equation}

To sum-up, if $|I|$ is not large enough (more precisely if $2|I|\leq n$), we have the geometric divergence of the quantity $\E[N_t] := \E[\|\vtheta_{t}\|^2 + \|\vphi_{t}\|^2]$ for any $\eta \geq \epsilon$.
\endproof

\subsection{Proof of Theorem~\ref{thm:extra-svrg}} 
\label{sub:proof_of_theorem_3}

\paragraph{Setting of the Proof.} 
\label{par:setting_of_the_proof_}
We will prove a slightly more general result than Theorem~\ref{thm:extra-svrg}. We will work in the context of monotone operator. Let us consider the \emph{general} extrapolation update rule,
\begin{equation}
\label{eq:app_extragradient}
\left\{
\begin{aligned}
  \text{Extrapolation:} \quad
  &\vomega_{t+\frac{1}{2} } = \vomega_t - \eta_t \vg_t \\
  \text{Update:} \quad 
  &\vomega_{t+1} = \vomega_t - \eta_t \vg_{t+1/2} \,,
\end{aligned}
\right.
\end{equation}
where $\vg_t$ depends on $\vomega_t$ and $\vg_{t+1/2}$ depends on $\vomega_{t+1/2}$. For instance, $\vg_t$ can either be $F(\vomega_t)$, $F_{i_t}(\vomega_t)$ or the SVRG estimate defined in~\eqref{def_g_i}.

This update rule generalizes~\eqref{eq:extragradient} for 2-player games~\eqref{eq:two_player_games} and ExtraSVRG (Alg.~\ref{alg:svrg_gan}).

Let us first state a lemma standard in convex analysis (see for instance~\citep{boyd2004convex}),
\begin{lemma}\label{lemma:ineg} Let $\vomega \in \Omega$ and $\vomega^+ \defas P_\Omega(\vomega + \uu)$ 
then for all $\vomega' \in \Omega$ we have,
\begin{equation}
  \|\vomega^+-\vomega'\|_2^2 \leq \|\vomega-\vomega'\|_2^2 + 2 \uu^\top (\vomega^+-\vomega') - \|\vomega^+ -\vomega\|_2^2\,.
\end{equation}
\end{lemma}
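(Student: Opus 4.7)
The claim is the standard ``projection non-expansiveness with a shift'' bound, and the natural route is to combine the variational characterization of the Euclidean projection with a three-point identity. First I would recall that, since $\Omega$ is convex, the projection $\vomega^+ = P_\Omega(\vomega+\uu)$ is characterized by its first-order optimality condition, namely
\[
  (\vomega+\uu-\vomega^+)^\top (\vomega'-\vomega^+) \leq 0, \quad \forall \vomega' \in \Omega.
\]
This is equivalent to
\[
  \uu^\top(\vomega^+-\vomega') \geq (\vomega^+-\vomega)^\top(\vomega^+-\vomega').
\]

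Next I would expand the target squared norm using the three-point identity
\[
  \|\vomega^+-\vomega'\|_2^2 = \|\vomega-\vomega'\|_2^2 + \|\vomega^+-\vomega\|_2^2 + 2(\vomega^+-\vomega)^\top(\vomega-\vomega').
\]
Rearranging the inequality to be proved, what remains to show is exactly
\[
  2(\vomega^+-\vomega)^\top(\vomega-\vomega') + 2\|\vomega^+-\vomega\|_2^2 \leq 2\uu^\top(\vomega^+-\vomega'),
\]
and the left-hand side factors as $2(\vomega^+-\vomega)^\top(\vomega^+-\vomega')$, so the desired bound is equivalent to
\[
  (\vomega^+-\vomega)^\top(\vomega^+-\vomega') \leq \uu^\top(\vomega^+-\vomega'),
\]
which is precisely the projection inequality written above.

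\textbf{Main obstacle.} There is no genuine obstacle: this lemma is a standard tool in convex analysis, and the only care needed is bookkeeping the signs in the rearrangement so as to match the variational characterization of the projection with the correct orientation. One small subtlety worth noting in the write-up is that the identity $\vomega^+ = P_\Omega(\vomega+\uu)$ uses $\Omega$ being closed and convex; in the unconstrained case $\Omega = \R^{p+d}$ relevant for the paper, $P_\Omega$ is the identity and the inequality collapses to the three-point equality, which makes later applications in the extragradient analysis immediate.
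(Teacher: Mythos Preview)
Your proposal is correct and follows essentially the same approach as the paper: both arguments expand $\|\vomega^+-\vomega'\|_2^2$ via the three-point identity and then invoke the variational characterization $(\vomega^+-(\vomega+\uu))^\top(\vomega^+-\vomega')\leq 0$ of the Euclidean projection to control the cross term. The only difference is the order of presentation (you state the projection inequality first, the paper applies it last), which is immaterial.
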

\proof[\textbf{Proof of Lemma~\ref{lemma:ineg}}] We start by simply developing,
\begin{align*}
  \|\vomega^+-\vomega'\|_2^2 = \|(\vomega^+- \vomega) + (\vomega -\vomega')\|_2^2
  &= \|\vomega-\vomega'\|_2^2 + 2 (\vomega^+-\vomega)^\top (\vomega - \vomega') + \|\vomega^+ -\vomega\|_2^2 \\
  &= \|\vomega-\vomega'\|_2^2 + 2 (\vomega^+-\vomega)^\top (\vomega^+ - \vomega') - \|\vomega^+ -\vomega\|_2^2 \, .
\end{align*}
Then since $\vomega^+$ is the projection onto the convex set $\Omega$ of $\vomega + \uu$ we have that $(\vomega^+ - (\vomega + \uu))^\top(\vomega^+-\vomega') \leq 0 \,, \; \forall\, \vomega'\in \Omega$, leading to the result of the Lemma. \endproof

\begin{lemma}\label{lemma:strong_monotone} If $F$ is $(\mu_\vtheta,\mu_\vphi)$-strongly monotone for any $\vomega, \vomega', \vomega'' \in \Omega$ we have,
\begin{equation}
  \mu_\vtheta \left( \|\vtheta-\vtheta''\|_2^2 - 2\|\vtheta'-\vtheta\|_2^2 \right) + \mu_\vphi \left( \|\vphi-\vphi''\|_2^2 - 2\|\vphi'-\vphi\|_2^2 \right) \leq 2(F(\vomega')- F(\vomega''))^\top (\vomega'-\vomega'')
  \,, 
\end{equation}
where we noted $\vomega := (\vtheta,\vphi)$.
\end{lemma}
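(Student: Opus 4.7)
}
The plan is to bound the right-hand side from below using strong monotonicity applied at the pair $(\vomega',\vomega'')$, and to bound the left-hand side from above by massaging $\|\vtheta-\vtheta''\|^2$ and $\|\vphi-\vphi''\|^2$ with a simple quadratic inequality so that the resulting bounds match up to a factor of $2$ with what strong monotonicity gives.

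First I would apply Definition~\ref{def:strong_monotone_app} directly to $\vomega'$ and $\vomega''$:
\begin{equation*}
2(F(\vomega')-F(\vomega''))^\top(\vomega'-\vomega'')
\;\geq\; 2\mu_\vtheta\|\vtheta'-\vtheta''\|_2^2 + 2\mu_\vphi\|\vphi'-\vphi''\|_2^2.
\end{equation*}
Thus it suffices to prove the two scalar inequalities
\begin{equation*}
\|\vtheta-\vtheta''\|_2^2 - 2\|\vtheta'-\vtheta\|_2^2 \;\leq\; 2\|\vtheta'-\vtheta''\|_2^2,
\qquad
\|\vphi-\vphi''\|_2^2 - 2\|\vphi'-\vphi\|_2^2 \;\leq\; 2\|\vphi'-\vphi''\|_2^2,
\end{equation*}
and then multiply them by $\mu_\vtheta$ and $\mu_\vphi$ respectively and add.

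The second step is the elementary inequality $\|a+b\|_2^2 \leq 2\|a\|_2^2 + 2\|b\|_2^2$ (a consequence of $2a^\top b \leq \|a\|^2 + \|b\|^2$). Writing $\vtheta-\vtheta'' = (\vtheta-\vtheta') + (\vtheta'-\vtheta'')$ gives $\|\vtheta-\vtheta''\|_2^2 \leq 2\|\vtheta-\vtheta'\|_2^2 + 2\|\vtheta'-\vtheta''\|_2^2$, which rearranges exactly to the required scalar bound; the same argument works for the $\vphi$-block. Combining the two scaled inequalities with the strong-monotonicity bound finishes the proof.

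There is no real obstacle here: the lemma is just a ``three-point'' reformulation of strong monotonicity, and the only nontrivial step is the choice of the split $\vomega-\vomega'' = (\vomega-\vomega') + (\vomega'-\vomega'')$ together with the factor-of-two inequality $\|a+b\|^2 \leq 2\|a\|^2 + 2\|b\|^2$. The asymmetric coefficients $1$ and $-2$ on the left-hand side are tuned precisely so that this split closes the gap; any other split (or the tighter Young's inequality with a different parameter) would only change the constants.
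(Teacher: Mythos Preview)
Your proposal is correct and follows essentially the same route as the paper: apply strong monotonicity at $(\vomega',\vomega'')$ to get $2\mu_\vtheta\|\vtheta'-\vtheta''\|^2 + 2\mu_\vphi\|\vphi'-\vphi''\|^2$ on the right, then use the elementary inequality $\|\va-\va''\|^2 \leq 2\|\va-\va'\|^2 + 2\|\va'-\va''\|^2$ (equivalently $2\|\va'-\va''\|^2 \geq \|\va-\va''\|^2 - 2\|\va'-\va\|^2$) on each block to match the left-hand side.
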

\proof 
By $(\mu_\vtheta,\mu_\vphi)$-strong monotonicity,
\begin{align}
 2\mu_\vtheta  \|\vtheta'-\vtheta''\|_2^2 + 2\mu_\vphi  \|\vphi'-\vphi''\|_2^2
 &\leq  2(F(\vomega'')-F(\vomega''))^\top (\vomega'-\vomega'')
\end{align}
and then we use the inequality $2\|\va'-\va''\|_2^2 \geq  \|\va-\va''\|_2^2 - 2\|\va'-\va\|_2^2 $ to get the result claimed. 
\endproof

Using this update rule we can thus deduce the following lemma, 
the derivation of this lemma is very similar from the derivation of~\citet[Lemma~12.1.10]{harker1990finite}.
\begin{lemma}\label{lemma:4} Considering the update rule~\eqref{eq:app_extragradient}, we have for any $\vomega \in \Omega$ and any $t \geq 0$,
\begin{equation}
  2 \eta_t \vg_{t+1/2}^\top(\vomega_{t+1/2}-\vomega)
   \leq \|\vomega_t - \vomega\|_2^2 - 
  \|\vomega_{t+1}-\vomega\|_2^2  - \|\vomega_{t+1/2} -\vomega_t\|_2^2
  + \eta_t^2 \|\vg_{t}-\vg_{t+1/2}\|_2^2 \, .
\end{equation} 
\end{lemma}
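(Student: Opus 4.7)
The plan is to invoke Lemma~\ref{lemma:ineg} (the projection-onto-convex-set inequality) twice: once for the update step and once for the extrapolation step, each with a carefully chosen reference point $\vomega'$. The identity $(\vomega_{t+1}-\vomega) = (\vomega_{t+1/2}-\vomega) + (\vomega_{t+1}-\vomega_{t+1/2})$ will then let us convert the natural conclusion of the update-step inequality (which features $\vomega_{t+1}$) into one featuring $\vomega_{t+1/2}$, at the cost of a cross-term that the extrapolation-step inequality will absorb.

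Concretely, I would first apply Lemma~\ref{lemma:ineg} with $\vomega \leftarrow \vomega_t$, $\uu \leftarrow -\eta_t \vg_{t+1/2}$, $\vomega^+ \leftarrow \vomega_{t+1}$, and $\vomega' \leftarrow \vomega$, to obtain
\begin{equation*}
2\eta_t \vg_{t+1/2}^\top(\vomega_{t+1}-\vomega) \leq \|\vomega_t-\vomega\|_2^2 - \|\vomega_{t+1}-\vomega\|_2^2 - \|\vomega_{t+1}-\vomega_t\|_2^2 .
\end{equation*}
Then I would split $\vomega_{t+1}-\vomega = (\vomega_{t+1/2}-\vomega) + (\vomega_{t+1}-\vomega_{t+1/2})$, so that the target inner product $\vg_{t+1/2}^\top(\vomega_{t+1/2}-\vomega)$ appears on the left, while a residual $2\eta_t \vg_{t+1/2}^\top(\vomega_{t+1/2}-\vomega_{t+1})$ needs to be controlled.

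To bound that residual, I would apply Lemma~\ref{lemma:ineg} a second time, now to the extrapolation step: take $\vomega \leftarrow \vomega_t$, $\uu \leftarrow -\eta_t \vg_t$, $\vomega^+ \leftarrow \vomega_{t+1/2}$, and $\vomega' \leftarrow \vomega_{t+1}$, yielding a bound on $2\eta_t \vg_t^\top(\vomega_{t+1/2}-\vomega_{t+1})$. The difference between this and the residual is the term $2\eta_t (\vg_{t+1/2}-\vg_t)^\top(\vomega_{t+1/2}-\vomega_{t+1})$, which by Young's inequality $2ab \leq a^2+b^2$ is at most $\eta_t^2\|\vg_t-\vg_{t+1/2}\|_2^2 + \|\vomega_{t+1/2}-\vomega_{t+1}\|_2^2$. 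The $+\|\vomega_{t+1/2}-\vomega_{t+1}\|_2^2$ term generated by Young's is precisely cancelled by the $-\|\vomega_{t+1/2}-\vomega_{t+1}\|_2^2$ that comes out of the second application of Lemma~\ref{lemma:ineg}, which is the main reason the argument works.

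Adding everything up, the cross terms $\pm\|\vomega_t-\vomega_{t+1}\|_2^2$ cancel, leaving exactly the claimed right-hand side. No step is really hard here; the only non-routine move is choosing $\vomega' = \vomega_{t+1}$ in the second application of Lemma~\ref{lemma:ineg} (rather than the usual $\vomega' = \vomega$), which is what makes the cancellation of $\|\vomega_{t+1/2}-\vomega_{t+1}\|_2^2$ possible and thereby produces a clean bound in terms of $\|\vg_t-\vg_{t+1/2}\|_2^2$ alone. This is the standard Korpelevich-style telescoping identity, and it is what subsequent lemmas in Section~\ref{sub:proof_of_theorem_3} will feed into to derive the SVRE convergence rate by combining with strong monotonicity (Lemma~\ref{lemma:strong_monotone}) and the variance bound on the SVRG estimator.
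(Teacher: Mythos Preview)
Your proposal is correct and follows essentially the same approach as the paper: two applications of Lemma~\ref{lemma:ineg} (with the same substitutions, including the key choice $\vomega'=\vomega_{t+1}$ in the second one), followed by Young's inequality on the cross term $2\eta_t(\vg_{t+1/2}-\vg_t)^\top(\vomega_{t+1/2}-\vomega_{t+1})$ so that $\|\vomega_{t+1/2}-\vomega_{t+1}\|_2^2$ cancels. The only cosmetic difference is that the paper sums the two projection inequalities first and then rearranges, whereas you isolate the target inner product immediately and treat the residual separately; the underlying computation is identical.
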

\proof 
By applying Lem.~\ref{lemma:ineg} for $(\vomega,\uu,\vomega^+,\vomega') = (\vomega_t,-\eta_t \vg_{t+1/2},\vomega_{t+1},\vomega)$ and $(\vomega,\uu,\vomega^+,\vomega') = (\vomega_{t},-\eta_t \vg_{t},\vomega_{t+1/2},\vomega_{t+1})$, we get,
\begin{equation}\label{eq:proof_alg_v1.2_1}
  \|\vomega_{t+1}-\vomega\|_2^2 
  \leq \|\vomega_t - \vomega\|_2^2 - 2 \eta_t \vg_{t+1/2}^\top(\vomega_{t+1}-\vomega) - \|\vomega_{t+1}-\vomega_t\|_2^2 \,,
\end{equation}
and
\begin{equation} \label{eq:proof_alg_v1.2_2}
  \|\vomega_{t+1/2} -\vomega_{t+1}\|_2^2 \leq \|\vomega_t-\vomega_{t+1}\|_2^2 - 2 \eta_t \vg_{t}^\top(\vomega_{t+1/2}-\vomega_{t+1}) - \|\vomega_{t+1/2} -\vomega_t\|_2^2\,.
\end{equation}
Summing \eqref{eq:proof_alg_v1.2_1} and~\eqref{eq:proof_alg_v1.2_2} we get,
\begin{align}
  \|\vomega_{t+1}-\vomega\|_2^2 
  &\leq \|\vomega_t - \vomega\|_2^2 - 2 \eta_t \vg_{t+1/2}^\top(\vomega_{t+1}-\vomega)\\
  &\quad - 2 \eta_t \vg_{t}^\top(\vomega_{t+1/2}-\vomega_{t+1})  - \|\vomega_{t+1/2} -\vomega_t\|_2^2 - \|\vomega_{t+1/2} -\vomega_{t+1}\|_2^2 \\
  & =  \|\vomega_t - \vomega\|_2^2 - 2 \eta_t \vg_{t+1/2}^\top(\vomega_{t+1/2}-\vomega)  - \|\vomega_{t+1/2} -\vomega_t\|_2^2 - \|\vomega_{t+1/2} -\vomega_{t+1}\|_2^2 \notag\\
  &\quad - 2 \eta_t (\vg_{t} -\vg_{t+1/2})^\top(\vomega_{t+1/2}-\vomega_{t+1}) \, .
\end{align}
Then, we can use Young's inequality $-2a^\top b \leq \|a\|_2^2 + \|b\|_2^2$ to get,
\begin{align}
  \|\vomega_{t+1}-\vomega\|_2^2 
  & \leq \|\vomega_t - \vomega\|_2^2 - 2 \eta_t \vg_{t+1/2}^\top(\vomega_{t+1/2}-\vomega) +  \eta_t^2 \|\vg_{t}-\vg_{t+1/2}\|_2^2 \notag\\
  & \quad + \|\vomega_{t+1/2}-\vomega_{t+1}\|_2^2
  - \|\vomega_{t+1/2} -\vomega_t\|_2^2 - \|\vomega_{t+1/2} -\vomega_{t+1}\|_2^2 \\
  & = \|\vomega_t - \vomega\|_2^2 - 2 \eta_t \vg_{t+1/2}^\top(\vomega_{t+1/2}-\vomega)+  \eta_t^2 \|\vg_{t}-\vg_{t+1/2}\|_2^2 - \|\vomega_{t+1/2} -\vomega_t\|_2^2   \,.
\end{align}
\endproof Note that if we would have set $\vg_t = \bm{0}$ and $\vg_{t+1/2}$ any estimate of the gradient at $\vomega_t$ we recover the standard lemma for gradient method.

Let us consider \emph{unbiased} estimates of the gradient,
\begin{equation}\label{def_g_i}
  \vg_i(\vomega) := \frac{1}{n\pi_i} \left(    F_{i}(\vomega) - \valpha_{i}\right) + \bar \valpha \,,
\end{equation}
where $\bar \valpha := \frac{1}{n} \sum_{j=1}^n \valpha_j$, 
the index $i$ are (potentially) non-uniformly sampled from $\{1,\ldots,n\}$ with replacement  according to $\vpi$ and $F(\vomega) := \frac{1}{n} \sum_{j=1}^n F_i(\vomega)$. Hence we have that $\E[\vg_i(\vomega)] = F(\vomega)$, where the expectation is taken with respect to the index $i$ sampled from $\vpi$.

We will consider a class of algorithm called \emph{uniform memorization algorithms} first introduced by~\citep{hofmann2015variance}. This class of algorithms describes a large subset of variance reduced algorithms taking advantage of the finite sum formulation such as SAGA~\citep{defazio2014saga}, SVRG~\citep{johnson2013accelerating} or $q$-SAGA and $\mathcal{N}$-SAGA~\citep{hofmann2015variance}. In this work, we will use a slightly more general definition of such algorithm in order to be able to handle extrapolation steps:
\begin{definition}[Extension of~\citep{hofmann2015variance}]\label{def:uniform_mem_alg}
A uniform $q$-memorization algorithm evolves iterates $(\vomega_t)$ according to~\eqref{eq:app_extragradient}, with $\vg_t$ defined in~\eqref{def_g_i} and selecting in each iteration $t$ a random index set $J_t$ of memory locations to update according to,
\begin{equation}
  \valpha_k^{(0)}:= F_k(\vomega_0) \;,\quad
    \valpha_k^{(t+1/2)} := \valpha_k^{(t)},\; \forall k \in \{1,\ldots,n\}
  \quad \text{and} \quad
     \valpha_k^{(t+1)} := \left\{ 
  \begin{aligned}
  F_k(\vomega_{t}) \quad &\text{if} \;\; k \in J_t \\
  \valpha_k^{(t)} \quad &\text{otherwise}.
  \end{aligned}
\right.
\end{equation}
such that any $k$ has the same probability $q/n$ to be updated, i.e., $P\{k\} = \sum_{J_t, k \in J_t} P(J_t) = q/n$, $\forall k \in \{1,\ldots,n\}.$
\end{definition}
In the case of SVRG, either $J_t = \emptyset$ or $J_t = \{1,\ldots,n\}$ (when we update the snapshot).

We have the following lemmas,
\begin{lemma}\label{lemma:g_t_g_t12}
For any  $t\geq 0$, if we consider a $q$-memorization algorithm we have
\begin{equation}
  \E[\|\vg_t - \vg_{t+1/2}\|^2 ] \leq 10\E[\|\tfrac{1}{n \pi_i}(F_i(\vomega^*) - \valpha^{(t)}_i) \|^2]+ 10\E[\|\tfrac{1}{n \pi_i}(F_i(\vomega^*) - F_i(\vomega_t) )\|^2] + 5  \bar L^2\E[\|\vomega_{t} -\vomega_{t+1/2}\|^2]\,. \notag
   \end{equation}
\end{lemma}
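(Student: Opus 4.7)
The plan is to express $\vg_t - \vg_{t+1/2}$ as an explicit sum of a few terms that each correspond to one of the three quantities on the right-hand side, and then apply Young's inequality (in the form $\|\sum_{i=1}^{5} a_i\|^2 \le 5 \sum_{i=1}^{5} \|a_i\|^2$) together with Lipschitzness.

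First, I would unwind the definition~\eqref{def_g_i} of the SVRG estimate. Since the $q$-memorization algorithm satisfies $\valpha_k^{(t+1/2)} = \valpha_k^{(t)}$ for all $k$, the ``mean'' correction $\bar{\valpha}$ cancels between $\vg_t$ and $\vg_{t+1/2}$. Writing $i_t$ and $i_{t+1/2}$ for the indices sampled (independently, both from $\pi$) to form $\vg_t$ and $\vg_{t+1/2}$, this leaves
\begin{equation*}
\vg_t - \vg_{t+1/2} = \tfrac{1}{n\pi_{i_t}}\bigl(F_{i_t}(\vomega_t) - \valpha_{i_t}^{(t)}\bigr) - \tfrac{1}{n\pi_{i_{t+1/2}}}\bigl(F_{i_{t+1/2}}(\vomega_{t+1/2}) - \valpha_{i_{t+1/2}}^{(t)}\bigr).
\end{equation*}

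Next, I would insert $\pm F_{i_t}(\vomega^*)$ into the first bracket and $\pm F_{i_{t+1/2}}(\vomega^*) \pm F_{i_{t+1/2}}(\vomega_t)$ into the second, obtaining a sum of exactly five terms: (1a) $\tfrac{1}{n\pi_{i_t}}(F_{i_t}(\vomega_t)-F_{i_t}(\vomega^*))$, (1b) $\tfrac{1}{n\pi_{i_t}}(F_{i_t}(\vomega^*)-\valpha_{i_t}^{(t)})$, (2a) $\tfrac{1}{n\pi_{i_{t+1/2}}}(F_{i_{t+1/2}}(\vomega_{t+1/2})-F_{i_{t+1/2}}(\vomega_t))$, (2b) $\tfrac{1}{n\pi_{i_{t+1/2}}}(F_{i_{t+1/2}}(\vomega_t)-F_{i_{t+1/2}}(\vomega^*))$, (2c) $\tfrac{1}{n\pi_{i_{t+1/2}}}(F_{i_{t+1/2}}(\vomega^*)-\valpha_{i_{t+1/2}}^{(t)})$. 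Then applying $\|\sum_{k=1}^{5}a_k\|^2\le 5\sum_k\|a_k\|^2$ and taking expectations, terms (1b) and (2c) each contribute $5\,\E[\|\tfrac{1}{n\pi_i}(F_i(\vomega^*)-\valpha_i^{(t)})\|^2]$, summing to the coefficient $10$ in the bound; and terms (1a) and (2b) each contribute $5\,\E[\|\tfrac{1}{n\pi_i}(F_i(\vomega^*)-F_i(\vomega_t))\|^2]$, giving the second coefficient $10$.

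The only remaining term is (2a), the ``shift'' from $\vomega_t$ to $\vomega_{t+1/2}$. Using $L_i$-Lipschitzness of $F_i$ (which follows from the $\ell_i$-cocoercivity assumption, since cocoercivity implies Lipschitzness with constant $\ell_i$) and conditioning on $\vomega_t,\vomega_{t+1/2}$,
\begin{equation*}
\E\Bigl[\bigl\|\tfrac{1}{n\pi_i}(F_i(\vomega_{t+1/2})-F_i(\vomega_t))\bigr\|^2 \,\Big|\, \vomega_t,\vomega_{t+1/2}\Bigr]
= \tfrac{1}{n^2}\sum_{i=1}^n \tfrac{1}{\pi_i}\|F_i(\vomega_{t+1/2})-F_i(\vomega_t)\|^2
\le \bar L^2 \|\vomega_{t+1/2}-\vomega_t\|^2,
\end{equation*}
with the implicit definition $\bar L^2 = \tfrac{1}{n}\sum_i \tfrac{L_i^2}{n\pi_i}$ analogous to~\eqref{eq:sampling}. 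Multiplied by $5$, this gives exactly the third term $5\bar L^2 \E[\|\vomega_t-\vomega_{t+1/2}\|^2]$, completing the claim.

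\textbf{Main obstacle.} The calculation is largely mechanical, so the only delicate point is making sure that the decomposition uses \emph{five} terms (and not four or six), since the advertised constants $10,10,5$ force this particular splitting. In particular one must insert the intermediate anchor $F_{i_{t+1/2}}(\vomega_t)$ in the second bracket (so that cocoercivity/Lipschitzness can be applied between $\vomega_t$ and $\vomega_{t+1/2}$), while being careful that each of the two ``distance-to-optimum'' residuals is represented by exactly one term with coefficient $5$, yielding the $10$ through the two symmetric copies.
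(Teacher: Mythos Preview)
Your proposal is correct and follows essentially the same approach as the paper: decompose $\vg_t-\vg_{t+1/2}$ into five pieces anchored at $F_i(\vomega^*)$ and $F_j(\vomega_t)$, apply the $k=5$ Young inequality, and use Lipschitzness for the shift term together with the independence of the update index $i_{t+1/2}$ from $\vomega_{t+1/2}$. The only (minor) difference is that you cancel the common correction $\bar\valpha^{(t)}$ upfront and work directly with $\valpha_i^{(t)}$, whereas the paper keeps the centered quantity $\bar\valpha_i^{(t)}:=\valpha_i^{(t)}-n\pi_i\bar\valpha^{(t)}$ throughout and then uses the variance identity $\E\big[\|\tfrac{1}{n\pi_i}(F_i(\vomega^*)-\bar\valpha_i^{(t)})\|^2\big]=\E\big[\|\tfrac{1}{n\pi_i}(F_i(\vomega^*)-\valpha_i^{(t)})\|^2\big]-\|\bar\valpha^{(t)}\|^2$ to drop the centering at the end; your route is slightly cleaner but otherwise equivalent.
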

\proof 
We use an extended version of Young's inequality: $\|\sum_{i=1}^k\va_i \|^2 \leq k \sum_{i=1}^k\|\va_i\|^2$,
\begin{align*}
  \|\sum_{i=1}^k\va_i \|^2 
  & = \sum_{i,j=1}^k\va_i^\top\va_j   \\
  & \leq \frac{1}{2}\sum_{i,j=1}^k\|\va_i\|^2 + \|\va_j\|^2 \\
  & = k \sum_{i=1}^k\|\va_i\|^2 \,,
\end{align*}
where we used that $2 \va^\top \vb \leq + \|\va\|^2 + \|\vb\|^2$.
We combine Young's inequality with the definition of $q$-memorization algorithm: $\vg_t = \frac{1}{n\pi_i}(F_i(\vomega_t) - \bar \valpha^{(t)}_{i})$ and $\vg_{t+1/2} =\frac{1}{n\pi_j}( F_j(\vomega_{t+1/2}) - \bar \valpha^{(t)}_{j} )$ to get (we omit the $t$ subscript for $i$ and $j$ and we note $\bar  \valpha^{(t)}_i := \valpha^{(t)}_i -  n\pi_i \bar \valpha^{(t)}$), 
\begin{align*}
  \|\vg_t - \vg_{t+1/2}\|^2
  & = \|\tfrac{1}{n \pi_i}(F_i(\vomega_t) -\bar  \valpha^{(t)}_i) - \tfrac{1}{n \pi_j}(F_j(\vomega_{t+1/2}) - \bar\valpha^{(t)}_j)\|^2 \\
  & = \|\tfrac{1}{n \pi_i}(F_i(\vomega_t) - \bar \valpha^{(t)}_i) +  \tfrac{1}{n \pi_j}(F_j(\vomega_{t}) - F_j(\vomega_{t+1/2})) +  \tfrac{1}{n \pi_j}(\bar \valpha^{(t)}_j - F_j(\vomega_{t}))\|^2 \\
\\\\   &\leq 5\E[\|\tfrac{1}{n \pi_i}(F_i(\vomega^*) - \bar \valpha^{(t)}_i)) \|^2]+ 5\E[\|\tfrac{1}{n \pi_j}(\bar \valpha^{(t)}_j - F_j(\vomega^*))\|^2] \\
  &  \quad + 5\E[\|\tfrac{1}{n \pi_i}(F_i(\vomega^*) - F_i(\vomega_t)) \|^2]+ 5\E[\|\tfrac{1}{n \pi_j}(F_j(\vomega^*) - F_j(\vomega_{t}))\|^2] \\
  & \quad + 5  \E[\|\tfrac{1}{n \pi_j}(F_j(\vomega_{t}) - F_j(\vomega_{t+1/2}))\|^2]
\end{align*}
Notice that since $i_t$ and $j_t$ are independently sampled from the same distribution we have
\begin{equation}
  \E[\tfrac{1}{n^2 \pi_{j_t}^2}\|F_{j_t}(\vomega^*) - \valpha^{(t)}_{j_t}\|^2] = \E[\tfrac{1}{n^2 \pi_{i_t}^2}\|F_{i_t}(\vomega^*) - \valpha^{(t)}_{i_t}\|^2]\,.
\end{equation}
Note that we have (using that $\E[F_i(\vomega^*)]=0$ and $\E[\valpha^{(t)}_i] = \bar \valpha^{(t)}$), 
\begin{equation}
\E[\|\tfrac{1}{n \pi_i}(F_i(\vomega^*) - \bar \valpha^{(t)}_i) \|^2]  = 
\E[\|\tfrac{1}{n \pi_i}(F_i(\vomega^*) - \valpha^{(t)}_i) \|^2] - \|\bar \valpha^{(t)}\|^2 \leq 
\E[\|\tfrac{1}{n \pi_i}(F_i(\vomega^*) - \valpha^{(t)}_i) \|^2]
\end{equation}
By assuming that each $F_i$ is $L_i$-Lipschitz we get,
\begin{align}
  \E[\tfrac{1}{n^2 \pi_{j_t}^2}\|F_j(\vomega_{t}) - F_j(\vomega_{t+1/2})\|^2] 
  &=\frac{1}{n^2}\sum_{j=1}^n \frac{1}{\pi_j} \E[\|F_j(\vomega_{t}) - F_j(\vomega_{t+1/2})\|^2]\\ 
  & \leq  \frac{1}{n^2}\sum_{j=1}^n \frac{L_j^2}{\pi_j}\E[\|\vomega_{t} - \vomega_{t+1/2}\|^2] \\
  &= \bar L^2 \E[\|\vomega_{t} - \vomega_{t+1/2}\|^2 \,,
\end{align}
where $\bar L^2 := \frac{1}{n^2}\sum_{i=1}^n \frac{L_i^2}{\pi_j}$. Note that $\vomega_t$ and $\vomega_{t+1/2}$ do not depend on $j_t$ (which is the index sampled for the update step), that is not the case for $i$ (the index for the extrapolation step) since $\vomega_{t+1/2}$ is the result of the extrapolation.
 \endproof

This lemma make appear the quantity $\E[\|\tfrac{1}{n \pi_i}(F_i(\vomega^*) - \bar \valpha^{(t)}_i) \|^2]$ that we need to bound. In order to do that we prove the following lemma,
\begin{lemma}\label{lemma:geom_Liap}
Let $(\valpha_j^{(t)})$ be updated according to the rules of a $q$-uniform memorization algorithm~ (Def.~\ref{def:uniform_mem_alg}). Let us note $H_t := \tfrac{1}{n}\sum_{i=1}^n \tfrac{1}{n\pi_i}\|F_i(\vomega^*) - \valpha_i^{(t)}\|^2$.
For any $t \in \NN$, 
\begin{equation}
  \E[H_{t+1}] =  \frac{q}{n}\E[\|\tfrac{1}{n\pi_{i_t}}(F_{i_t}(\vomega_t)- F_{i_t}(\vomega^*))\|^2] +\frac{n-q}{n} \E[H_t] \,.
   \end{equation}
\proof We will use the definition of $q$-uniform memorization algorithms (saying that $\valpha_{i}$ is updated at time $t+1$ with probability $q/n$). We call this event "$i$ updated",
\begin{align*}
  \E[H_{t+1}] 
  &:= \E[\frac{1}{n}\sum_{i =1}^n \tfrac{1}{n\pi_i}\|\valpha_i^{(t+1)}- F_i(\vomega^*) \|^2] \\
  & = \frac{1}{n}\E[\sum_{i \text{ updated}}\tfrac{1}{n\pi_i}\|\valpha_i^{(t+1)}- F_i(\vomega^*) \|^2 
      + \sum_{i \text{ not updated}} \tfrac{1}{n\pi_i}\|\valpha_i^{(t+1)}- F_i(\vomega^*) \|^2] \\
  & =\frac{1}{n} \E[\sum_{i \text{ updated}}\tfrac{1}{n\pi_i}\|F_i(\vomega_t)- F_i(\vomega^*) \|^2 
      + \sum_{i \text{ not updated}} \tfrac{1}{n\pi_i}\|\valpha_i^{(t)}- F_i(\vomega^*) \|^2] \\
  & =\frac{1}{n} \sum_{i=1}^n  \mathbf{P}(i \text{ updated})\tfrac{1}{n\pi_i}\E[\|F_i(\vomega_t)- F_i(\vomega^*) \|^2 
      + \frac{1}{n} \sum_{i=1}^n  \mathbf{P}(i \text{ not updated}) \tfrac{1}{n\pi_i}\E[\|\valpha_i^{(t)}- F_i(\vomega^*) \|^2] \\
\\\\& =  \frac{q}{n}\E[\|\tfrac{1}{n\pi_{i_t}}(F_{i_t}(\vomega_t)- F_{i_t}(\vomega^*)) \|^2 ] +\frac{n-q}{n} \E[H_t]
\end{align*}
\endproof 
\end{lemma}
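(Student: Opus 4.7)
The plan is to compute $\E[H_{t+1}]$ directly by expanding the definition and partitioning according to which memory locations are refreshed at step $t$. First I would split the sum defining $H_{t+1}$ using the indicator that $i \in J_t$: for indices $i \in J_t$, the update rule gives $\valpha_i^{(t+1)} = F_i(\vomega_t)$, so the term becomes $\tfrac{1}{n\pi_i}\|F_i(\vomega_t) - F_i(\vomega^*)\|^2$, while for indices $i \notin J_t$ the memory is unchanged, $\valpha_i^{(t+1)} = \valpha_i^{(t)}$, and the term is $\tfrac{1}{n\pi_i}\|\valpha_i^{(t)} - F_i(\vomega^*)\|^2$. This gives a clean decomposition of $H_{t+1}$ as a sum of an ``updated'' and a ``not updated'' contribution.

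Next I would take expectations termwise, invoking the defining property of a $q$-uniform memorization algorithm that $\mathbb P(i \in J_t) = q/n$ for every $i$, together with the independence of the random set $J_t$ from the iterate $\vomega_t$ (since $J_t$ is drawn from the algorithm's fixed rule and is not a function of the current iterate value). Linearity of expectation then yields a $(q/n)$ weighting on the updated contribution and a $(n-q)/n$ weighting on $\E[H_t]$ for the unchanged contribution. Concretely, this produces
\[
\E[H_{t+1}] = \frac{q}{n}\cdot \frac{1}{n^2}\sum_{i=1}^n \frac{1}{\pi_i}\E\bigl[\|F_i(\vomega_t)-F_i(\vomega^*)\|^2\bigr] + \frac{n-q}{n}\E[H_t].
\]

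Finally, I would recognize the first sum as exactly $\frac{q}{n}\E\bigl[\|\tfrac{1}{n\pi_{i_t}}(F_{i_t}(\vomega_t)-F_{i_t}(\vomega^*))\|^2\bigr]$, using the standard identity for importance sampling: when $i_t \sim \pi$, for any function $g$ we have $\E\bigl[g(i_t)/\pi_{i_t}\bigr] = \sum_i g(i)$; applying this to $g(i) = \tfrac{1}{n^2\pi_i}\|F_i(\vomega_t)-F_i(\vomega^*)\|^2$ recovers the claimed expression.

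The main subtlety I expect is the independence argument: I need $\mathbf 1_{i \in J_t}$ to decouple from $\|F_i(\vomega_t)-F_i(\vomega^*)\|^2$ inside the expectation, which relies on the fact that the set $J_t$ is chosen according to a rule that does not look at $\vomega_t$. Once this is clear the rest is purely bookkeeping, since the cross terms never interact with the iterate randomness in a nontrivial way. I do not anticipate needing any smoothness or monotonicity properties of $F$ for this lemma: it is a pure accounting identity for the memory process of a $q$-uniform memorization scheme.
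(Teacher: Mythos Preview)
Your proposal is correct and follows essentially the same approach as the paper: both split $H_{t+1}$ according to whether $i \in J_t$, use the uniform update probability $q/n$, and then rewrite the updated contribution as an expectation over $i_t \sim \pi$. You are slightly more explicit than the paper about the independence of $J_t$ from $\vomega_t$ and about the importance-sampling identity used in the last step, but the argument is the same.
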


Using all these lemmas we can prove our theorem.

\begin{reptheorem}{thm:extra-svrg} Under Assumption~\ref{assump:SVRG}, after $t$ iterations, the iterate $\vomega_t$ computed by a $q$-memorization algorithm with step-sizes $(\eta_\theta,\eta_\phi) \leq \big( (40 \bar \ell_\vtheta)^{-1},({40\bar \ell_\vphi})^{-1} \big)$ verifies: 
\begin{equation}
  \E[\|\vomega_t - \vomega^*\|_2^2] \leq  \left(1-\min \left\{\Big(\frac{\eta \mu}{4} + \frac{11\eta^2\bar\gamma^2}{25}\Big), \frac{2q}{5n}\right\} \right)^t
  \E[\|\vomega_0 - \vomega^*\|_2^2]  \,.
\end{equation}
\end{reptheorem}
\proof In this proof we will consider a constant step-size $\eta_t = (\eta_\vtheta,\eta_\phi)$. For simplicity of notations we will consider the notation,
 \begin{align*}
 &\bar L^2 \| \vomega \|^2 :=  \bar L_\vtheta^2\|\vtheta\|^2 + \bar L_\vphi^2 \|\vphi\|^2 \, 
 ,\quad \eta^2 \| \vomega \|^2 :=  \eta_\vtheta^2\|\vtheta\|^2 + \eta_\vphi^2 \|\vphi\|^2 \,
 ,\quad \mu \| \vomega \|^2 :=  \mu_\vtheta^2\|\vtheta\|^2 + \mu_\vphi^2 \|\vphi\|^2 \\
 &\qquad \qquad \quad  \eta \mu = (\eta_\vtheta \mu_\vtheta ,\eta_\vphi \mu_\vphi) \,, \quad \sigma \bar L^2 = (\sigma_\vtheta \bar L_\vtheta^2,\sigma_\vphi \bar L_\vphi^2)  \quad \text{and} \quad \eta^2 \bar L^2 = (\eta_\vtheta^2 \bar L_\vtheta^2,\eta_\vphi^2 \bar L_\vphi^2) \,.
 \end{align*}

We start by recalling Lemma~\ref{lemma:4},
\begin{equation}\label{eq:proof_thm}
\|\vomega_{t+1}-\vomega^*\|_2^2 \leq  \|\vomega_t - \vomega^*\|_2^2 - 2\eta\vg_{t+1/2}^\top(\vomega_{t+1/2} - \vomega^*)
  - (1 - 2 \eta \mu) \|\vomega_{t+1/2} -\vomega_t\|_2^2
  + \eta^2 \|\vg_{t}-\vg_{t+1/2}\|_2^2 \, .
\end{equation}
We can then take the expectation and plug-in the expression of $\E[\|\vg_{t}-\vg_{t+1/2}\|_2^2]$ from Lemma~\ref{lemma:g_t_g_t12},
\begin{align*}
\E[\|\vomega_{t+1}-\vomega^*\|_2^2] 
 &\leq  \E[\|\vomega_t - \vomega^*\|_2^2] - 2\eta \E[F(\vomega_{t+1/2})^\top(\vomega_{t+1/2} - \vomega^*)]
  - (1 - 2 \eta \mu - 5 \eta^2 \bar L^2) ]\E[\|\vomega_{t+1/2} -\vomega_t\|_2^2]\\
 &\quad  + \eta^2 (10\E[\|\tfrac{1}{n \pi_i}(F_i(\vomega^*) - \valpha^{(t)}_i) \|^2]+ 10\E[\|\tfrac{1}{n \pi_i}(F_i(\vomega^*) - F_i(\vomega_t) )\|^2]) \, .
\end{align*}
Let us define $\LL_t := \E[\|\vomega_t - \vomega^*\|_2^2] + \sigma \E[H_t]$, where $H_t := \tfrac{1}{n}\sum_{i=1}^n \frac{1}{n\pi_i} \|F_i(\vomega^*) - \valpha_i^{(t)}\|^2 $. We can combine~\eqref{eq:proof_thm} with Lemma~\ref{lemma:geom_Liap} multiplied by a constant $\sigma>0$ that we will set later to get
\begin{align*}
  \LL_{t+1} 
  &= \E[\|\vomega_{t+1} - \vomega^*\|_2^2] + \sigma \E[H_{t+1}] \\
  &\leq  \E[\|\vomega_t - \vomega^*\|_2^2] - 2\eta \E[F(\vomega_{t+1/2})^\top(\vomega_{t+1/2} - \vomega^*)]
  - (1 - 2 \eta \mu - 5 \eta^2 \bar L^2) \E[\|\vomega_{t+1/2} -\vomega_t\|_2^2] \\
  & \quad  + (\tfrac{\sigma q}{n} + 10 \eta^2)\E[\|\tfrac{1}{n \pi_i}(F_i(\vomega^*) - F_i(\vomega_t) ) \|^2] + (\tfrac{10\eta^2}{\sigma} + \tfrac{n-q}{n} ) \sigma \E[H_t]\,.
 \end{align*}
 Since $i_t$ and $j_t$ are independently drawn from the same distribution, we have, $\E[\|\tfrac{1}{n \pi_i}(F_i(\vomega^*) - F_i(\vomega_{t}) ) \|^2] = \E[\|\tfrac{1}{n \pi_j}(F_j(\vomega^*) - F_j(\vomega_{t}) ) \|^2]$ and thus,
 \begin{align*}
    \LL_{t+1} 
  &\leq  \E[\|\vomega_t - \vomega^*\|_2^2] - 2\eta \E[F(\vomega_{t+1/2})^\top(\vomega_{t+1/2} - \vomega^*)]
  - (1 - 2 \eta \mu - 5 \eta^2 \bar L^2) \|\vomega_{t+1/2} -\vomega_t\|_2^2 \\
  & \quad  + (\tfrac{\sigma q}{n} + 10 \eta^2)\E[\|\tfrac{1}{n \pi_j}(F_j(\vomega^*) - F_j(\vomega_t) ) \|^2 + (\tfrac{10\eta^2}{\sigma} + \tfrac{n-q}{n} ) \sigma \E[H_t]\\  
  &\leq  \E[\|\vomega_t - \vomega^*\|_2^2] 
  - (1 - 2 \eta \mu - 5 \eta^2 \bar L^2 - 2(\tfrac{\sigma q}{n} + 10 \eta^2) \bar L^2) \|\vomega_{t+1/2} -\vomega_t\|_2^2 \\
  & \quad  - 2\eta \E[F(\vomega_{t+1/2})^\top(\vomega_{t+1/2} - \vomega^*)]+ 2(\tfrac{\sigma q}{n} + 10 \eta^2)\E[\|\tfrac{1}{n \pi_j}(F_j(\vomega^*) - F_j(\vomega_{t+1/2}) ) \|^2 \\
  &\quad+ (\tfrac{10\eta^2}{\sigma} + \tfrac{n-q}{n} ) \sigma \E[H_t] \\
 & \leq 
  \E[\|\vomega_t - \vomega^*\|_2^2]
  - (1 - 2 \eta \mu - 5 \eta^2 \bar L^2 - 2(\tfrac{\sigma q}{n} + 10 \eta^2) \bar L^2) \|\vomega_{t+1/2} -\vomega_t\|_2^2  \\
  & \quad  - 2\eta \E[F(\vomega_{t+1/2})^\top(\vomega_{t+1/2} - \vomega^*)]  + (\tfrac{10\eta^2}{\sigma} + \tfrac{n-q}{n} ) \sigma \E[H_t]   \\
  &\quad+ 2(\tfrac{\sigma q}{n} + 10 \eta^2)\E[\tfrac{\ell_j}{n^2 \pi_j^2}(F_j(\vomega^*) - F_j(\vomega_{t+1/2}))^\top(\vomega^*-\vomega_{t+1/2})]
\end{align*}
where for the second inequality we used Young's inequality and the Lipchitzness of $F_j$
and for the last one we used the co-coercivity of $F_j$:
\begin{equation}
    \|F_j(\vomega)-F_j(\vomega')\|^2\leq \ell_i (F_j(\vomega')-F_j(\vomega))^\top(\vomega'-\vomega)\,.
\end{equation}
Thus using $\pi_j = \frac{\ell_j}{\sum_j \ell_j}$, we get
\begin{align*}
      \LL_{t+1}
 &\leq 
  \E[\|\vomega_t - \vomega^*\|_2^2] 
  - (1 - 2 \eta \mu - 5 \eta^2 \bar L^2 - 2(\tfrac{\sigma q}{n} + 10 \eta^2) \bar L^2) \|\vomega_{t+1/2} -\vomega_t\|_2^2  \\
  & \quad - 2\eta \E[F(\vomega_{t+1/2})^\top(\vomega_{t+1/2} - \vomega^*)]  + 2(\tfrac{10\eta^2}{\sigma} + \tfrac{n-q}{n} ) \sigma \E[H_t] \\
  &\quad + 2\bar \ell (\tfrac{\sigma q}{n} + 10 \eta^2)\E[\tfrac{1}{n \pi_j}(F_j(\vomega^*) - F_j(\vomega_{t+1/2}))^\top(\vomega^*-\vomega_{t+1/2})] \\
  &= \E[\|\vomega_t - \vomega^*\|_2^2] 
 - (1 - 2 \eta \mu - 5 \eta^2 \bar L^2 - 2(\tfrac{\sigma q}{n} + 10 \eta^2) \bar L^2) \|\vomega_{t+1/2} -\vomega_t\|_2^2  \\
  & \quad - 2\eta \E[F(\vomega_{t+1/2})^\top(\vomega_{t+1/2} - \vomega^*)]  + 2\bar \ell (\tfrac{\sigma q}{n} + 10 \eta^2)\E[F(\vomega_{t+1/2})^\top(\vomega_{t+1/2}-\vomega^*)] \\
  &\quad + (\tfrac{10\eta^2}{\sigma}+ \tfrac{n-q}{n} ) \sigma \E[H_t]
\end{align*}
where $\bar \ell := \frac{1}{n} \sum_i \ell_i$. Now we can set $\frac{20\eta^2}{\sigma} = \frac{q}{n}$ to get
\begin{align*}
      \LL_{t+1}
 &\leq 
 \E[\|\vomega_t - \vomega^*\|_2^2]
  - (1 - 2 \eta \mu - 65\eta^2 \bar L^2) \|\vomega_{t+1/2} -\vomega_t\|_2^2  \\
  & \quad -\eta( 2 -  60\bar \ell \eta)\E[F(\vomega_{t+1/2})^\top(\vomega_{t+1/2}-\vomega^*)] + (1- \tfrac{q}{2n} ) \sigma \E[H_t]\,.
\end{align*} By using the strong convexity of $F$ and Young's inequality we have that
\begin{equation}
    F(\vomega_{t+1/2})^\top(\vomega_{t+1/2}-\vomega^*) \geq \mu \|\vomega_{t+1/2}-\vomega^*\|^2 
    \geq \tfrac{\mu}{2}\|\vomega_{t}-\vomega^*\|^2 - \mu\|\vomega_{t+1/2}-\vomega_t\|^2 \,.
\end{equation}
Finally with $\eta \leq \frac{1}{40 \bar\ell}$ (note that we always have $\bar \ell \geq \bar L \geq \mu$ because $\ell_i \geq L_i$) we get
\begin{align*}
\LL_{t+1} &\leq 
 \E[\|\vomega_t - \vomega^*\|_2^2] -\frac{\eta\mu}{4}\E[\|\vomega_{t}-\vomega^*\|^2]
  - \frac{89}{100} \E[\|\vomega_{t+1/2} -\vomega_t\|_2^2]   + (1- \tfrac{q}{2n} ) \sigma \E[H_t]\,.
\end{align*}
We finally use the projection-type error bound $\|F_i(\vomega_t)-F_i(\vomega^*)\|^2 \geq \gamma_i^2\|\vomega_t - \vomega^*\|^2$ the same way as~\citep{azizian2019tight} to get,
\begin{align*}
   \|\vomega_{t+1/2} -\vomega_t\|_2^2 
   &= \eta^2 \|\frac{1}{n\pi_i}(F_i(\vomega_t) - \bar \valpha_i^{(t)})\|^2 \\
   & \geq  \frac{\eta^2}{2}\|\frac{1}{n\pi_i}(F_i(\vomega_t) - F_i(\vomega^*))\|^2 -  \eta^2 \| \frac{1}{n\pi_i}(F_i(\vomega^*)-  \bar \valpha_i^{(t)})\|^2 \\
   &\geq  \frac{\gamma_i^2\eta^2}{2}\|\frac{1}{n\pi_i}(\vomega_t - \vomega^*)\|^2 -  \eta^2 \| \frac{1}{n\pi_i}(F_i(\vomega^*)-  \bar \valpha_i^{(t)})\|^2 
   \,.
\end{align*}
Thus we have that, 
\begin{align*}
\LL_{t+1} &\leq (1-\tfrac{\eta \mu}{4})
 \E[\|\vomega_t - \vomega^*\|_2^2]
  - \frac{11 \bar \gamma^2 \eta^2}{25} \E[\|\vomega_{t} -\vomega^*\|_2^2]   + (1- \tfrac{q}{2n} + \frac{9q}{100n} ) \sigma \E[H_t]\,,
\end{align*}
where $\bar \gamma^2 := \frac{1}{n} \sum_{k=1}^n \frac{\gamma_i^2}{n\pi_i}$.
We can thus conclude the proof using the strong convexity of $F$,
\begin{align*}
\LL_{t+1} &\leq \left(1- \min \left\{\Big(\frac{\eta \mu}{4} + \frac{11\eta^2\bar\gamma^2}{25}\Big), \frac{2q}{5n}\right\} \right)\LL_t \,.
\end{align*}

\section{Details on the SVRE--GAN Algorithm}\label{sec:app-algo}
\label{sec:implementation}
\subsection{Practical Aspect}
\label{sub:practical_aspect}

\paragraph{Noise dataset.} \label{par:noise_dataset_}
Variance reduction is usually performed on finite sum dataset. However, the noise dataset in GANs (sampling from the noise variable $z$ for the generator $G$) is in practice considered as an infinite dataset. We considered several ways to cope with this: 
\begin{itemize}
  \item Infinitely taking new samples from a predefined latent distribution $p_g$. In this case, from a theoretical point of view, in terms of using finite sum formulation, there is no convergence guarantee for SVRE even in the strongly convex case. Moreover, the estimators~\eqref{eq:D_svrg_dir} and~\eqref{eq:G_svrg_dir} are biased estimator of the gradient (as $\vmu_D$ and $\vmu_G$ do not estimate the full expectation but a finite sum).
  \item Sampling a different noise dataset at each epoch, i.e. considering a different finite sum at each epoch. In that case, we are performing a variance reduction of this finite sum over the epoch. 
  \item Fix a finite sum noise dataset for the entire training. 
\end{itemize}
In practice, we did not notice any notable difference between the three alternatives.

\paragraph{Adaptive methods.} Particular choices such as the optimization method (\textit{e.g.}  Adam~\citep{kingma2014adam}), learning rates, and normalization, have been established in practice as almost \textit{prerequisite} for convergence\footnote{For instance, \citet{daskalakis2017training,gidel2019variational} plugged Adam into their principled method to get better results.}, in contrast to supervised classification problems where they have been shown to only provide a marginal value~\citep{wilson2017marginal}. 
To our knowledge, SVRE is the only method that works with a constant step size for GANs on non-trivial datasets. This combined with the fact that recent works empirically tune the first moment controlling hyperparameter to $0$ ($\beta_1$, see below) and the variance reduction (VR) one ($\beta_2$, see below) to a non-zero value, sheds light on the reason behind the success of Adam on GANs.

However, combining SVRE with adaptive step size scheme on GANs remains an open problem. We first briefly describe the update rule of Adam, and then we propose a new adaptation of it that is more suitable for VR methods, which we refer to as variance reduced Adam (VRAd).

\paragraph{Adam.}\label{par:adam}
Adam stores an exponentially decaying average of both past gradients $m_t$ and squared gradients $v_t$, for each parameter of the model:
\begin{align}
  m_t = \beta_1 m_{t-1} + (1-\beta_1)g_t \label{eq:adam_beta1} \\ 
  v_t = \beta_2 v_{t-1} + (1-\beta_2)g_t^2\,, \label{eq:adam_beta2}
\end{align}
where $\beta_1, \beta_2 \in [0,1]$, $m_0=0$,  $v_0=0$, and $t=1,\dots T$ denotes the iteration. $m_t$ and $v_t$ are respectively the estimates of the first and the second moments of the stochastic gradient. To compensate the bias toward $0$ due to initialization, \citet{kingma2014adam} propose to use bias-corrected estimates of these first two moments:
\begin{align}
  \hat{m}_t = \frac{m_t}{1-\beta_1^t}\\
  \hat{v}_t = \frac{v_t}{1-\beta_2^t}.
\end{align}
The Adam update rule can be described as:
\begin{equation}
  \vomega_{t+1} = \vomega_t - \eta \frac{\hat{\vm}_t}{\sqrt{\hat{\vv}_t}+\epsilon} .
\end{equation}

Adam can be understood as an approximate gradient method with a diagonal step size of $\eta_{Adam}:=  \frac{\eta}{\sqrt{\vv_t}+\epsilon}$. Since VR methods aim to provide a vanishing $\vv_t$, they lead to a too large step-size $\eta_{Adam}$ of $\frac{\eta}{\eps}$. This could indicate that the update rule of Adam may not be a well-suited method to combine with VR methods. 

\paragraph{VRAd.}
This motivates the introduction of a new Adam-inspired variant of adaptive step sizes that maintain a reasonable size even when $\vv_t$ vanishes,
\begin{equation}\label{eq:VRAd}
  \vomega_{t+1} = \vomega_t - \eta \frac{ |\hat{\vm}_t|}{\sqrt{\hat{\vv}_t}+\epsilon} \hat{\vm}_t \tag{VRAd} \,.
\end{equation}
This adaptive variant of Adam is motivated by the step size $\eta^* = \eta \frac{\vm_t^2}{\vv_t}$ derived by~\citet{schaul2013no}. \eqref{eq:VRAd} is simply the square-root of $\eta^*$ in order to stick with Adam's scaling of $\vv_t$.

\subsection{SVRE-GAN}
\label{sub:extrasvrg_gan} 

\begin{algorithm}[t]
	\caption{Pseudocode for SVRE-GAN.}
	\label{alg:svrg_gan}
	\begin{algorithmic}[1]
		\STATE {\bfseries Input:} 
		dataset $\mathcal{D}$,
		noise dataset $\mathcal{Z}$ ($|\mathcal{Z}|=|\mathcal{D}|=n$), stopping iteration $T$,
		learning rates $\eta_D, \eta_G$,
		generator loss $ \LL^{G}$,
		discriminator loss $ \LL^{D}$, mini-batch size B.
		\STATE {\bfseries Initialize:} $D$, 
		$G$
				
		\FOR{$e=0$ {\bfseries to} $T{-}1$}
		\STATE $D^{\mathcal{S}} = D \,$ and $\,\vmu_{D} = \frac{1}{n} \sum_{i=1}^{n} \sum_{j=1}^n \nabla_{D}  \LL^{D}(G^{\mathcal{S}},D^{\mathcal{S}},\mathcal D_j, \mathcal Z_i)$    \label{line:mu_D}
		\STATE $G^{\mathcal{S}} = G\,$ and $\,\vmu_{G}= \frac{1}{n} \sum_{i=1}^{n} \nabla_{G}  \LL^{G}(  G^{\mathcal{S}},D^{\mathcal{S}}, \mathcal Z_i)$
		\STATE $N \sim \geom\big(B/n\big)$  \hfill (length of the epoch)   
		\FOR{$i=0$ {\bfseries to} $N{-}1$ } \label{alg:stochastic_gan}
		\STATE \textbf{Sample} mini-batches $(n_d,n_z)$; do \textbf{extrapolation:}
		\STATE $\tilde D = D - \eta_D \vd_{D}(G,D,G^{\mathcal{S}},D^{\mathcal{S}},n_z)$   
		\hfill $\triangleright$~\eqref{eq:D_svrg_dir}
		\STATE $\tilde G = G - \eta_G \vd_{G}(G,D,G^{\mathcal{S}},D^{\mathcal{S}},n_d,n_z)$ 
		\hfill $\triangleright$~\eqref{eq:G_svrg_dir}
		\STATE \textbf{Sample} new mini-batches $(n_d,n_z)$; do \textbf{update:}
		\STATE $D = D - \eta_D \vd_{D}(\tilde G,\tilde D,G^{\mathcal{S}},D^{\mathcal{S}},n_z)$      
		\hfill $\triangleright$~\eqref{eq:D_svrg_dir}
		\STATE $G = G - \eta_G \vd_{G}(\tilde G,\tilde D,G^{\mathcal{S}},D^{\mathcal{S}},n_d,n_z)$  
		\hfill $\triangleright$~\eqref{eq:G_svrg_dir}
		\ENDFOR
		\ENDFOR   
		\STATE {\bfseries Output:} $G, D$   
	\end{algorithmic}
\end{algorithm}

In order to cope with the issues introduced by the stochastic game formulation of the GAN models, we proposed the SVRE algorithm Alg.~\ref{alg:svre} which combines SVRG and extragradient method. We refer to the method of applying SVRE to train GANs as the \textit{SVRE-GAN} method, and we describe it in detail in Alg.~\ref{alg:svrg_gan} (generalizing it with mini-batching, but using uniform probabilities). Assuming that we have $\mathcal{D}[n_d]$ and $\mathcal{Z}[n_z]$, respectively two mini-batches of size $B$ of the true dataset and the noise dataset, we compute $\nabla_{D} \LL^D(G, D,\mathcal{D}[n_d], \mathcal{Z}[n_z])$ and $ \nabla_{G} \LL^G(G, D, \mathcal{Z}[n_z])$ the respective mini-batches gradient of the discriminator and the generator:
\begin{align}
  &\nabla_{D} \LL^D(G, D,\mathcal{D}[n_d], \mathcal{Z}[n_z]) := \frac{1}{|n_z|} \frac{1}{|n_d|} \sum_{i \in n_z} \sum_{j \in n_d}  \nabla_D \LL^D(G, D,\mathcal{D}_j, \mathcal{Z}_i) \\
 &\nabla_{G} \LL^G(G, D, \mathcal{Z}[n_z]) := \frac{1}{|n_z|} \sum_{i \in n_z} \nabla_{G} \LL^G(G, D, \mathcal{Z}_i) \,,
\end{align}
where $\mathcal{Z}_i$ and $\mathcal{D}_j$ are respectively the $i^{th}$ example of the noise dataset and the $j^{th}$ of the true dataset. Note that $n_z$ and $n_d$ are lists and thus that we allow repetitions in the summations over $n_z$ and $n_d$.  
The variance reduced gradient of the SVRG method are thus given by: 
\begin{align}
 \vd_{D}(G,D,G^{\mathcal{S}},D^{\mathcal{S}})
 & := \vmu_{D} +
    \nabla_{D} \LL^D(G, D,  
        \mathcal{D}[n_d], \mathcal{Z}[n_z]) - \nabla_{D} \LL^D(G^{\mathcal{S}},  
 D^{\mathcal{S}},\mathcal{D}[n_d], \mathcal{Z}[n_z]) \!\!  \label{eq:D_svrg_dir}\\
 \vd_{G}(G,D,G^{\mathcal{S}},D^{\mathcal{S}})
  &:= \vmu_{G} + \nabla_{G} \LL^G(G, D, \mathcal{Z}[n_z]) -\nabla_{} \LL^G(G^{\mathcal{S}}, D^{\mathcal{S}},\mathcal{Z}[n_z]) \label{eq:G_svrg_dir} \,,
\end{align}
where $G^{\mathcal{S}}$ and $D^{\mathcal{S}}$ are the snapshots and $\vmu_{D}$ and $\vmu_{G}$ their respective gradients.

Alg.~\ref{alg:svrg_gan} summarizes the SVRG optimization extended to GAN.
To obtain that $ \mathop{\mathbb{E}} \big[  
\nabla_{\boldsymbol{\Theta^{\mathcal{S}}}} \LL(\vtheta^{\mathcal{S}},  \vphi^{\mathcal{S}}, \cdot) - \boldsymbol{\mu} \big]$ vanishes, when updating $\boldsymbol{\theta}$ and $\boldsymbol{\varphi}$ where the expectation is over samples of $\mathcal{D}$ and $\mathcal{Z}$ respectively, we use the snapshot networks $\vtheta^{\mathcal{S}}$ and $\vphi^{\mathcal{S}}$ for the second term in lines $9,10,12$ and $13$.
Moreover, the noise dataset  $\mathcal{Z} \sim p_z$, where $|\mathcal{Z}| = |\mathcal{D}| =n$, is fixed. 
Empirically we observe that directly sampling from $p_z$ (contrary to fixing the noise dataset and re-sampling it with frequency $m$) does not impact the performance, as $|\mathcal{Z}|$ is usually high. 

Note that the double sum in Line~$4$ can be written as two sums because of the separability of the expectations in typical GAN objectives. Thus the time complexity for calculating $\mu^D$ is still $O(n)$ and not $O(n^2)$ which would be prohibitively expensive.

\section{Restarted SVRE} 
\label{sec:restarted_svre}
\begin{algorithm}[t]
	\caption{Pseudocode for Restarted SVRE.}
	\label{alg:svre_restart}
	\begin{algorithmic}[1]
		\STATE {\bfseries Input:} 
		Stopping time $T$,
		learning rates $\eta_\vtheta, \eta_\vphi$,
		both players' losses $ \LL^{G}$ and $ \LL^{D}$,
		probability of restart $p$.
		\STATE {\bfseries Initialize:} $\vphi$, $\vtheta$, $t=0$ \hfill $\triangleright$ $t$ is for the online average computation.
		\FOR{$e=0$ {\bfseries to} $T{-}1$}
		
		\STATE Draw $\texttt{restart} \sim \mathrm{B}(p)$. \hfill $\triangleright$ Check if we restart the algorithm.
		\IF{\texttt{restart} \textbf{and} $e >0$}
		\STATE $\vphi \leftarrow \bar \vphi$, \; $\vtheta \leftarrow \bar \vtheta$ and $t=1$ 
		\ENDIF 
		\STATE $\vphi^{\mathcal{S}} \leftarrow \vphi \,$ and $\,\vmu_{\vphi}^\mathcal{S} \leftarrow \frac{1}{|\mathcal{Z}|} \sum_{i=1}^{n} \nabla_{\vphi}  \LL^{D}_i(\vtheta^{\mathcal{S}},\vphi^{\mathcal{S}})$ 
		\STATE $\vtheta^{\mathcal{S}} \leftarrow \vtheta\,$ and $\,\vmu_{\vtheta}^\mathcal{S} \leftarrow \frac{1}{|\mathcal{\vphi}|} \sum_{i=1}^{n} \nabla_{\vtheta}  \LL^{G}_i(  \vtheta^{\mathcal{S}},\vphi^{\mathcal{S}})$
		\STATE $N \sim \geom\big(1/n\big)$  \hfill $\triangleright$ Length of the epoch.
		\FOR{$i =0$ {\bfseries to} $N{-}1$ } \label{alg:stochastic_gan_restart}
		\STATE \textbf{Sample} $i_\vtheta \sim \pi_\vtheta$, $i_\vphi \sim \pi_\vphi$, do \textbf{extrapolation:}
		\STATE $\tilde \vphi \leftarrow \vphi - \eta_\vtheta \vd_{\vphi}(\vtheta,\vphi,\vtheta^{\mathcal{S}},\vphi^{\mathcal{S}})$   
		\,,\; $\tilde \vtheta \leftarrow \vtheta - \eta_\vphi \vd_{\vtheta}(\vtheta,\vphi,\vtheta^{\mathcal{S}},\vphi^{\mathcal{S}})$ 
		\hfill $\triangleright$~\eqref{eq:D_svrg_dir} and~\eqref{eq:G_svrg_dir}
		\STATE\textbf{Sample} $i_\vtheta \sim \pi_\vtheta$, $i_\vphi \sim \pi_\vphi$, do \textbf{update:}
		\STATE $\vphi \leftarrow \vphi - \eta_\vtheta \vd_{\vphi}(\tilde \vtheta,\tilde \vphi,\vtheta^{\mathcal{S}},\vphi^{\mathcal{S}})$      
		\,,\; $\vtheta \leftarrow \vtheta - \eta_\vphi \vd_{\vtheta}(\tilde \vtheta,\tilde \vphi,\vtheta^{\mathcal{S}},\vphi^{\mathcal{S}})$  
		\hfill $\triangleright$~\eqref{eq:D_svrg_dir} and~\eqref{eq:G_svrg_dir} \\[1mm]
		\STATE $\bar \vtheta \leftarrow \frac{t}{t+1}\bar \vtheta + \frac{1}{t+1} \vtheta$ and $\bar \vphi \leftarrow \frac{t}{t+1}\bar \vphi + \frac{1}{t+1} \vphi $ 
		\hfill $\triangleright$ Online computation of the average.
		\STATE $t \leftarrow t+1$  \hfill $\triangleright$ Increment $t$ for the online average computation.
		\ENDFOR
		\ENDFOR   
		\STATE {\bfseries Output:} $\vtheta, \vphi$   
	\end{algorithmic}
\end{algorithm}

Alg.~\ref{alg:svre_restart} describes the restarted version of SVRE presented in \S~\ref{sub:motivating_example}.
With a probability $p$ (fixed) before the computation of $\vmu_{\vphi}^\mathcal{S} $ and $\vmu_{\vtheta}^\mathcal{S} $, we decide whether to restart SVRE (by using the averaged iterate as the new starting point--Alg.~\ref{alg:svre_restart}, Line $6$--$\bar \vomega_t$) or computing the batch snapshot at a point $\vomega_t$.

\section{Details on the  implementation}\label{sec:impl-details}
For our experiments, we used the PyTorch\footnote{\url{https://pytorch.org/}} deep learning framework, whereas for computing the FID and IS metrics, we used the provided implementations in Tensorflow\footnote{\url{https://www.tensorflow.org/}}.
\subsection{Metrics}\label{sec:app-metrics}
We provide more details about the metrics enumerated in \S~\ref{sec:experiments}.
Both FID and IS use:
\begin{enumerate*}[series = tobecont, itemjoin = \quad, label=(\roman*)]
\item the \textit{Inception v3 network}~\citep{inceptionmodel} that has been trained on the ImageNet dataset consisting of ${\sim}1$ million RGB images of $1000$ classes, $C=1000$. 
\item a sample of $m$ generated images $ x \sim p_g$, where usually $m=50000$.
\end{enumerate*}

\subsubsection{Inception Score}\label{sec:is}
Given an image $x$, IS uses the softmax output of the Inception network $p(y|x)$ which represents the probability that $x$ is of class $c_i, i \in 1 \dots C$, i.e., $p(y|x) \in [0,1]^C$.
It then computes the marginal class distribution $p(y)=\int_x p(y|x)p_g(x)$.
IS measures the Kullback--Leibler divergence $\mathbb{D}_{KL}$ between the predicted conditional label distribution $p(y|x)$  and the marginal class distribution $p(y)$. 
More precisely, it is computed as follows:
\begin{align}
IS(G) = \exp\big( \E_{x \sim p_g} [ \mathbb{D}_{KL}( p(y|x) || p(y) )  ] \big)
= \exp\big(   
\frac{1}{m} \sum_{i=1}^m \sum_{c=1}^{C} p(y_c|x_i) \log{\frac{p(y_c|x_i)}{p(y_c)}}
\big).
\end{align}

It aims at estimating 
\begin{enumerate*}[series = tobecont, itemjoin = \quad, label=(\roman*)]
\item if the samples look realistic i.e., $p(y|x)$ should have low entropy, and 
\item if the samples are diverse (from different ImageNet classes) i.e., $p(y)$ should have high entropy.
\end{enumerate*}
As these are combined using the Kullback--Leibler divergence, the higher the score is, the better the performance. Note that the range of IS scores at convergence varies across datasets, as the Inception network is pretrained on the ImageNet classes. For example, we obtain low IS values on the SVHN dataset as a large fraction of classes are numbers, which typically do not appear in the ImageNet dataset. Since \textbf{MNIST} has greyscale images, we used a classifier trained on this dataset and used  $m=5000$.  For the rest of the datasets, we used the original implementation\footnote{\url{https://github.com/openai/improved-gan/}} of IS in TensorFlow, and $m=50000$. 

\subsubsection{Fr\'echet Inception Distance}\label{sec:fid}
Contrary to IS, FID aims at comparing the synthetic samples $x \sim p_g$ with those of the training dataset $ x \sim p_d$ in a feature space. The samples are embedded using the first several layers of the Inception network. Assuming  $p_g$ and $p_d$ are multivariate normal distributions, it then estimates the means $\vm_g$ and $\vm_d$ and covariances $C_g$ and $C_d$, respectively for  $p_g$ and $p_d$ in that feature space. Finally, FID is computed as: 
\begin{align}
\mathbb{D}_{\text{FID}}(p_d, p_g) \approx d^2((\vm_d, C_d), (\vm_g, C_g )) =  ||\vm_d - \vm_g||_2^2 + Tr(C_d + C_g - 2(C_dC_g)^{\frac{1}{2}}), 
\end{align}
where $d^2$ denotes the Fr\'echet Distance.
Note that as this metric is a distance, the lower it is, the better the performance.
We used the original implementation of FID\footnote{\url{https://github.com/bioinf-jku/TTUR}} in Tensorflow, along with the provided statistics of the datasets.

\subsubsection{Second Moment Estimate}\label{sec:sme}
To evaluate SVRE effectively, we used the \textbf{second moment estimate} (SME, uncentered variance, see \S~\ref{par:adam}) of the gradient estimate throughout the iterations $t=1 \dots T$ per parameter, computed as:
$v_t = \gamma v_{t-1} + (1-\gamma) g_t^2$, where $g_t$ denotes the gradient estimate for the parameter and iteration $t$, and $\gamma = 0.9$. For SVRE, $g_t$ is $d_{\vphi}$ and $d_{\vtheta}$ (see  Eq.~\ref{eq:D_svrg_dir} and~\ref{eq:G_svrg_dir}) for $G$ and $D$, respectively.
We initialize $g_0=0$  and we use bias-corrected estimates: $\hat{v} = \frac{v_t}{1-\gamma^t}$.
As the second moment estimate is computed per each parameter of the model, we depict the average of these values for the parameters of $G$ and $D$ separately.

In this work, as we aim at assessing if SVRE \emph{effectively} reduces the variance of the gradient updates, we use SME in our analysis as it is computationally inexpensive and fast to compute.

\subsubsection{Entropy \& Total Variation on MNIST}\label{sec:tve_mnist}
For the experiments on \textbf{MNIST} illustrated in Fig.~\ref{subfig-mnist_is_param_updates} \&~\ref{subfig-var_g} in \S~\ref{sec:experiments}, we plot in \S~\ref{sec:additional-experiments} the \textbf{entropy} (E) of the generated samples' class distribution, as well as the \textbf{total variation} (TV) between the class distribution of the generated samples and a uniform one (both computed using a pretrained network that classifies its $10$ classes).

\subsection{Architectures \& Hyperparameters}\label{app:arch}

\paragraph{Description of the architectures.}
We  describe the models we used in the empirical evaluation of  SVRE by listing the layers they consist of, as adopted in GAN works, e.g. ~\citep{miyato2018spectral}.
With ``conv.'' we denote a convolutional layer and ``transposed conv'' a transposed convolution layer~\citep{radford2016unsupervised}.
The models use Batch Normalization~\citep{bnorm} and Spectral Normalization layers~\citep{miyato2018spectral}.

\subsubsection{Architectures for experiments on MNIST}\label{app:mnist_arch}
For experiments on the \textbf{MNIST} dataset, we used the DCGAN architectures~\citep{radford2016unsupervised}, listed in Table~\ref{tab:mnist_arch}, and the parameters of the models are initialized using PyTorch default initialization. 
We used mini-batch sizes of $50$ samples, whereas for full dataset passes we used mini-batches of $500$ samples as this reduces the wall-clock time for its computation.
For experiments on this dataset, we used the \textit{non saturating} GAN loss as proposed~\citep{goodfellow2014generative}:
\begin{align}
& \LL_D =   \E_{x\sim p_d}  \log(D(x)) + \E_{z\sim p_z}   \log(D(G(z))) \label{eq:vanilla_loss_d}\\
& \LL_G =  \E_{z\sim p_z}   \log(D(G(z))), \label{eq:nonsat_loss_g}
\end{align}
where $p_d$ and $p_z$ denote the data and the latent distributions (the latter to be predefined). 

For both the baseline and the SVRE variants we tried the following step sizes $\eta=[1\times10^{-2},$ $1\times10^{-3}, 1\times10^{-4}]$. 
We observe that SVRE can be used with larger step sizes.
In Table~\ref{tab:res_summary_shallow}, we used $\eta=1\times10^{-4}$ and $\eta=1\times10^{-2}$ for SE--A and SVRE(--VRAd), respectively.

\begin{table}\centering
\begin{minipage}[b]{0.49\hsize}\centering
\begin{tabular}{@{}c@{}}\toprule
\textbf{Generator}\\\toprule
\textit{Input:} $z \in \mathds{R}^{128} \sim \mathcal{N}(0, I) $ \\  \hdashline 
transposed conv. (ker: $3{\times}3$, $128 \rightarrow 512$; stride: $1$) \\
 Batch Normalization \\ 
 ReLU  \\
 transposed conv. (ker: $4{\times}4$, $512 \rightarrow 256$, stride: $2$)\\
 Batch Normalization \\ 
 ReLU  \\
 transposed conv. (ker: $4{\times}4$, $256 \rightarrow 128$, stride: $2$) \\
 Batch Normalization \\ 
 ReLU  \\
 transposed conv. (ker: $4{\times}4$, $128 \rightarrow 1$, stride: $2$, pad: 1) \\
 $Tanh(\cdot)$\\ 
\bottomrule 
\end{tabular}
\end{minipage}
\hfill 
\begin{minipage}[b]{0.49\hsize}\centering
\begin{tabular}{@{}c@{}}\toprule
\textbf{Discriminator}\\\toprule
\textit{Input:} $x \in \mathds{R}^{1{\times}28{\times}28} $ \\  \hdashline 
 conv. (ker: $4{\times}4$, $1 \rightarrow 64$; stride: $2$; pad:1) \\
 LeakyReLU (negative slope: $0.2$) \\ 
 conv. (ker: $4{\times}4$, $64 \rightarrow 128$; stride: $2$; pad:1) \\
 Batch Normalization \\
 LeakyReLU (negative slope: $0.2$) \\
 conv. (ker: $4{\times}4$, $128 \rightarrow 256$; stride: $2$; pad:1) \\
 Batch Normalization \\
 LeakyReLU (negative slope: $0.2$) \\
 conv. (ker: $3{\times}3$, $256 \rightarrow 1$; stride: $1$) \\
 $Sigmoid(\cdot)$ \\
\bottomrule
\end{tabular}
\end{minipage}
\caption{DCGAN architectures~\citep{radford2016unsupervised} used for experiments on \textbf{MNIST}.
We use \textit{ker} and \textit{pad} to denote \textit{kernel} and \textit{padding} for the (transposed) convolution layers, respectively. 
With $h{\times}w$ we denote the kernel size.
With $ c_{in} \rightarrow y_{out}$ we denote the number of channels of the input and output, for (transposed) convolution layers.
}\label{tab:mnist_arch}
\end{table}

\subsubsection{Choice of architectures on real-world datasets}\label{sec:arch_motivation}
We replicate the experimental setup described for \textbf{CIFAR-10} and \textbf{SVHN} in~\citep{miyato2018spectral}, described also below in \S~\ref{sec:deeper_resnet_arch}.
We observe that this experimental setup is highly sensitive to the choice of the hyperparameters (see our results in \S~\ref{sec:results_deep_arch}), making it more difficult to compare the optimization methods for a fixed hyperparameter choice. In particular, apart from the different combinations of learning rates for $G$ and $D$, for the baseline this also included experimenting with: $\beta_1$ (see \eqref{eq:adam_beta1}), a multiplicative factor of exponential learning rate decay scheduling $\gamma$, as well as different ratio of updating  $G$ and $D$ per iteration.
These observations, combined with that we had limited computational resources, motivated us to use shallower architectures, which we describe below in \S~\ref{sec:shallow_sagan}, and which use an inductive bias of so-called Self--Attention layers~\citep{sagan}.
As a reference, our SAGAN and ResNet architectures for \textbf{CIFAR-10} have approximately $35$ and $85$ layers, respectively--in total for G and D, including the non linearity and the normalization layers.
For clarity, although the deeper and the shallower architectures differ as they are based on ResNet and SAGAN, we refer these as \textit{deep} (see \S~\ref{sec:shallow_sagan}) and \textit{shallow} (see \S~\ref{sec:deeper_resnet_arch}), respectively.

\subsubsection{Shallower SAGAN architectures}\label{sec:shallow_sagan}
We used the SAGAN architectures~\citep{sagan}, as the techniques of self-attention introduced in SAGAN were used to obtain the state-of-art GAN results on ImageNet~\citep{brock2018large}. 
In summary, these architectures: 
\begin{enumerate*}[series = tobecont, itemjoin = \quad, label=(\roman*)]
\item allow for attention-driven, long-range dependency modeling, 
\item use spectral normalization~\citep{miyato2018spectral} on both $G$ and $D$ (efficiently computed with the \textit{power iteration} method); and 
\item use different learning rates for $G$ and $D$, as advocated in~\citep{heusel_gans_2017}.
\end{enumerate*}
The foremost is obtained by combining weights, or alternatively \textit{attention vectors}, with the convolutions across layers, so as to allow modeling textures that are consistent globally--for the generator, or enforcing geometric constraints on the global image structure--for the discriminator. 

We used the architectures listed in Table~\ref{tab:sagan_shallow_arch} for \textbf{CIFAR-10} and \textbf{SVHN} datasets, and the architectures described in Table~\ref{tab:sagan_shallow_arch_imgnet}  for the experiments on \textbf{ImageNet}. The models' parameters are initialized using the default initialization of PyTorch.

For experiments with SAGAN, we used the hinge version of the adversarial non-saturating loss ~\citep{lim2017geometricGan,sagan}:
\begin{align}
& \LL_D =   \E_{x\sim p_d}  \max(0, 1-D(x)) + \E_{z\sim p_z}   \max(0, 1+D(G(z)) \label{eq:hinge_loss_d}\\
& \LL_G =  - \E_{z\sim p_z}   D(G(z)). \label{eq:hinge_loss_g},
\end{align}
where consistent with the notation above, $p_d$ and $p_z$ denote the data and the latent distributions. 

\begin{table} \centering
\ras{1.3}
\begin{tabular}{@{}ccc@{}}\toprule
	\multicolumn{3}{c}{\textbf{Self--Attention Block ($d$ -- input depth) }}\\\toprule
	\multicolumn{3}{c}{\textit{Input:} $t \in \mathds{R}^{d \times H \times W} $} \\  \hdashline 
	\multicolumn{1}{c:}{\textit{i:} conv. (ker: $1{\times}1$, $d \rightarrow \lfloor d/8 \rfloor $)}&
	\multicolumn{1}{c:}{\textit{ii:} conv. (ker: $1{\times}1$, $d \rightarrow \lfloor d/8 \rfloor $)}&
	\textit{iii:} conv. (ker: $1{\times}1$, $d \rightarrow d $) \\
	\multicolumn{2}{c:}{\textit{iv:} softmax( \textit{(i)} $\otimes$  \textit{(ii)} )} & \\\hdashline
	\multicolumn{3}{c}{\textit{Output:} $\gamma \big( \textit{(iv)} \otimes \textit{(iii)} \big) + t$}\\
\bottomrule \\\end{tabular}
\caption{Layers of the self--attention block used in the SAGAN architectures (see Tables~\ref{tab:sagan_shallow_arch} and \ref{tab:sagan_shallow_arch_imgnet}), where $\otimes$ denotes matrix multiplication and $\gamma$ is a scale parameter initialized with $0$.
The columns emphasize that the execution is in parallel, more precisely, that the block input $t$ is input to the convolutional layers \textit{(i)}--\textit{(iii)}.
The shown row ordering corresponds to consecutive layers' order, \textit{e.g.} softmax is done on the product of the outputs of the \textit{(i)} and \textit{(ii)} convolutional layers.
The $1\times1$ convolutional layers have stride of $1$.
For complete details see~\cite{sagan}.}\label{tab:att_layer}
\end{table}

\begin{table}\centering
\begin{minipage}[b]{.449\hsize}\centering  \begin{tabular}{@{}c@{}}\toprule
\textbf{Generator}\\\toprule
\textit{Input:} $z \in \mathds{R}^{128} \sim \mathcal{N}(0, I) $ \\  \hline  
  transposed conv. (ker: $4{\times}4$, $128 \rightarrow 256$; stride: $1$) \\
 Spectral Normalization \\ 
 Batch Normalization \\ 
 ReLU \\ \hdashline 
  transposed conv. (ker: $4{\times}4$, $256 \rightarrow 128$, stride: $2$, pad: $1$)\\
 Spectral Normalization \\ 
 Batch Normalization \\ 
 ReLU  \\  \hdashline 
  Self--Attention Block ($128$)\\ \hdashline 
  transposed conv. (ker: $4{\times}4$, $128 \rightarrow 64$, stride: $2$, pad: $1$) \\
 Spectral Normalization \\ 
 Batch Normalization \\ 
 ReLU  \\ \hdashline 
   Self--Attention Block ($64$)\\ \hdashline 
  transposed conv. (ker: $4{\times}4$, $64 \rightarrow 3$, stride: $2$, pad: $1$) \\
 $Tanh(\cdot)$\\ 
\bottomrule \\\end{tabular}
\end{minipage}
\hfill \begin{minipage}[b]{.449\hsize}\centering  \begin{tabular}{@{}c@{}}\toprule
\textbf{Discriminator}\\\toprule
\textit{Input:} $x \in \mathds{R}^{3{\times}32{\times}32} $ \\  \hline  
  conv. (ker: $4{\times}4$, $3 \rightarrow 64$; stride: $2$; pad: $1$) \\
 Spectral Normalization \\
 LeakyReLU (negative slope: $0.1$) \\  \hdashline 
   conv. (ker: $4{\times}4$, $64 \rightarrow 128$; stride: $2$; pad: $1$) \\
 Spectral Normalization \\
 LeakyReLU (negative slope: $0.1$) \\  \hdashline 
    conv. (ker: $4{\times}4$, $128 \rightarrow 256$; stride: $2$; pad: $1$) \\
 Spectral Normalization \\
 LeakyReLU (negative slope: $0.1$) \\  \hdashline 
     Self--Attention Block ($256$)\\ \hdashline 
      conv. (ker: $4{\times}4$, $256 \rightarrow 1$; stride: $1$) \\
\bottomrule
\end{tabular}
\end{minipage}
\caption{\textit{Shallow} SAGAN architectures for experiments on \textbf{SVHN} and \textbf{CIFAR-10}, for the Generator (left) and the Discriminator (right).
The self-attention block is described in Table~\ref{tab:att_layer}.
We use the default PyTorch hyperparameters for the Batch Normalization layer.}\label{tab:sagan_shallow_arch}
\end{table}

\begin{table}\centering
\begin{minipage}[b]{.449\hsize}\centering  \begin{tabular}{@{}c@{}}\toprule
\textbf{Generator}\\\toprule
\textit{Input:} $z \in \mathds{R}^{128} \sim \mathcal{N}(0, I) $ \\  \hline 
  transposed conv. (ker: $4{\times}4$, $128 \rightarrow 512$; stride: $1$) \\
 Spectral Normalization \\ 
 Batch Normalization \\ 
 ReLU \\ \hdashline 
  transposed conv. (ker: $4{\times}4$, $512 \rightarrow 256$, stride: $2$, pad: $1$)\\
 Spectral Normalization \\ 
 Batch Normalization \\ 
 ReLU  \\  \hdashline 
  transposed conv. (ker: $4{\times}4$, $256 \rightarrow 128$, stride: $2$, pad: $1$) \\
 Spectral Normalization \\ 
 Batch Normalization \\ 
 ReLU  \\ \hdashline 
   Self--Attention Block ($128$)\\ \hdashline 
   transposed conv. (ker: $4{\times}4$, $128 \rightarrow 64$, stride: $2$, pad: $1$) \\
 Spectral Normalization \\ 
 Batch Normalization \\ 
 ReLU  \\ \hdashline 
   Self--Attention Block ($64$)\\ \hdashline  
  transposed conv. (ker: $4{\times}4$, $64 \rightarrow 3$, stride: $2$, pad: $1$) \\
 $Tanh(\cdot)$\\ 
\bottomrule \\\end{tabular}
\end{minipage}
\hfill
\begin{minipage}[b]{.449\hsize}\centering  \begin{tabular}{@{}c@{}}\toprule
\textbf{Discriminator}\\\toprule
\textit{Input:} $x \in \mathds{R}^{3{\times}64{\times}64} $ \\  \hline 
      conv. (ker: $4{\times}4$, $3 \rightarrow 64$; stride: $2$; pad: $1$) \\
   Spectral Normalization \\
   LeakyReLU (negative slope: $0.1$) \\  \hdashline 
      conv. (ker: $4{\times}4$, $64 \rightarrow 128$; stride: $2$; pad: $1$) \\
   Spectral Normalization \\
   LeakyReLU (negative slope: $0.1$) \\  \hdashline 
      conv. (ker: $4{\times}4$, $128 \rightarrow 256$; stride: $2$; pad: $1$) \\
   Spectral Normalization \\
   LeakyReLU (negative slope: $0.1$) \\  \hdashline 
      Self--Attention Block ($256$)\\ \hdashline 
      conv. (ker: $4{\times}4$, $256 \rightarrow 512$; stride: $2$; pad: $1$) \\
   Spectral Normalization \\
   LeakyReLU (negative slope: $0.1$) \\  \hdashline 
      Self--Attention Block ($512$)\\ \hdashline 
      conv. (ker: $4{\times}4$, $512 \rightarrow 1$; stride: $1$) \\
\bottomrule
\end{tabular}
\end{minipage}
\caption{\textit{Shallow} SAGAN architectures for experiments on \textbf{ImageNet}, for the Generator (left) and the Discriminator (right).
The self--attention block is described in Table~\ref{tab:att_layer}.
Relative to the architectures used for \textbf{SVHN} and \textbf{CIFAR-10} (see Table~\ref{tab:sagan_shallow_arch}), the generator has  one additional ``common'' block (conv.--norm.--ReLU), whereas the discriminator has additional ``common'' block as well as self--attention block (both of more parameters).}\label{tab:sagan_shallow_arch_imgnet}
\end{table}

For the SE--A baseline we obtained best performances when $\eta_G=1\times 10^{-4}$ and $\eta_D=4\times 10^{-4}$, for G and D, respectively. Similarly as noted for \textbf{MNIST}, using SVRE allows for using larger order of the step size on the rest of the datasets, whereas SE--A with increased step size ($\eta_G=1\times 10^{-3}$ and $\eta_D=4\times 10^{-3}$ failed to converge. In Table~\ref{tab:res_summary}, $\eta_G=1\times 10^{-3}$, $\eta_D=4\times 10^{-3}$, and $\eta_G=5\times 10^{-3}$, $\eta_D=8\times 10^{-3}, \beta_1=0.3$ for  SVRE and SVRE--VRAd, respectively. We did not use momentum for the vanilla SVRE experiments.

\subsubsection{Deeper ResNet architectures}\label{sec:deeper_resnet_arch}
We experimented with ResNet~\citep{resnet} architectures on \textbf{CIFAR-10} and \textbf{SVHN}, using the architectures listed in Table~\ref{tab:resnet_arch}, that replicate the setup described in \citep{miyato2018spectral} on \textbf{CIFAR-10}.
For experiments with ResNet, we used the hinge version of the adversarial non-saturating loss, Eq.~\ref{eq:hinge_loss_d} and~\ref{eq:hinge_loss_g}.
For this architectures, we refer the reader to \S~\ref{sec:results_deep_arch} for details on the hyperparameters, where we list the hyperparameters along with the obtained results.

\begin{table}\centering		 \begin{minipage}[b]{0.43\hsize}\centering  \begin{tabular}{@{}c@{}}\toprule
\textbf{G--ResBlock}\\\toprule
	\multicolumn{1}{l}{\textit{Bypass}:} \\
	Upsample($\times 2$) \\  \hdashline 
		\multicolumn{1}{l}{\textit{Feedforward}:} \\
	Batch Normalization \\ 
	ReLU \\
	Upsample($\times 2$) \\
	conv. (ker: $3{\times}3$, $256 \rightarrow 256 $; stride: $1$; pad: $1$) \\
    Batch Normalization \\ 
	ReLU \\
	conv. (ker: $3{\times}3$, $256 \rightarrow 256 $; stride: $1$; pad: $1$) \\
\bottomrule
\end{tabular}
\end{minipage} \hfill
\begin{minipage}[b]{0.559\hsize}\centering  \begin{tabular}{@{}c@{}}\toprule
\textbf{D--ResBlock ($\ell$--th block)}\\\toprule
	\multicolumn{1}{l}{\textit{Bypass}:} \\
    $[$AvgPool (ker:$2{\times}2$ )$]$, if $\ell = 1$ \\
	conv. (ker: $1{\times}1$, $3_{\ell=1} / 128_{\ell \neq 1} \rightarrow 128 $; stride: $1$) \\
	Spectral Normalization \\
	$[$AvgPool (ker:$2{\times}2$, stride:$2$)$]$, if $\ell \neq 1$ \\ \hdashline 
		\multicolumn{1}{l}{\textit{Feedforward}:} \\
	$[$ ReLU $]$, if $\ell \neq 1$ \\
	conv. (ker: $3{\times}3$, $3_{\ell=1} / 128_{\ell \neq 1}  \rightarrow 128 $; stride: $1$; pad: $1$) \\
	Spectral Normalization \\
	ReLU \\
	conv. (ker: $3{\times}3$, $128 \rightarrow 128 $; stride: $1$; pad: $1$) \\
	Spectral Normalization \\
	AvgPool (ker:$2{\times}2$ )\\
\bottomrule
\end{tabular}
\end{minipage}
\caption{ResNet blocks used for the ResNet architectures (see Table~\ref{tab:resnet_arch}), for the Generator (left) and the Discriminator (right). Each ResNet block contains skip connection (bypass), and a sequence of convolutional layers, normalization, and the ReLU non--linearity. 
The skip connection of the ResNet blocks for the Generator (left) upsamples the input using a factor of $2$ (we use the default PyTorch upsampling algorithm--nearest neighbor), whose output is then added to the one obtained from the ResNet block listed above.  For clarity we list the layers sequentially, however, note that the bypass layers operate in parallel with the layers denoted as ``feedforward''~\citep{resnet}. The ResNet block for the Discriminator (right) differs if it is the first block in the network (following the input to the Discriminator), $\ell = 1$, or a subsequent one, $\ell > 1$, so as to avoid performing the ReLU non--linearity immediate on the input.}
\label{tab:resblock}
\end{table}

\begin{table}	\centering
\ras{1.2}
\begin{tabular}{c@{\hskip 5em}c}\toprule
\textbf{Generator} & \textbf{Discriminator}\\\toprule
\textit{Input:} $z \in \mathds{R}^{128} \sim \mathcal{N}(0, I) $ &
\textit{Input:} $x \in \mathds{R}^{3{\times}32{\times}32} $  \\ \hdashline 
Linear($128 \rightarrow 4096$)	& D--ResBlock \\
G--ResBlock 	& D--ResBlock \\
G--ResBlock 	& D--ResBlock \\
G--ResBlock 	& D--ResBlock \\
Batch Normalization & ReLU \\
ReLU  				& AvgPool (ker:$8{\times}8$ ) \\
conv. (ker: $3{\times}3$, $256 \rightarrow 3$; stride: $1$; pad:1) & 
Linear($128 \rightarrow 1$) \\
 $Tanh(\cdot)$  & Spectral Normalization \\ 
\bottomrule \\\end{tabular}
\caption{ \textit{Deep} ResNet architectures used for experiments on \textbf{SVHN} and \textbf{CIFAR-10}, where G--ResBlock and D--ResBlock for the Generator (left) and the Discriminator (right), respectively, are described in Table~\ref{tab:resblock}. The models' parameters are initialized using the Xavier initialization~\citep{glorot2010}.}\label{tab:resnet_arch}
\end{table}

\clearpage
\section{Additional Experiments}\label{sec:additional-experiments}
\subsection{Results on MNIST}\label{sec:results_mnist}

\begin{table}\centering 
\begin{tabular}{r ccc c ccc}\toprule 
        & \multicolumn{3}{c}{ IS}   
        & \phantom{a}
        & \multicolumn{3}{c}{ FID}  \\
        \cmidrule{2-4} \cmidrule{6-8} 
        & SE--A & SVRE  & SVRE--VRAd &
        & SE--A & SVRE & SVRE--VRAd  \\ \midrule   
MNIST       &  $8.62 $ & $8.58$ &  $8.56$ &
           & $0.17$ & $0.15$ & $0.18$ \\
CIFAR-10      &  $6.61$  & $6.50$   & $\bm{6.67}$&
          & $37.20$& $39.20$ & $38.88$\\
SVHN      &  $2.83$  & $3.01$   & $\bm{3.04}$&
            & $39.95$ & $24.01$ & $\bm{19.40}$ \\
ImageNet     &  $7.22$  & $\bm{8.08}$  & $7.50$ &
          & $89.40$ & $\bm{75.60}$ & $81.24$ \\
\bottomrule
\end{tabular}
\caption{Best obtained IS and FID scores for the different optimization methods, using \textit{shallow} architectures,
for a fixed number of iterations (see \S~\ref{sec:impl-details}). The architectures for each dataset are described in: \textbf{MNIST}--Table~\ref{tab:mnist_arch}, \textbf{SVHN} and \textbf{CIFAR-10}--Table~\ref{tab:sagan_shallow_arch}, and \textbf{ImageNet}--Table~\ref{tab:sagan_shallow_arch_imgnet}.
The standard deviation of the Inception scores is around $0.1$ and is omitted.
Although the IS metric gives relatively close values on \textbf{SVHN} due to the dataset properties (see \S~\ref{sec:app-metrics}), we include it for completeness.
\label{tab:res_summary_shallow} }
\end{table}

The results in Table~\ref{tab:res_summary} on \textbf{MNIST} are obtained using $5$ runs with different seeds, and the shown performances are the averaged values. 
Each experiment was run for $100K$ iterations.
The corresponding scores with the standard deviations are as follows:
\begin{enumerate*}[series = tobecont, itemjoin = \quad, label=(\roman*)]
\item IS: $8.62{\pm}.02 $, $8.58{\pm}.08$, $8.56{\pm}.11$;
\item FID: $0.17{\pm}.03$, $0.15{\pm}.01$, $0.18{\pm}.02$; 
\end{enumerate*}
for SE--A, SVRE, and SVRE--VRAd, respectively.
On this dataset, we obtain similar final performances if run for many iterations, however SVRE converges faster (see Fig.~\ref{fig:extra_mnist}).
Fig.~\ref{fig:extra_mnist-second} illustrates additional metrics of the experiments shown in Fig.~\ref{fig:extra_mnist}.

\label{sub:mnist-experiments}
\begin{figure}[!htb]
    \centering
    \centering
     \begin{subfigure}[t]{0.48\linewidth}
        \centering
        \includegraphics[width=\linewidth]{./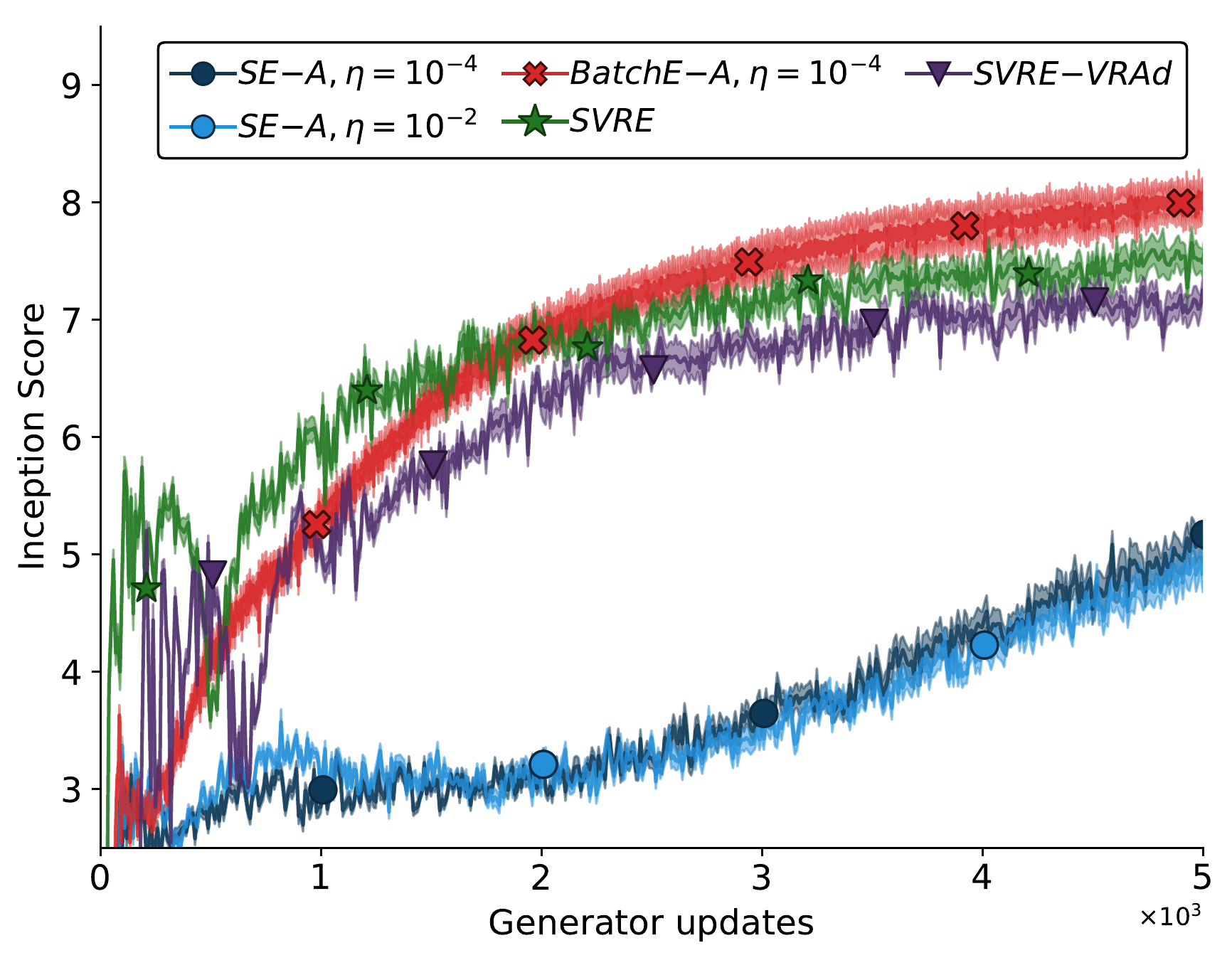}
        \caption{IS (higher is better)}\label{subfig-mnist_is_param_updates}
    \end{subfigure}
    \begin{subfigure}[t]{0.48\linewidth}
        \centering
        \includegraphics[width=\linewidth]{./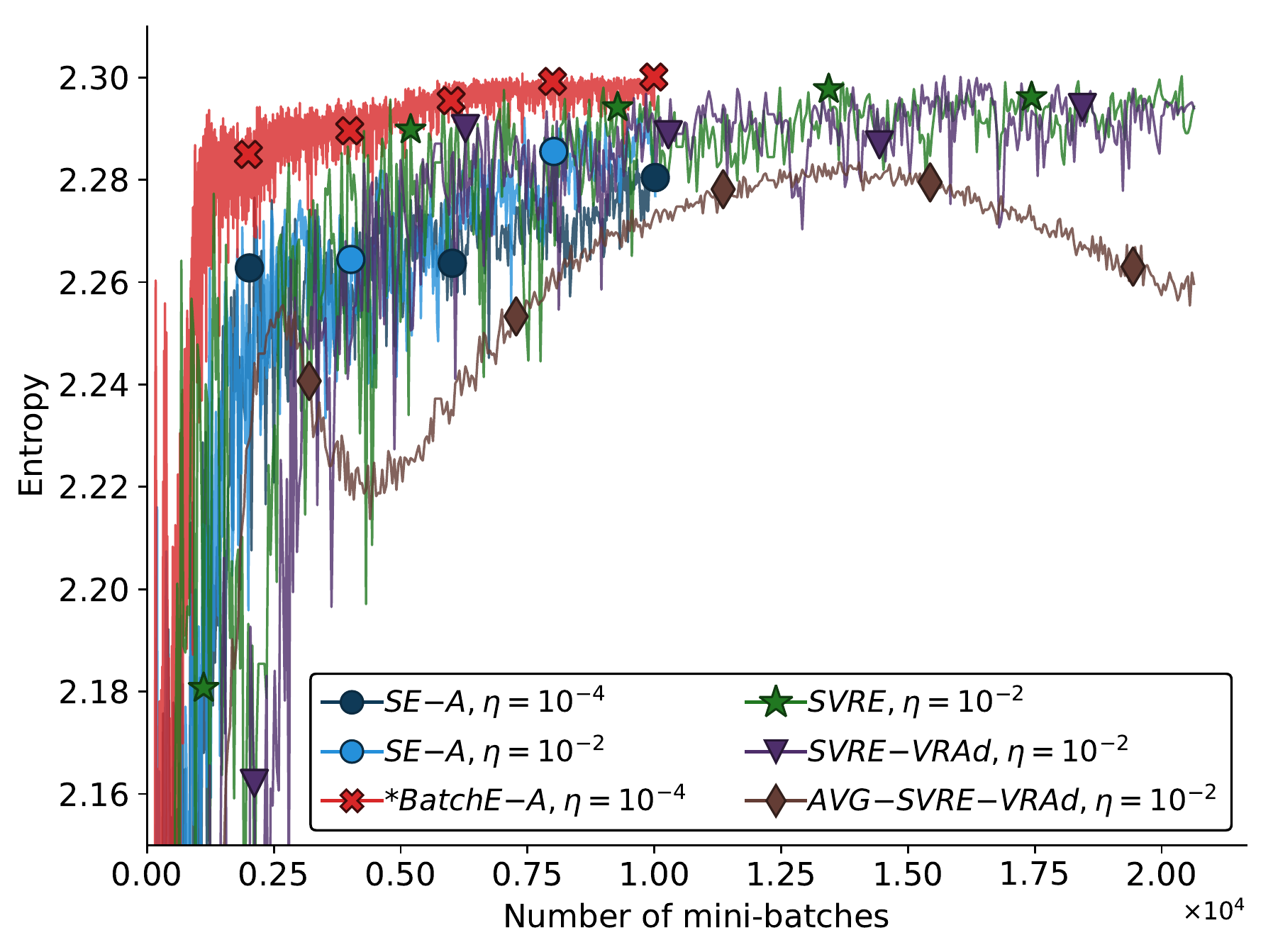}
        \caption{Entropy (higher is better) }\label{subfig-mnist_entropy}
    \end{subfigure}
    \begin{subfigure}[t]{0.48\linewidth}
        \centering
        \includegraphics[width=\linewidth]{./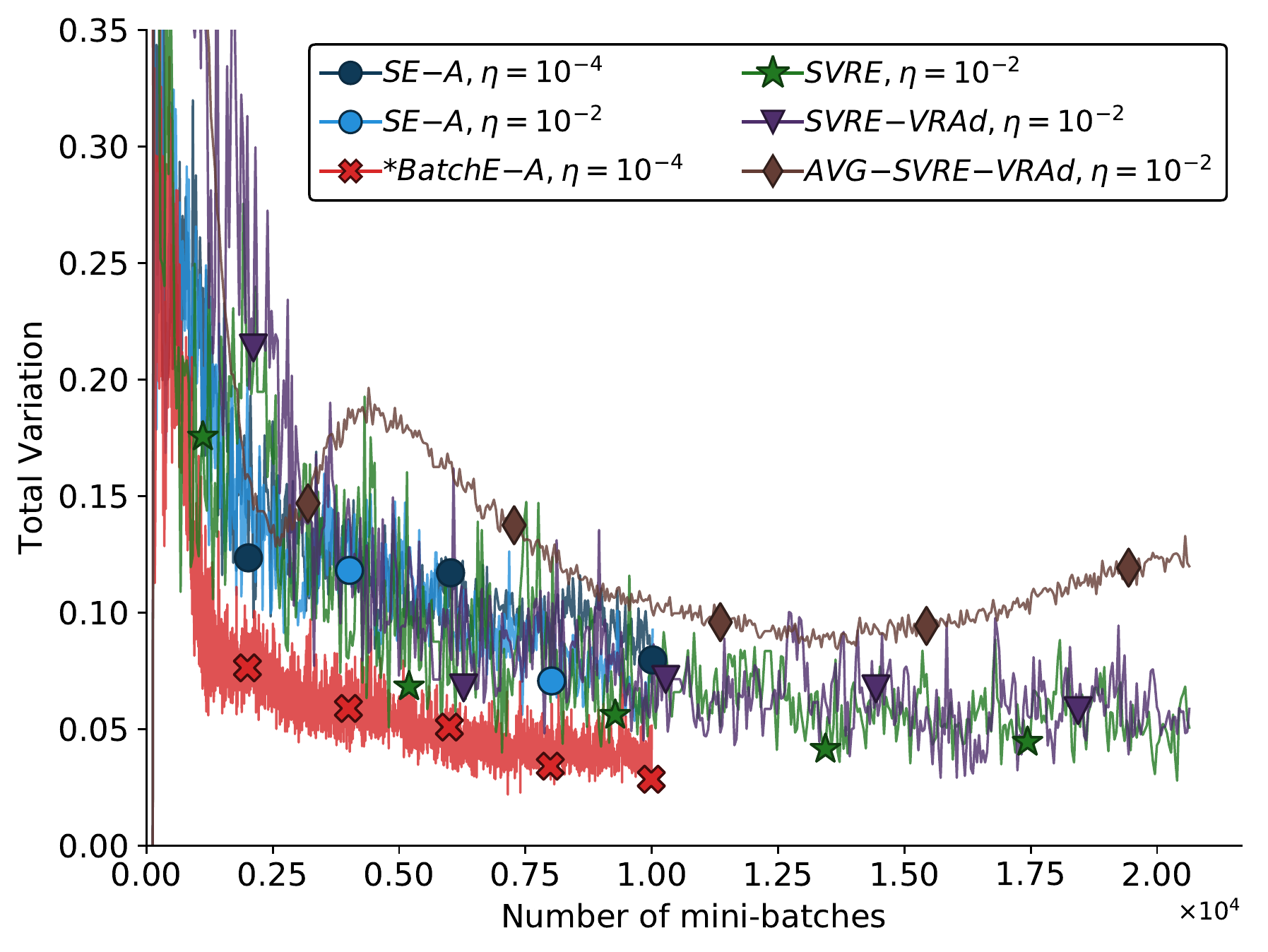}
        \caption{Total variation (lower is better)}\label{subfig-mnist_tv}
    \end{subfigure}
    \begin{subfigure}[t]{0.48\linewidth}
        \centering
        \includegraphics[width=\linewidth]{./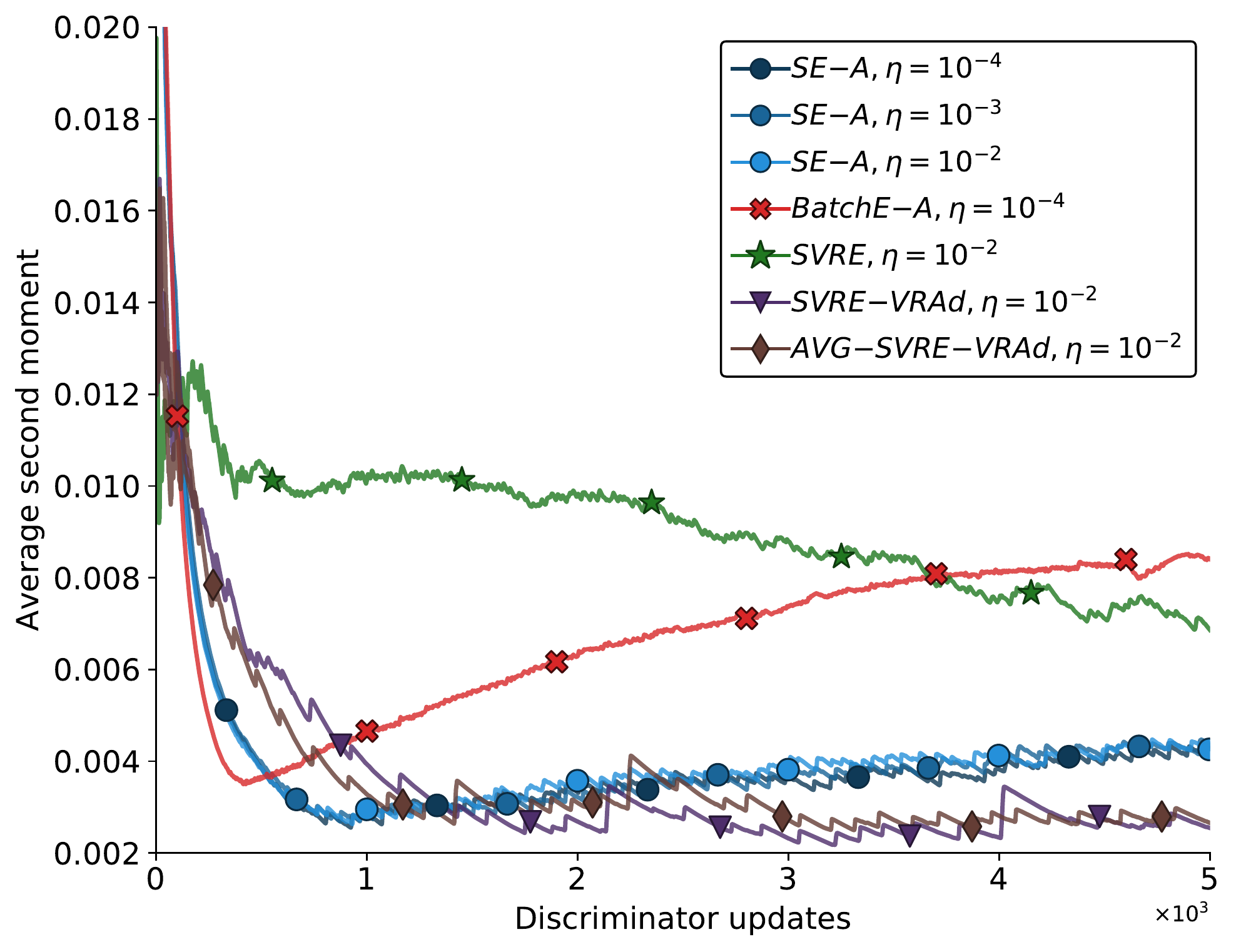}
        \caption{Discriminator}\label{subfig-var_d}
    \end{subfigure}
    \caption{ Stochastic, full-batch and variance reduced versions of the extragradient method ran on \textbf{MNIST}, see \S~\ref{sec:mnist}.
   \textit{*BatchE--A} emphasizes that this method is \textbf{not} scaled with the number of passes (x-axis). The input space is $1{\times}28{\times}28$, see \S~\ref{app:arch} for details on the implementation. }\label{fig:extra_mnist-second}
\end{figure}

\subsection{Results with shallow architectures}\label{sec:results_shallow_arch}
Fig.~\ref{fig:extra_imagenet} depicts the results on \textbf{ImageNet} using the \textit{shallow} architectures described in Table~\ref{tab:sagan_shallow_arch_imgnet}, \S~\ref{sec:shallow_sagan}.
Table~\ref{tab:res_summary_shallow} summarizes the results obtained on \textbf{SVHN}, \textbf{CIFAR-10} and \textbf{ImageNet} with these architectures.
Fig.~\ref{fig:extra_svhn_sme} depicts the SME metric (see \S~\ref{sec:sme}) for the the SE--A baseline and SVRE shown in Fig.~\ref{subfig-fid_svhn}, on \textbf{SVHN}.

\begin{figure}[!htb]
    \centering
    \begin{subfigure}[t]{0.495\linewidth}
        \centering
        \includegraphics[width=\linewidth]{./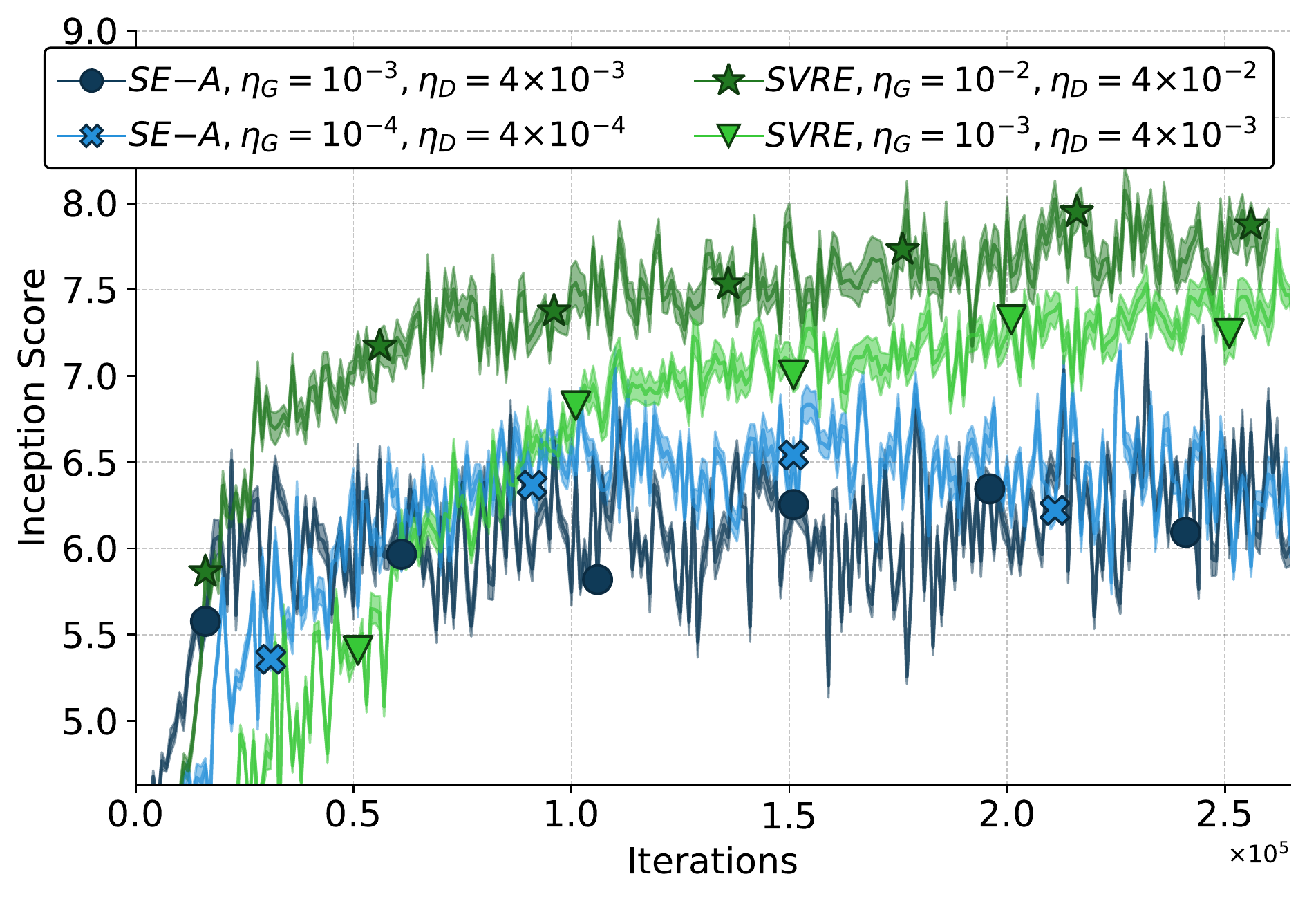}
        \caption{IS (higher is better)}
    \end{subfigure}
    \begin{subfigure}[t]{0.495\linewidth}
        \centering
        \includegraphics[width=\linewidth]{./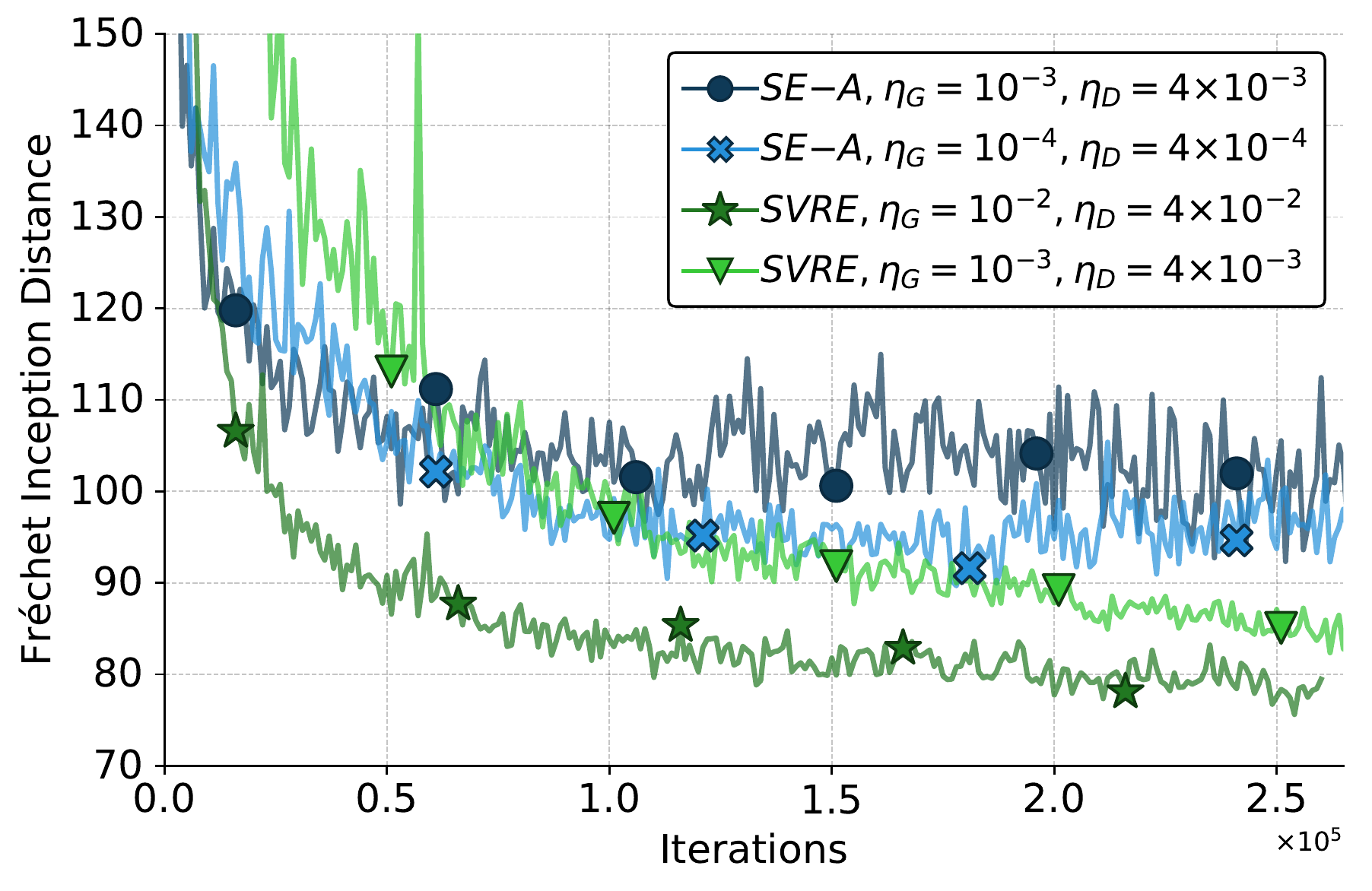}
        \caption{FID (lower is better)}
    \end{subfigure}
    \caption{Comparison between \textit{SVRE} and the \textit{SE--A} baseline on \textbf{Imagenet}, using the \textit{shallow} architectures described in Table~\ref{tab:sagan_shallow_arch_imgnet}.
    See \S~\ref{sec:app-metrics} for details on the used IS and FID metrics.}
    \label{fig:extra_imagenet}
\end{figure}

\begin{figure}[!htb]
    \centering
    \begin{subfigure}[t]{0.495\linewidth}
        \centering
        \includegraphics[width=\linewidth]{./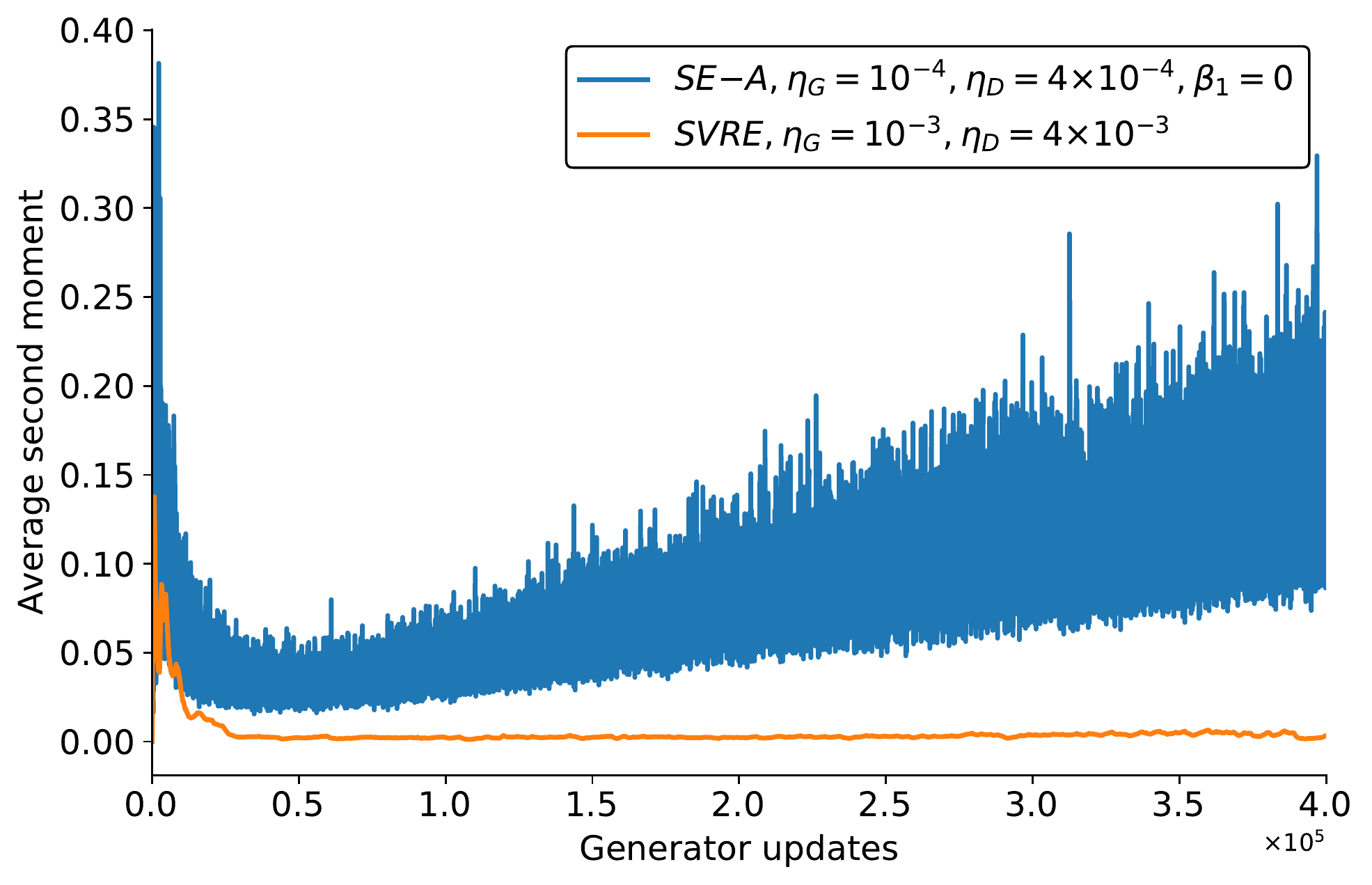}
        \caption{Generator}
    \end{subfigure}
    \begin{subfigure}[t]{0.495\linewidth}
        \centering
        \includegraphics[width=\linewidth]{./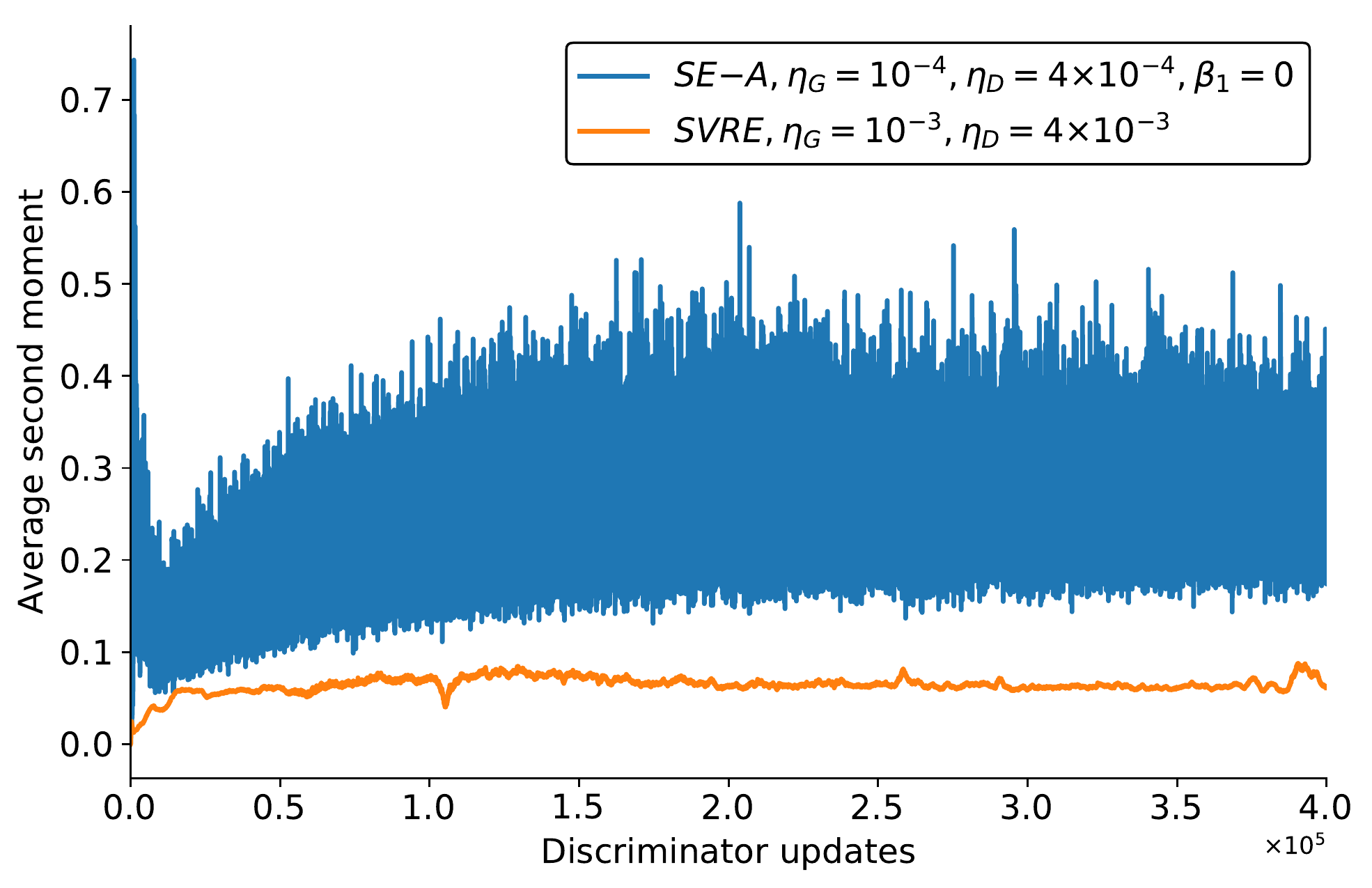}
        \caption{Discriminator}
    \end{subfigure}
    \caption{Average second moment estimate (see \S~\ref{sec:sme}) on \textbf{SVHN} for the Generator (left) and the Discriminator (right), using the \textit{shallow} architectures described in Table~\ref{tab:sagan_shallow_arch}.
    The corresponding FID scores for these experiments are shown in Fig.~\ref{subfig-fid_svhn}.}
    \label{fig:extra_svhn_sme}
\end{figure}

\clearpage
\subsection{Results with deeper architectures}\label{sec:results_deep_arch}
We observe that GAN training is more challenging when using \emph{deeper} architectures and some empirical observations differ in the two settings. 
For example, our stochastic baseline is drastically more unstable and often does not start to converge, whereas SVRE is notably \textit{stable}, but slower compared to when using shallower architectures.
In this section, all our discussions focus on \textit{deep} architectures (see \S~\ref{sec:deeper_resnet_arch}).

\paragraph{Stability: convergence of the GAN training.}
For our stochastic baselines, \emph{irrespective whether we use the extragradient or gradient method}, we observe that the convergence is notably more \textit{unstable} (see Fig.~\ref{fig:stochastic_hyperparams}) when using the \textit{deep} architectures described in \S~\ref{sec:deeper_resnet_arch}.
More precisely, either the training fails to converge or it diverges at later iterations.
When updating G and D equal number of times \textit{i.e.} using $1:1$ update ratio, using SE--A on \textbf{CIFAR-10} we obtained best FID score of $24.91$ using $\eta_G=2\times 10^{-4}$, $\eta_D=4\times 10^{-4}, \beta_1=0$, while experimenting with several combinations of $\eta_G, \eta_D, \beta_1$.
Using exponential learning rate decay with a multiplicative factor of $0.99$, improved the best FID score to $20.70$, obtained for the experiment with $\eta_G=2\times 10^{-4}$, $\eta_D=2\times 10^{-4}, \beta_1=0$.
Finally, using $1:5$ update ratio, with $\eta_G=2\times 10^{-4}$, $\eta_D=2\times 10^{-4}, \beta_1=0$ provided best FID of $18.65$ for the baseline.
Figures~\ref{fig:se-a_cif10} and~\ref{fig:sg-a_svhn} depict the hyper-parameter sensitivity of SE--A and SG--A, respectively.
The latter denotes the alternating GAN training with Adam, that is most commonly used for GAN training.

\begin{figure}[!htb]
    \centering
    \begin{subfigure}[t]{0.495\linewidth}
        \centering
        \includegraphics[width=\linewidth]{./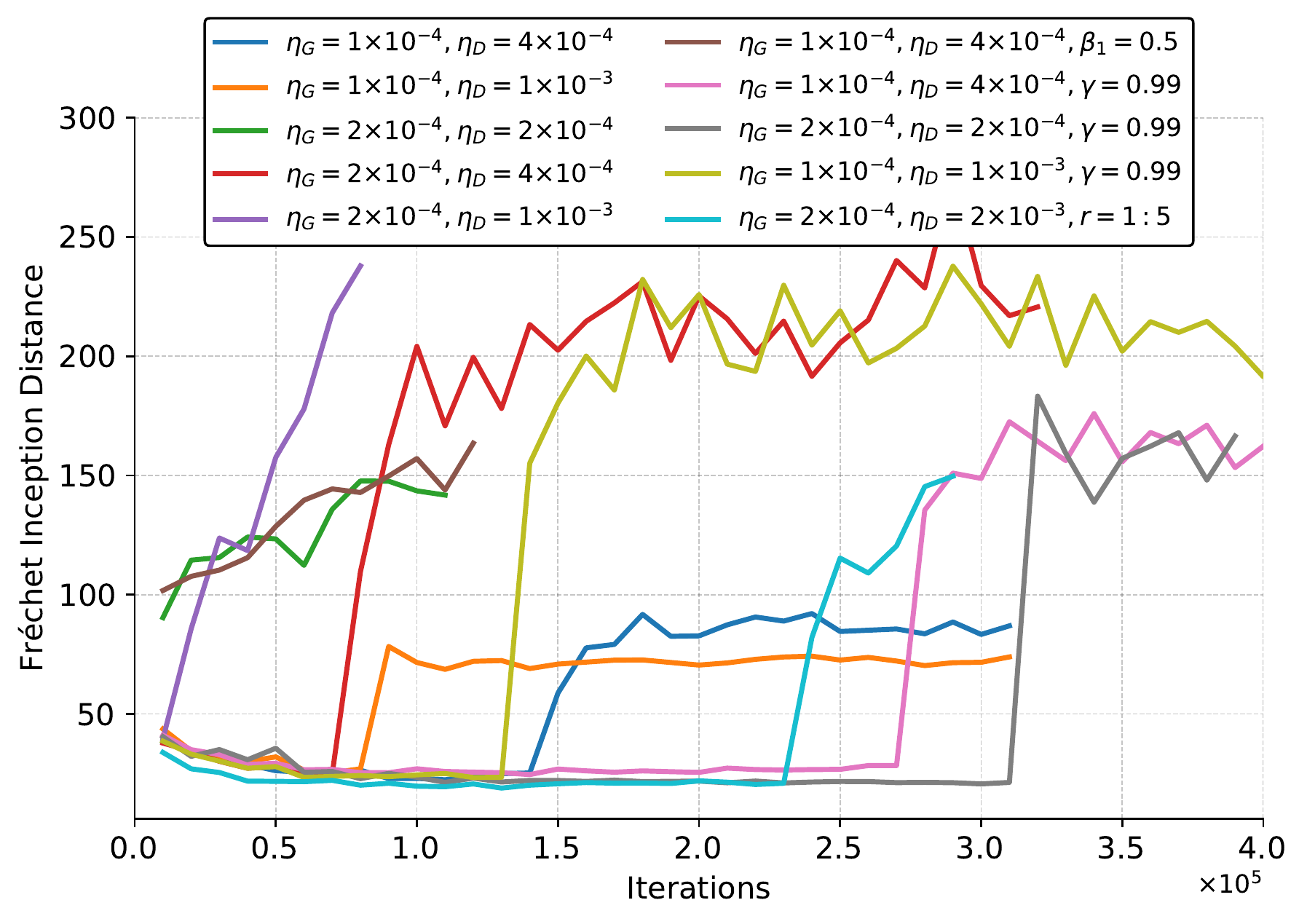}
        \caption{SE--A, \textbf{CIFAR-10}}\label{fig:se-a_cif10}
    \end{subfigure}
    \begin{subfigure}[t]{0.495\linewidth}
        \centering
        \includegraphics[width=\linewidth]{./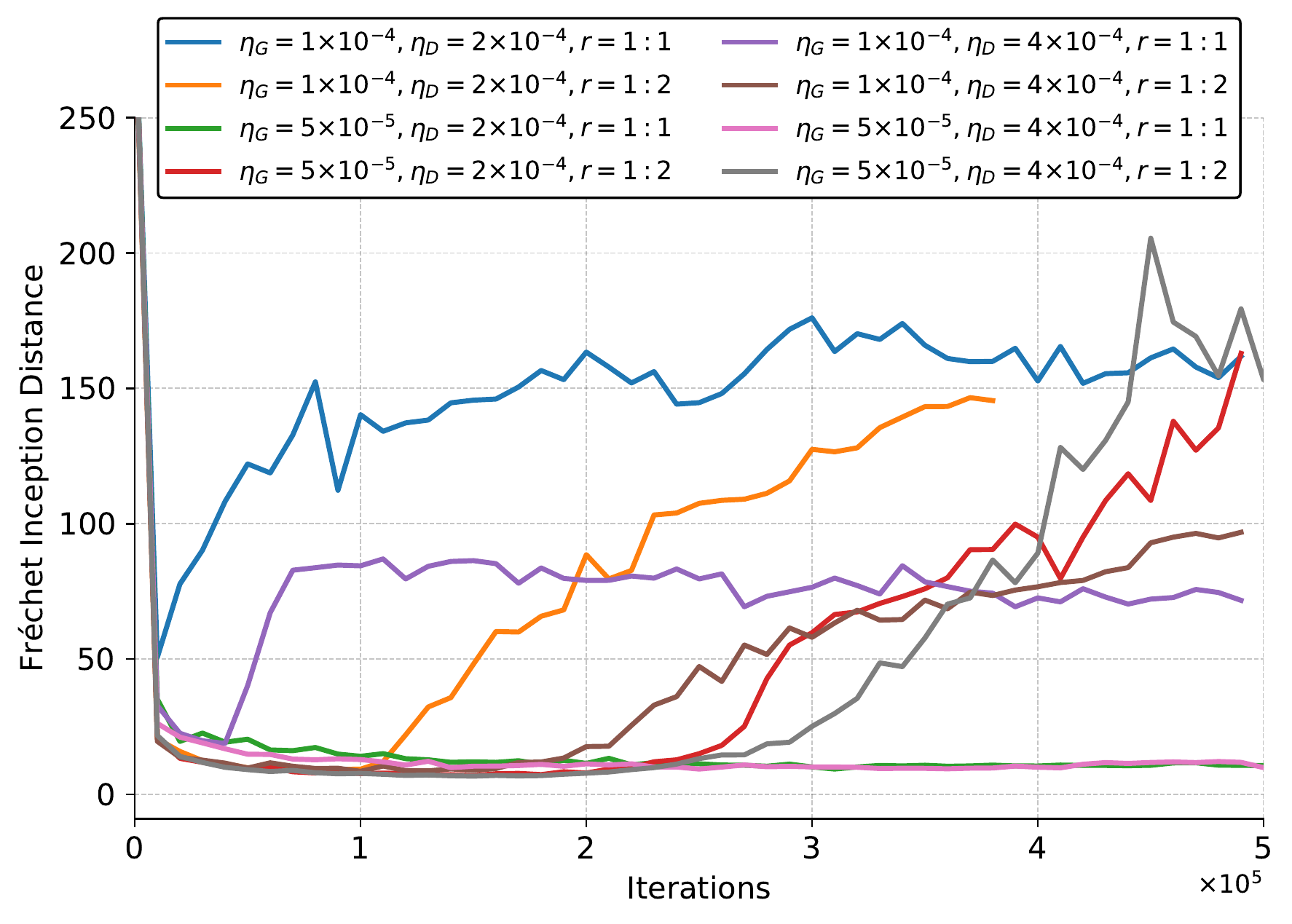}
        \caption{SG--A, \textbf{SVHN}}\label{fig:sg-a_svhn}
    \end{subfigure}
    \caption{FID scores (lower is better) with different hyperparameters for the SE--A baseline on \textbf{CIFAR10} (left) and the SG--A baseline on \textbf{SVHN} (right), using the \textit{deep} architectures described in Table~\ref{tab:resnet_arch}, \S~\ref{sec:deeper_resnet_arch}. 
    SG--A denotes the standard stochastic \textit{alternating} GAN training, with the Adam optimization method.
    Where omitted, $\beta_1=0$, see \eqref{eq:adam_beta1} where this hyperparameter is defined. 
    With $r$ we denote the update ratio of generator versus discriminator: in particular $1:5$ denotes that $D$ is updated $5$ times for each update of $G$.
    $\gamma$ denotes a multiplicative factor of exponential learning rate decay scheduling.
    In Fig.~\ref{fig:sg-a_svhn}, $\gamma=0.99$ for all the experiments.
    We observed in all our experiments that training diverged in later iterations for the stochastic baseline, when using \textit{deep} architectures.}
    \label{fig:stochastic_hyperparams}
\end{figure}

\begin{figure}[!htb]
    \centering
    \includegraphics[width=.9\linewidth]{./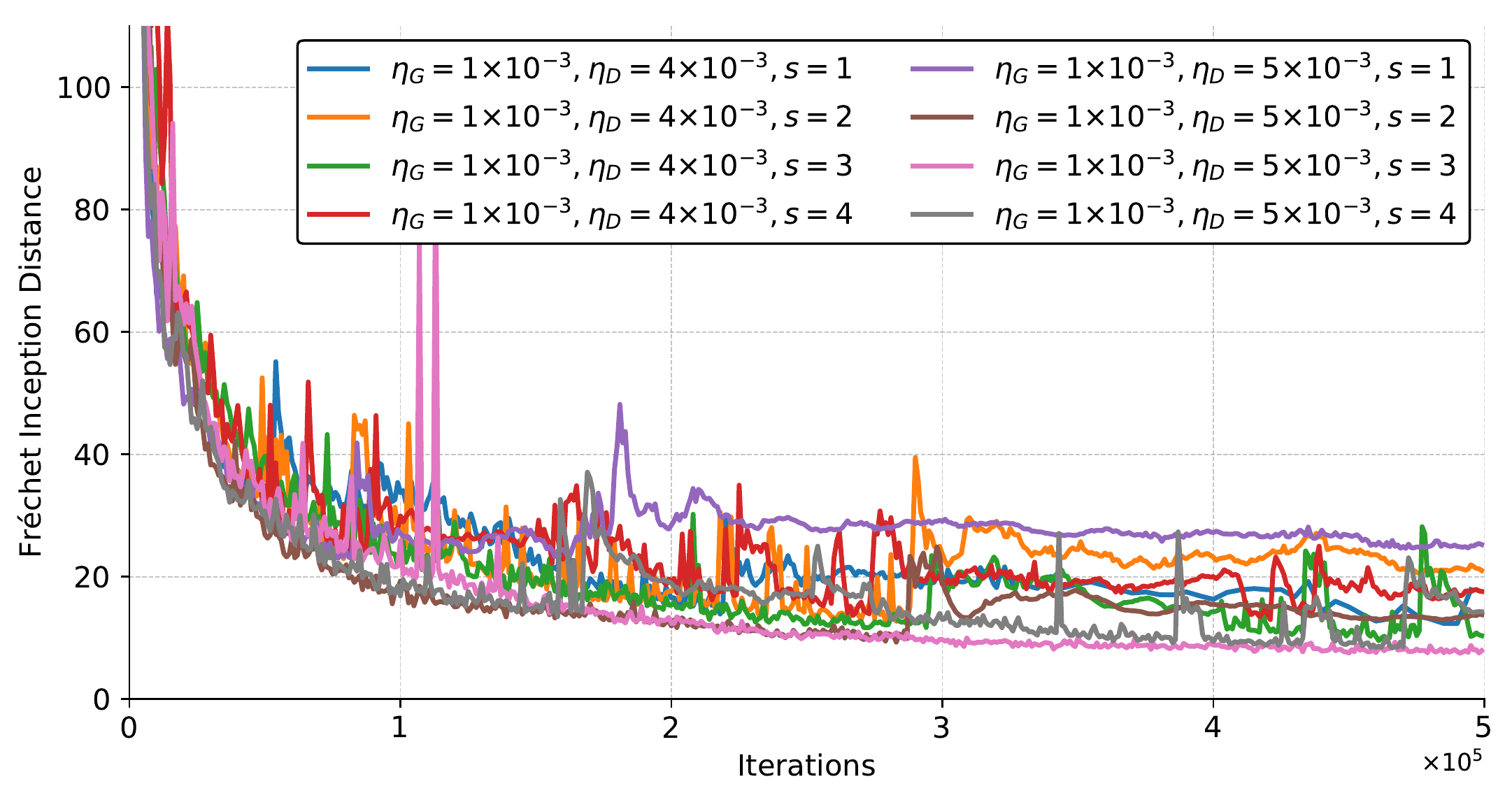}
    \caption{ Obtained FID (lower is better)  scores for SVRE, using the \textit{deep} architectures (see \S~\ref{sec:deeper_resnet_arch}) on \textbf{SVHN}. 
    With $s$ we denote the fixed random seed.
    The update ratio for all the experiments is $1:1$.
    We illustrate our results on the same plot (besides the reduced clarity) so as to summarize our observation that, contrary to the SE--A baseline for these architectures, SVRE \emph{always converges}, and does not diverge.
    }
    \label{fig:deep_svre_svhn_fid}
\end{figure}
\begin{figure}[!htb]
    \centering
    \includegraphics[width=.65\linewidth]{./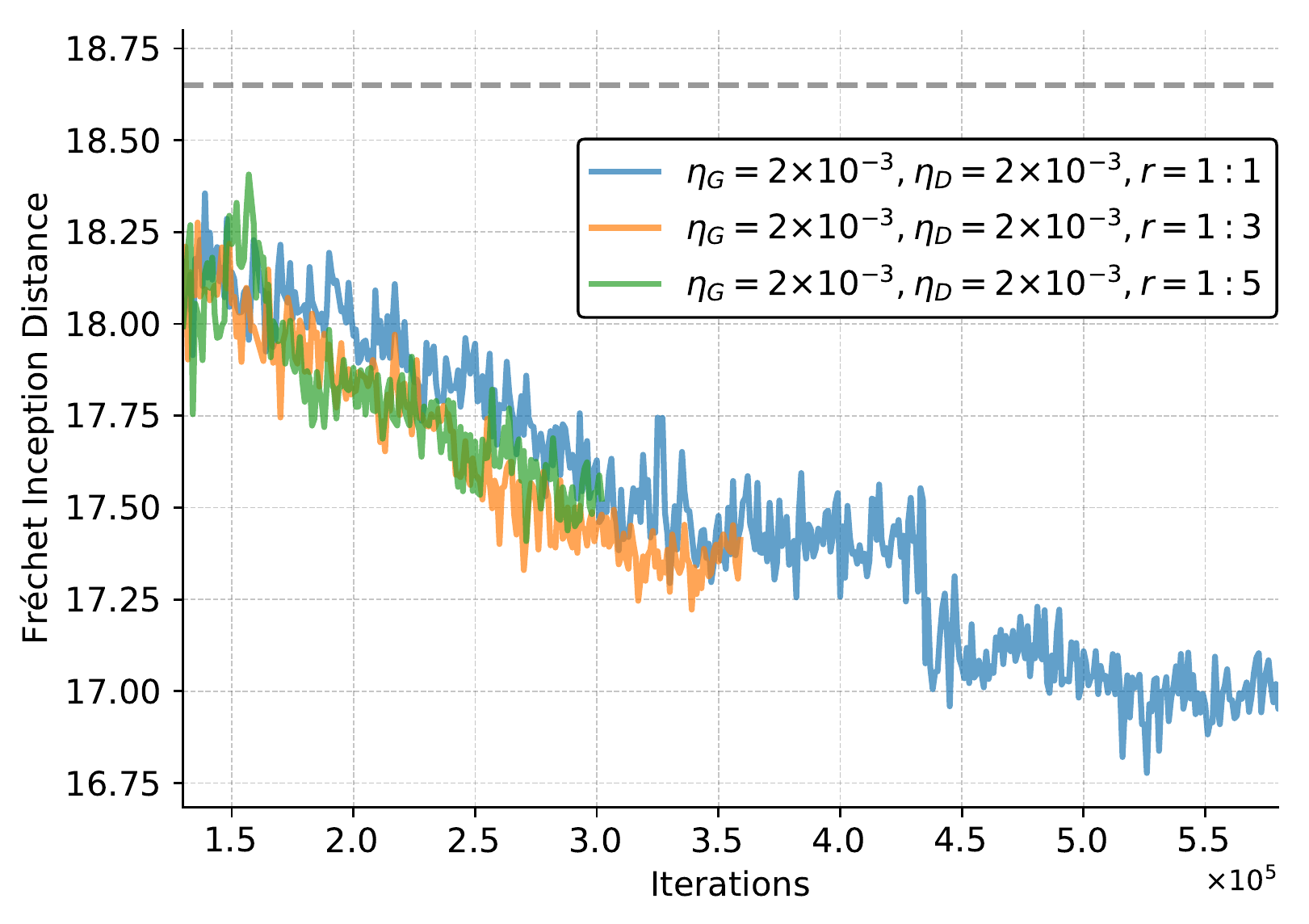}
    \caption{ Obtained FID (lower is better)  scores for WS--SVRE, using the \textit{deep} architectures (see \S~\ref{sec:deeper_resnet_arch}) on \textbf{CIFAR-10}, where the seed is fixed to $1$ for all the experiments.
    With $r$ we denote the update ratio of generator versus discriminator: in particular $1:5$ denotes that $D$ is updated $5$ times for each update of $G$.
    We start from the best obtained FID score for the stochastic baseline, i.e. FID of $18.65$ (see Table~\ref{tab:res_summary})--shown with dashed line, and we continue to train with SVRE.
    }
    \label{fig:deep_ws-svre_cif10_fid}
\end{figure}

We observe that SVRE is more stable in terms of hyperparameter selection, as it always starts to converge and \emph{does not diverge} at later iterations. Relative to experiments with shallower architectures, we observe that with deeper architectures SVRE takes longer to converge than its baseline for this architecture. With constant step size of $\eta_G=1\times 10^{-3}$, $\eta_D=4\times 10^{-3}$ we obtain FID score of $23.56$ on \textbf{CIFAR-10}. 
Note that this result outperforms the baseline when using no additional tricks (which themselves require additional hyperparameter tuning).
Fig.~\ref{fig:deep_svre_svhn_fid} depicts the FID scores obtained when training with SVRE on the \textbf{SVHN} dataset, for two different hyperparameter settings, using four different seeds for each.
From this set of experiments, we observe that contrary to the baseline that either did not converge or diverged in all our experiments, SVRE always converges.
However, we observe different performances for different seeds. 
This suggests that more exhaustive empirical hyperparameter search that aims to find an empirical setup that works \emph{best} for SVRE or further combining SVRE with adaptive step size techniques are both promising research directions (see our discussion below). 
Fig.~\ref{fig:deep_ws-svre_cif10_fid} depicts our WS--SVRE experiment, where we start from a stored checkpoint for which we obtained best FID score for the SE--A baseline, and we continue the training with SVRE.
It is interesting that besides that the baseline diverged after the stored checkpoint, SVRE further reduced the FID score.
Moreover, we observe that using different update ratios does not impact much the performance, what on the other hand was necessary to make the baseline algorithm converge.

\begin{figure}[!htb]
    \centering
    \begin{subfigure}[t]{0.495\linewidth}
        \centering
        \includegraphics[width=\linewidth]{./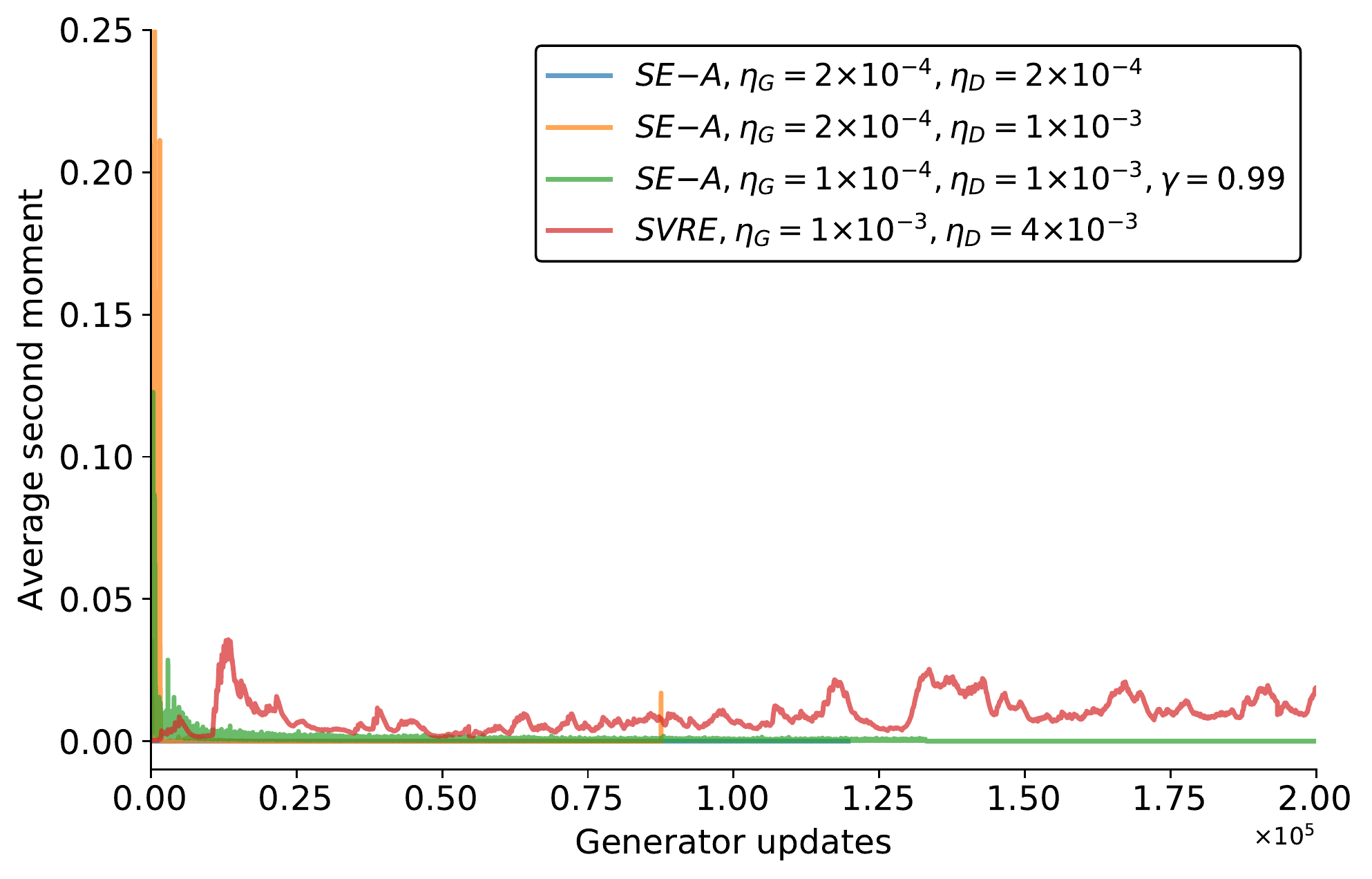}
        \caption{Generator}
    \end{subfigure}
    \begin{subfigure}[t]{0.495\linewidth}
        \centering
        \includegraphics[width=\linewidth]{./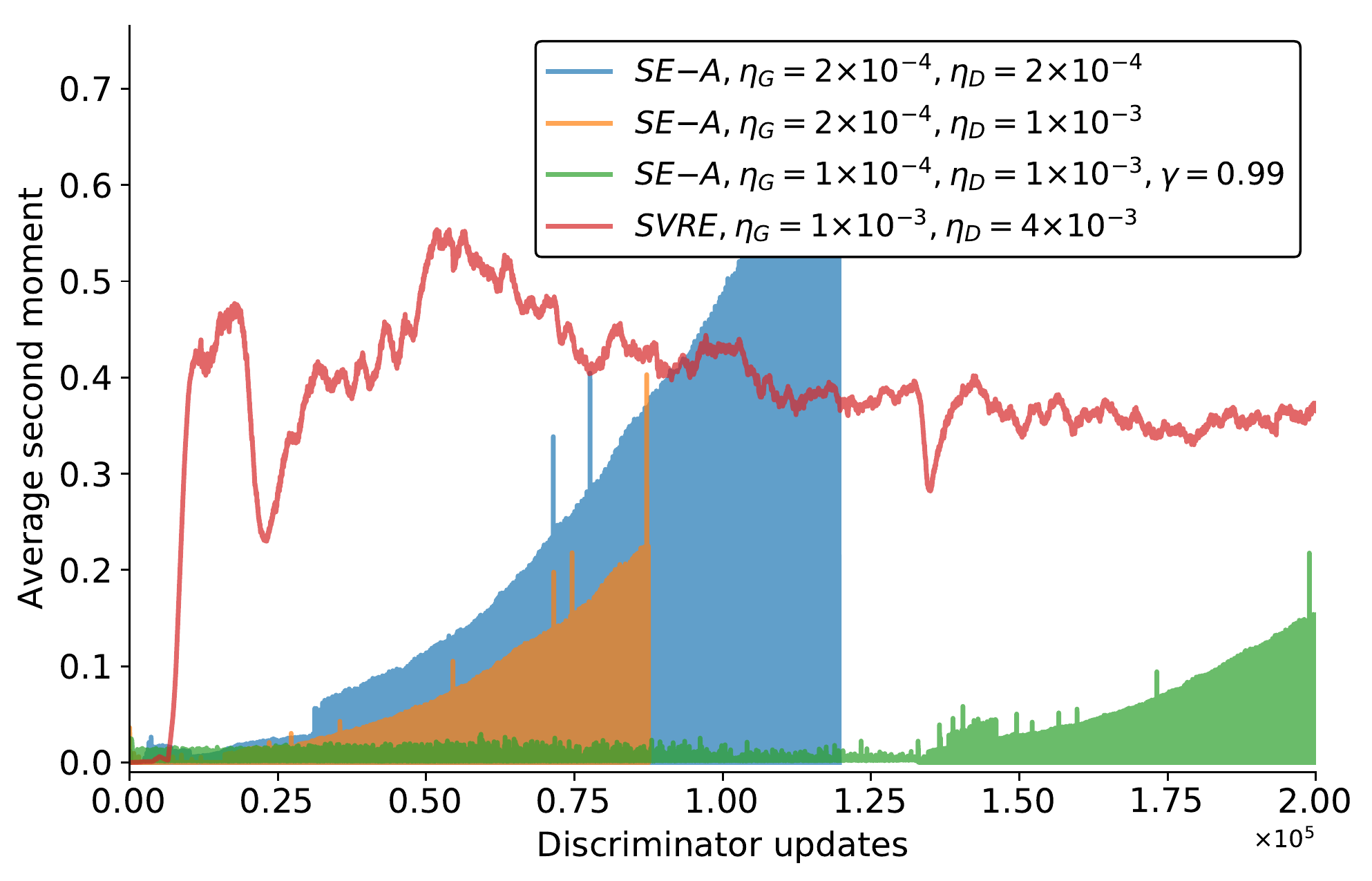}
        \caption{Discriminator}
    \end{subfigure}
    \caption{Average second moment estimate (SME, see \S~\ref{sec:sme}) on \textbf{CIFAR-10} for the Generator (left) and the Discriminator (right), using the \textit{deep} architectures described in Table~\ref{tab:resnet_arch}.
    The obtained FID scores for these experiments are shown in Fig.~\ref{fig:se-a_cif10}, where we omit some of the experiments for clarity.
    All of the baseline SE--A experiments diverge at some point, what correlates with the iterations at which large oscillations of SME appear for the Discriminator.
    Note that the SE--A experiments were stopped after the algorithm diverges, hence the plotted SME is up to a particular iteration for two of the experiments (shown in blue and orange). 
    The SE--A experiment with $\gamma=0.99$ diverged at later iteration relative to the experiments without learning rate decay, and has lower SME.}
    \label{fig:extra_cif10_deep_sme}
\end{figure}

\paragraph{Second moment estimate (SME).}
Fig.~\ref{fig:extra_cif10_deep_sme} depicts the second moment estimate (see \S~\ref{sec:sme}) for the experiments with \textit{deep} architectures. 
We observe that:
\begin{enumerate*}[series = tobecont, itemjoin = \quad, label=(\roman*)]
\item the estimated SME  quantity is more bounded and changes more smoothly for SVRE (as we do not observe large oscillations of it as it is the case for SE--A); as well as that
\item divergence of the SE--A baseline \emph{correlates} with large oscillations of SME, in this case, observed for the Discriminator.
\end{enumerate*}
Regarding the latter, there exist larger in magnitude oscillations of SME (note that the exponential moving average hyperparameter for computing SME is $\gamma=0.9$, see \S~\ref{sec:sme}).

\paragraph{Conclusion \& future directions.}
In summary, we observe the following most important advantages of SVRE when using \textit{deep} architectures:
\begin{enumerate*}[series = tobecont, itemjoin = \quad, label=(\roman*)]
\item consistency of convergence, and improved stability; as well as
\item reduced number of hyperparameters.
\end{enumerate*}
Apart from the practical benefit for applications, the former could allow for a more fair comparison of GAN variants.
The latter refers to the fact that SVRE omits the tuning of the sensitive (for the stochastic baseline) $\beta_1$ hyperparameter (see \eqref{eq:adam_beta1}), as well as $r$ and $\gamma$--as training converges for SVRE without using different update ratio and step size schedule, respectively.
It is important to note that the stochastic baseline does not converge when using constant step size (i.e. when \textit{SGD} is used instead of \textit{Adam}). 
In our experiments we compared SVRE that uses constant step size, with Adam, making the comparison unfair toward SVRE.
Hence, our results indicate that SVRE can be further combined with adaptive step size schemes, so as to obtain both stable GAN performances and fast convergence when using these architectures. 
Nonetheless, the fact that the baseline either does not start to converge or it diverges later makes SVRE and WS--SVRE a promising approach for practitioners using these \emph{deep} architectures, whereas, for \textit{shallower} ones, SVRE speeds up the convergence and often provides better \emph{final} performances.

\checknbdrafts

\end{document}